\def\eqref#1{equation~\ref{#1}}
\def\1{\bm{1}}
\DeclareMathAlphabet{\mathsfit}{\encodingdefault}{\sfdefault}{m}{sl}
\SetMathAlphabet{\mathsfit}{bold}{\encodingdefault}{\sfdefault}{bx}{n}
\newtheorem{theorem}{Theorem}
\newtheorem{lemma}{Lemma}
\newtheorem{definition}{Definition}
\newtheorem{remark}{Remark}[theorem]
\newtheorem*{proof}{Proof.}
\newtheorem{assumption}{Assumption}
\title{Convergent Differential Privacy Analysis for General
Federated Learning}
\author{Yan Sun \thanks{Equal contribution.}\\
School of Computer Science\\
Faculty of Engineering \\
the University of Sydney\\
\texttt{ysun9899@uni.sydney.edu.au} \\
\And
Qixin Zhang \footnotemark[1] \\
Generative AI Lab \\
College of Computing and Data Science \\
Nanyang Technological University \\
\texttt{qixin.zhang@ntu.edu.sg} \\
\And
Li Shen \\
School of Cyber Science and Technology \\
Shenzhen Campus of Sun Yat-sen University \\
\texttt{mathshenli@gmail.com} \\
\And
Dacheng Tao \thanks{Corresponding author. Dr. Tao’s research is partially supported by NTU RSR and Start Up Grants.}\\
Generative AI Lab \\
College of Computing and Data Science \\
Nanyang Technological University \\
\texttt{dacheng.tao@ntu.edu.sg} \\
}
\begin{document}

\maketitle
\begin{abstract}
 The powerful cooperation of federated learning~(FL) and differential privacy~(DP) provides a promising paradigm for the large-scale private clients. However, existing analyses in FL-DP mostly rely on the composition theorem and cannot tightly quantify the privacy leakage challenges, which is tight for a few communication rounds but yields an arbitrarily loose and divergent bound eventually. This also implies a counterintuitive judgment, suggesting that FL-DP may not provide adequate privacy support during long-term training under constant-level noisy perturbations, yielding discrepancy between the theoretical and experimental results. To further investigate the convergent privacy and reliability of the FL-DP framework, in this paper, we comprehensively evaluate the worst privacy of two classical methods under the non-convex and smooth objectives based on the $f$-DP analysis. With the aid of the shifted interpolation technique, we successfully prove that privacy in {\ttfamily Noisy-FedAvg} has a tight convergent bound. Moreover, with the regularization of the proxy term, privacy in {\ttfamily Noisy-FedProx} has a stable constant lower bound. Our analysis further demonstrates a solid theoretical foundation for the reliability of privacy in FL-DP. Meanwhile, our conclusions can also be losslessly converted to other classical DP analytical frameworks, e.g. $(\epsilon,\delta)$-DP and R$\acute{\text{e}}$nyi-DP~(RDP), to provide more fine-grained understandings for the FL-DP frameworks.
\end{abstract}
\section{Introduction}
Since \cite{mcmahan2017communication} proposes the {\ttfamily FedAvg} method as a general FL framework, it has been widely developed into a collaborative training standard with privacy protection attributes, which successfully avoids \textit{direct leakage} of sensitive data. As research on privacy progresses, researchers have found that standard FL frameworks still face a threat from \textit{indirect leakage}. Attackers can potentially recover local private data through reverse inference by persistently stealing model states via model~(gradient) inversion attacks~\citep{geiping2020inverting} or distinguish whether individuals are involved in the training via membership inference attacks~\citep{nasr2019comprehensive}. To further strengthen the reliability of FL, DP~\citep{dwork2006differential,dwork2014algorithmic,abadi2016deep} has naturally been incorporated into the FL framework, yielding FL-DP~\citep{wei2020federated}. As a primary technique, the noisy perturbation is widely applied in various advanced FL methods to further enhance its security.

However, the theoretical analysis of the FL-DP framework, especially in evaluating the privacy levels, is currently unable to provide a comprehensive understanding of its proper application. Most of the previous works are built upon the foundational lemma of privacy amplification by iteration, directly resulting in divergent privacy bound as the training communication round $T$ becomes large. This implies an inference that contradicts intuition and empirical studies, which is, that the FL-DP framework may completely lose its privacy protection attributes as $T\rightarrow\infty$. Such a conclusion is almost unacceptable for FL-DP. Therefore, establishing a precise and tight analysis is a crucial target.

Notably, significant progress has been made in characterizing convergent privacy in the noisy gradient descent method in RDP analysis~\citep{chourasia2021differential,ye2022differentially,altschuler2022privacy,altschuler2024privacy}. However, due to the challenges and intricacies of the analytical techniques adopted, similar results have not yet successfully been extended to the FL-DP. The multi-step local updates on heterogeneous datasets lead to biased local models, posing significant obstacles to the analysis. Recently, analyses based on $f$-DP~\citep{dong2022gaussian} have brought a promising resolution to this challenge. This information-theoretically lossless definition naturally evaluates privacy by the Type I / II error trade-off curve of the hypothesis testing problem about whether a given individual is in the training dataset. Combined with shifted interpolation techniques~\citep{bok2024shifted}, it successfully recovers tighter convergent privacy for strongly convex and convex objectives in noisy gradient descent methods. This may make it possible to quantify convergent privacy in FL-DP and may offer novel understandings about impacts of some key hyperparameters.
\begin{table*}[t]
  \caption{The worst privacy of the {\ttfamily Noisy-FedAvg} and {\ttfamily Noisy-FedProx} in our analysis. $V$ is the norm of clip gradient. $K,T$ are local training interval and communication round. $\sigma$ is the variance of the noise. The trade-off function $T_G(\cdot)^{\ [a]}$ is defined in Definition~\ref{GDP}. $\mu$, $c$ and $z$ are constants.}
  \vspace{0.1cm}
  \label{res_intro}
  \centering
  \begin{threeparttable}
  \setlength{\tabcolsep}{1.5mm}{
  \begin{tabular}{ccccc}
    \toprule
                                     & $\text{Lr}^{ \ [b]}$ & Worst Privacy & \makecell{Convergent?\\ on $T\rightarrow\infty$} & \makecell{Convergent?\\ on $K\rightarrow\infty$}\\
    \midrule
    \specialrule{0em}{2pt}{2pt}
    \multirow{8}{*}{\makecell{{\ttfamily Noisy}\\{\ttfamily FedAvg}}} & \makecell[c]{C}
    & $T_G\left(\frac{2\mu VK}{\sqrt{m}\sigma}\sqrt{\frac{(1+\mu L)^K+1}{(1+\mu L)^K-1}\frac{(1+\mu L)^{KT}-1}{(1+\mu L)^{KT}+1}}\right)$ & \multirow{6}{*}{$\usym{2714}$} & \multirow{6}{*}{$\usym{2718}$}\\
    \specialrule{0em}{2pt}{2pt}
    \cline{2-3}
    \specialrule{0em}{2pt}{2pt}
        &\makecell[c]{CD} & $T_G\left(\frac{2cV\ln(K+1)}{\sqrt{m}\sigma}\sqrt{\frac{(1+K)^{c\mu L}+1}{(1+K)^{c\mu L}-1}\frac{(1+K)^{c\mu LT}-1}{(1+K)^{c\mu LT}+1}}\right)$ &\\
    \specialrule{0em}{2pt}{2pt}
    \cline{2-3}
    \specialrule{0em}{2pt}{2pt}
     &\makecell[c]{SD} & $T_G\left(\frac{2\mu VK}{\sqrt{m}\sigma}\sqrt{2-\frac{1}{T}}\right)$ &\\
     \specialrule{0em}{2pt}{2pt}
     \cline{2-5}
     \specialrule{0em}{2pt}{2pt}
        &\makecell[c]{ID} & $ T_G\left(\frac{2zV}{\sqrt{m}\sigma}\sqrt{2-\frac{1}{T}}\right)$ & \multirow{4}{*}{$\usym{2714}$} & \multirow{4}{*}{$\usym{2714}$}\\
    \specialrule{0em}{2pt}{2pt}
    \cline{1-3}
    \specialrule{0em}{2pt}{2pt}
    \makecell{{\ttfamily Noisy} \\ {\ttfamily FedProx}} & $\alpha > L$ & $T_G\left(\frac{2V}{\sqrt{m}\alpha\sigma}\sqrt{\frac{2\alpha-L}{L}}\sqrt{\frac{\alpha^T - (\alpha - L)^T}{\alpha^T+(\alpha - L)^T}}\right)$ &  &  \\
    \specialrule{0em}{2pt}{2pt}
    \bottomrule
  \end{tabular}
  }
  \end{threeparttable}
  \vspace{0.05cm}
  \begin{tablenotes}
    \item[1] [a] For the trade-off function $T_G(s)$, smaller $s$ means stronger privacy.
    \item[2] [b] Learning rate decaying policy. C: constant learning rate; CD: cyclically decaying; SD: stage-wise decaying; ID: iteratively decaying. More details are stated in Theorem~\ref{privacy_avg} \ref{privacy_prox}.
  \end{tablenotes}
  \vspace{-0.5cm}
\end{table*}

In this paper, we investigate the privacy of two classic DP-FL methods, i.e. {\ttfamily Noisy-FedAvg} and {\ttfamily Noisy-FedProx} and successfully evaluate their \textit{worst privacy} in the $f$-DP analysis, as shown in Table~\ref{res_intro}. For the {\ttfamily Noisy-FedAvg} method, we investigate four typical learning rate decay strategies and provide the coefficients corresponding to each case to ensure a tighter privacy lower bound. We also prove that its iterative privacy on non-convex and smooth objectives could not diverge w.r.t. the number of communication rounds 
$T$, i.e., a convergent privacy. To the best of our knowledge, this contributes the first convergent privacy analysis in FL-DP methods for non-convex functions. Furthermore, by exploring the decay properties of the proximal term in {\ttfamily Noisy-FedProx}, we prove that its worst privacy can converge to a general constant lower bound. Our analysis successfully challenges the long-standing belief that privacy budgets of FL-DP have to increase as training processes and provides reliable guarantees for its privacy protection ability. At the same time, the exploration from the proximal term provides a promising solution, suggesting that a well-designed local regularization term can achieve a win-win solution for both optimization and privacy in FL-DP. Our work reveals that DP maintains strong privacy protection capabilities even under complex local-global optimization dynamics. This finding provides theoretical guarantees for the native design of federated learning, demonstrating that properly structured local and global interactions can preserve rigorous privacy assurances without fundamentally compromising the learning framework.
\section{Related Work}
\label{related work}

\textbf{Federated Learning.} FL is a classic learning paradigm that protects local privacy. Since \cite{mcmahan2017communication} proposes the basic framework, it has been widely studied in several communities. 
As its foundational study, the {\ttfamily local-SGD}~\citep{stich2019local,lin2019don,woodworth2020local,gorbunov2021local} method fully demonstrates the efficiency of local training. Based on this, FL further considers the impacts of heterogeneous private datasets and communication bottlenecks~\citep{wang2020federated,chen2021communication,kairouz2021advances}. To address these two basic issues, a series of studies have explored these processes from different perspectives. One approach involves proposing better optimization algorithms by defining concepts such as client drift~\citep{karimireddy2020scaffold} and heterogeneity similarity~\citep{mendieta2022local}, specifically targeting and resolving the additional error terms they cause. This mainly includes the natural application and expansion of variance-reduction optimizers~\citep{jhunjhunwala2022fedvarp,malinovsky2022variance,li2023effectiveness}, the flexible implementation of the advanced primal-dual methods~\citep{zhang2021fedpd,wang2022fedadmm,sun2023fedspeed,mishchenko2022proxskip,grudzien2023can,acarfederated,sun2023dynamic}, and the additional deployment of the momentum-based correction~\citep{liu2020accelerating,khanduri2021stem,das2022faster,sun2023efficient,sun2024understanding,sun2024fedpd}. Upgraded optimizers allow the aggregation frequency to largely decrease while maintaining convergence. Another approach primarily focuses on sparse training and quantization to reduce communication bits~\citep{reisizadeh2020fedpaq,shlezinger2020uveqfed,dai2022dispfl}. Additionally, research based on data and feature domain has also made significant contributions~\citep{yao2019towards,zhang2021federated,xupersonalized,li2023subspace}.

\textbf{FL-DP.} DP is a natural privacy-preserving framework with theoretical foundations~\citep{dwork2006calibrating,dwork2006our,dwork2006differential}. As one of the main algorithms for differential privacy, noise perturbation has achieved great success in deep learning~\citep{abadi2016deep,zhao2019differential,arachchige2019local,wu2020noisy}. Combining this, FL-DP adds noise before transmitting their variables, i.e. client-level noises~\citep{geyer2017differentially} and server-level noises~\citep{wei2020federated}. Since there is no fundamental difference between the analysis of them, in this paper, we mainly consider client-level noises. One major research direction involves conducting noise testing on widely developed federated optimization algorithms~\citep{zhu2021fine,noble2022differentially,lowy2023private,zhang2022differential,yang2023federated}, and evaluating the performance of different methods under DP noises through convergence analysis and privacy analysis. Another research direction involves injecting noise into real-world systems to address practical challenges, which primarily focuses on personalized scenarios~\citep{hu2020personalized,yang2021federated,yang2023dynamic,wei2023personalized}, decentralized scenarios~\citep{wittkopp2020decentralized,chen2022decentralized,gao2023privacy,shi2023make}, and adaptive or asymmetric update scenarios~\citep{girgis2021shuffled,wu2022adaptive,he2023clustered}. FL-DP has been extensively tested across various scales of tasks and has successfully validated its robust local privacy protection capabilities. At the same time, the theoretical analysis of FL-DP has been progressing systematically and in tandem. Based on various DP relaxations, they provide a comparison of privacy performance by analyzing concepts such as privacy budgets, and further understand the specific attributes of privacy algorithms~\citep{rodriguez2020federated,wei2021user,kim2021federated,zheng2021federated,ling2024efficient,jiao2024differential}. Theoretical advancements in DP have revolutionized how we could quantify and safeguard privacy, offering unprecedented precision and robustness.

\section{Preliminaries}
\label{preliminaries}

\subsection{General FL-DP framework}
We consider the general finite-sum minimization problem in the classical federated learning:
\begin{equation}
    w^\star \in \arg\min_{w} f(w) \triangleq \frac{1}{m}\sum_{i\in\mathcal{I}}f_i(w),
\end{equation}
where $f_i(w)=\mathbb{E}_{\varepsilon\sim\mathcal{D}_i}\left[f_i(w,\varepsilon)\right]$ denotes the local population risk. $w\in\mathbb{R}^d$ denotes $d$-dim learnable parameters. $\varepsilon\sim\mathcal{D}_i$ denotes that the private dataset on client $i$ is sampled from distribution $\mathcal{D}_i$. We consider the general heterogeneity, i.e. $\mathcal{D}_i$ can differ from $\mathcal{D}_j$ if $i\neq j$, leading to $f_i(w)\neq f_j(w)$. More detailed notations are introduced in Appendix~\ref{ap:notation}.

In our analysis, we consider the FL-DP framework with the classical client-level Gaussian noises. The FL training process remains consistent with standard training procedures. The local clients enhance local privacy by adding isotropic Gaussian noises to the uploaded model parameters, i.e. $n_i\sim\mathcal{N}(0,\sigma^2I_d)$. Then the global server aggregates the noisy parameters as the global model $w_{t+1}$. Due to the page limitation, details of the algorithmic implementation are deferred to the Appendix~\ref{ap:fl_dp}.

\textbf{Noisy-FedAvg:} we consider that each local client performs a fundamental gradient descent as follows:
\begin{equation}
\label{avg-local-iter}
    w_{i,k+1,t} = w_{i,k,t} - \eta_{k,t}g_{i,k,t},
\end{equation}
where $g_{i,k,t}=\nabla f_i(w_{i,k,t},\varepsilon)/\max\{1, \frac{\Vert\nabla f_i(w_{i,k,t},\varepsilon)\Vert}{V}\}$, and $V$ is a constant coefficient.

\textbf{Noisy-FedProx:} The vanilla local training in {\ttfamily FedProx} is based on solving the following surrogate:
\begin{equation}
\label{prox-local-obj}
    \min_w f_i(w) + \frac{\alpha}{2}\Vert w - w_t \Vert^2.
\end{equation}
To generally compare with {\ttfamily Noisy-FedAvg}, we consider an iterative form of gradient descent as:
\begin{equation}
\label{prox-local-iter}
    w_{i,k+1,t} = w_{i,k,t} - \eta_{k,t}\left[g_{i,k,t} + \alpha(w_{i,k,t} -  w_t)\right].
\end{equation}

\subsection{DP and $f$-DP}
\begin{definition}
\label{adjacent_dataset}
    We denote heterogeneous datasets on the client $i$ by $\mathcal{S}_i=\left\{\varepsilon_{ij}\right\}$ and let the union of all local datasets be $\mathcal{C}=\left\{\mathcal{S}_i\right\}$. We say two unions are adjacent datasets if they only differ by one data sample. For instance, there exists the union $\mathcal{C}'=\left\{\mathcal{S}_i'\right\}$. $(\mathcal{C}, \mathcal{C}')$ are adjacent datasets if there exists the index pair $(i^\star, j^\star)$ such that all other data samples are the same except for $\varepsilon_{i^\star j^\star}\neq\varepsilon_{i^\star j^\star}'$.
\end{definition}

\begin{definition}
\label{d2}
    A randomized mechanism $\mathcal{M}$ is $(\epsilon,\delta)$-DP if for any event $E$ the following satisfies:
    \begin{equation}
        P(\mathcal{M}(\mathcal{C})\in E) \leq e^{\epsilon}P(\mathcal{M}(\mathcal{C}')\in E) + \delta.
    \end{equation}
\end{definition}
Definition~\ref{d2} is the widely used $(\epsilon,\delta)$-DP, which is a lossy relaxation in the DP analysis since its probabilistic gaps. To bridge the discrepancy of precise DP definitions, statistic analysis demonstrates that DP could be naturally deduced by hypothesis-testing problems~\citep{wasserman2010statistical,kairouz2015composition}. From the perspective of attackers, DP means the difficulty in distinguishing $\mathcal{C}$ and $\mathcal{C}'$ under the mechanism $\mathcal{M}$. They can generally consider the following problem:
\begin{center}
    \textit{Given $\mathcal{M}$, is the underlying union $\mathcal{C}$~($H_0$) or $\mathcal{C}'$~($H_1$)?}
\end{center}
To exactly quantify the difficulty of its answer, \citet{dong2022gaussian}
propose that distinguishing these two hypotheses could be best delineated by the optimal trade-off between the possible type I and type II errors. Specifically, by considering rejection rules $0\leq\chi\leq 1$, type I and type II errors can be:
\begin{equation}
\label{type I II}
    E_I=\mathbb{E}_{\mathcal{M}(\mathcal{C})}\left[\chi\right], \quad \quad E_{II}=1 - \mathbb{E}_{\mathcal{M}(\mathcal{C}')}\left[\chi\right],
\end{equation}
Here, we abuse $\mathcal{M}(\mathcal{C})$ to represent its probability distribution. To measure the fine-grained relationships between these two testing errors, $f$-DP is introduced.

\begin{definition}[Trade-off function]
    For any two probability distributions $P$ and $Q$, the trade-off function is defined as: $T(P;Q)(\gamma) = \inf\left\{ 1 - \mathbb{E}_{Q}\left[\chi\right] \vert \ \mathbb{E}_{P}\left[\chi\right] \leq \gamma \right\}$, where the infimum is taken over all measurable rejection rules.
\end{definition}

$T(P;Q)(\gamma)$ is convex, continuous, and non-increasing. For any possible rejection rules, it satisfies $T(P;Q)(\gamma)\leq 1 - \gamma$. It functions as the clear boundary between the achievable and unachievable selections of type I and type II errors, essentially distinguishing the difficulties between these two hypotheses. This relevant statistical property provides a stricter definition of privacy, which mitigates the excessive relaxation of privacy based on composition analysis in existing approaches.

\begin{definition}[$f$-DP and GDP]
\label{GDP}
    A mechanism $\mathcal{M}$ is $f$-DP if $T(\mathcal{M}(\mathcal{C}), \mathcal{M}(\mathcal{C}'))(\gamma) \geq f(\gamma)$ for all possible adjacent datasets $\mathcal{C}$ and $\mathcal{C}'$. When $f$ measures two Gaussian distributions, namely Gaussian-DP~(GDP), denoted as $T_G(\mu)(\gamma)\triangleq T\left(\mathcal{N}(0,1), \mathcal{N}(\mu, 1)\right)(\gamma)$ for $\mu\geq 0$.
\end{definition}

According to the definition, the explicit representation of GDP is $T_G(\mu)(\gamma)=\Phi(\Phi^{-1}(1-\gamma)-\mu)$ where $\Phi$ denotes the standard Gaussian CDF. Any single sampling mechanism that introduces Gaussian noises can be considered as an exact GDP, which monotonically decreases when $\mu$ increases.

\textbf{Threat model and Privacy.} In the FL-DP framework, we primarily consider two types of privacy: (1) client-side privacy when uploading local parameters to the server, which is protected by injecting DP noise; and (2) global-model privacy when sending aggregated parameters back to the clients, ensuring that no information can be extracted or inferred from the global model. In the standard FL-DP framework, the protection of local privacy is straightforward. Our analysis therefore focuses on the second aspect: the privacy behavior associated with the global model.
\section{Convergent Privacy}
\label{convergent privacy}

\begin{figure}[t]
\centering
\includegraphics[width=0.48\textwidth]  {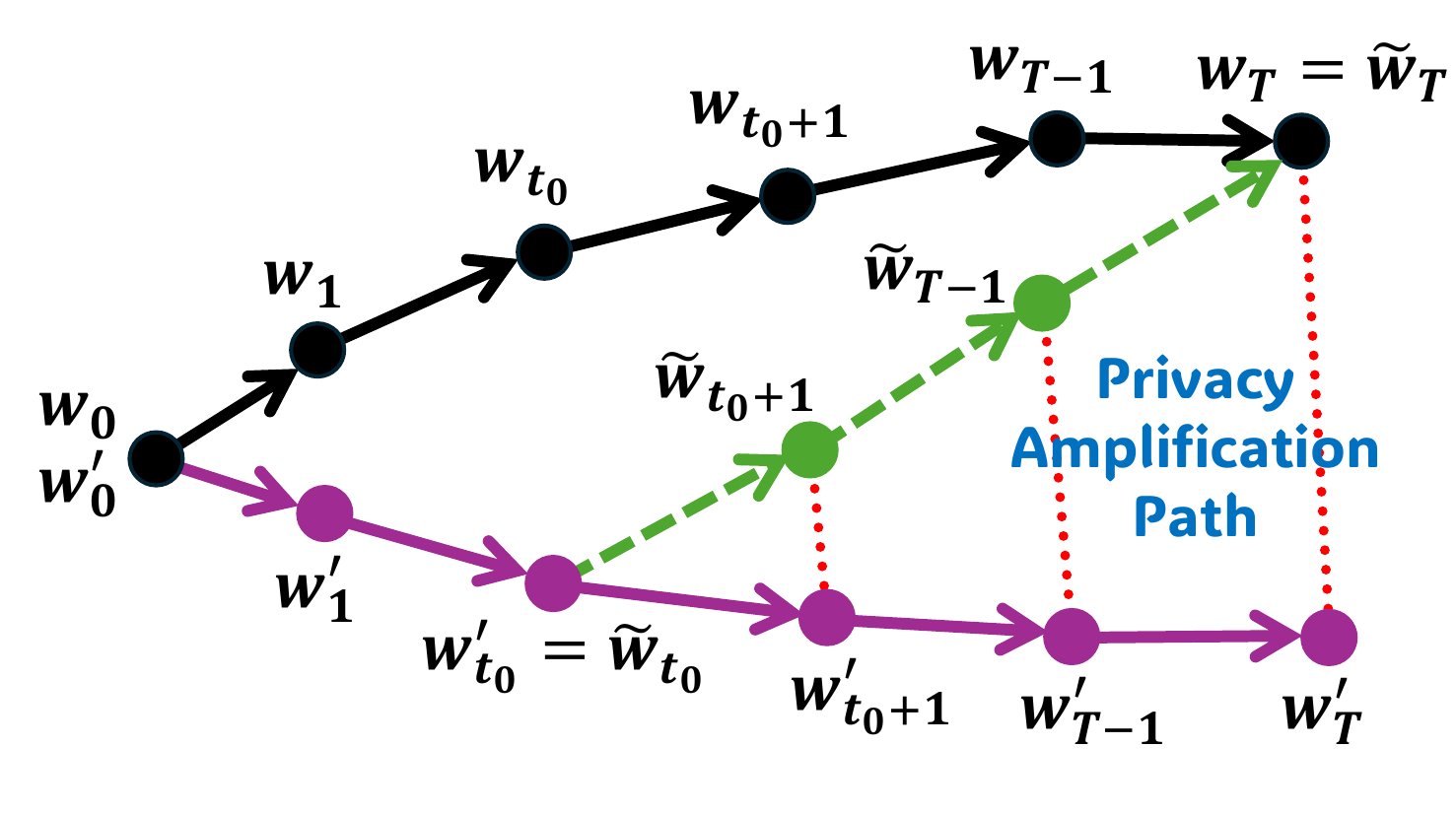}
\includegraphics[width=0.48\textwidth]  {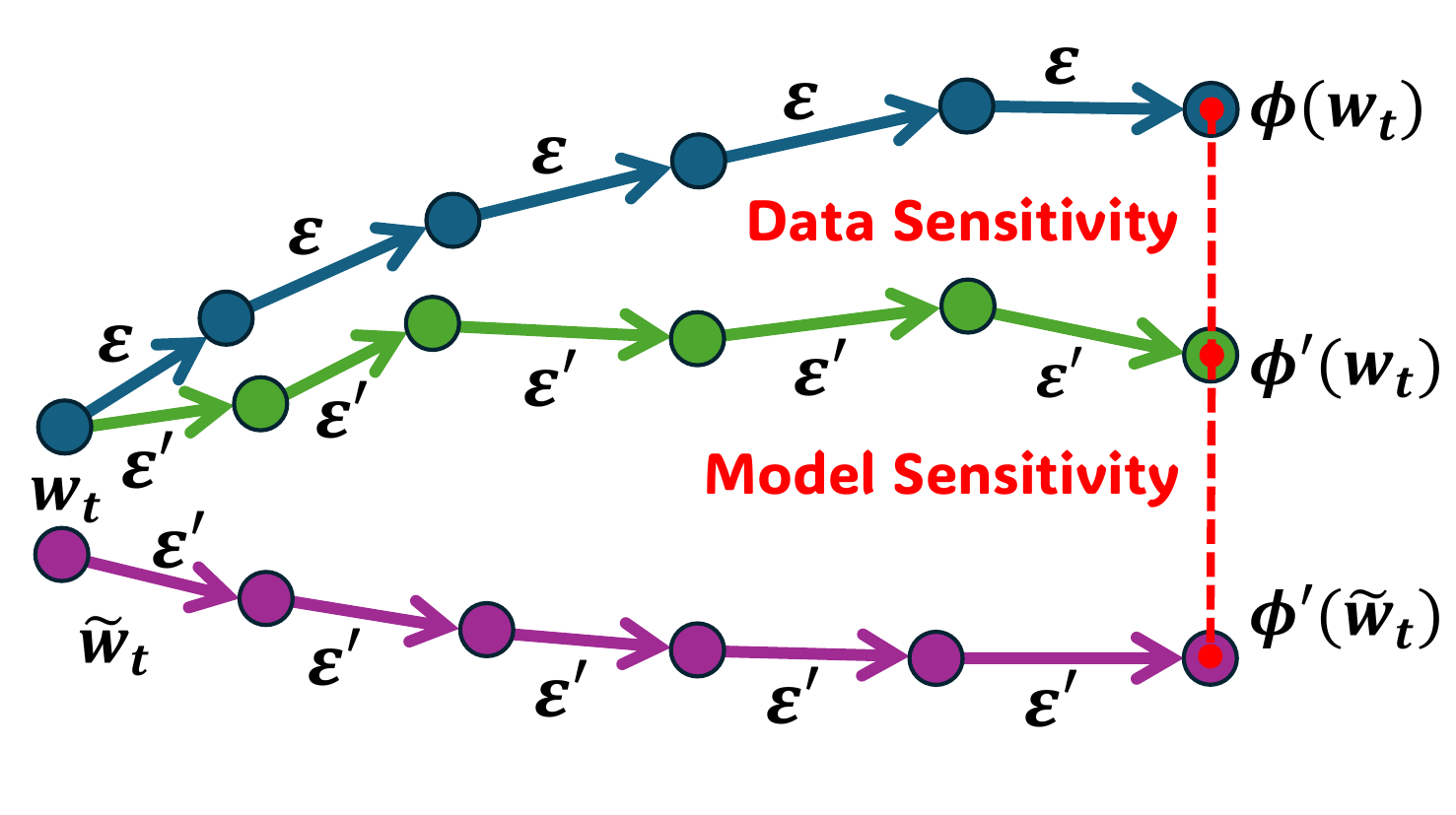}
\caption{\textit{Left}: The global privacy amplification path induced by the shifted interpolation sequence. \textit{Right}: Estimation of the global sensitivity under local updates via an auxiliary sequence.}
\label{tx:global_local_privacy}
\end{figure}

In this section, we primarily demonstrate how to provide the worst privacy in FL-DP and its convergent bound. Generally, we assume that local objectives satisfy smoothness with a constant $L$,
\begin{assumption}
\label{assupmtion:1}
    Each local objective function $f_i(\cdot)$ satisfies $L$-smoothness, i.e.,
    \begin{equation}
        \Vert \nabla f_i(w_1) - \nabla f_i(w_2) \Vert \leq L\Vert w_1 - w_2 \Vert.
    \end{equation}
\end{assumption}

\subsection{Shifted Interpolation}
To simplify presentations, we denote global updates at round $t$ on the adjacent datasets $\mathcal{C}$ and $\mathcal{C}'$ as:
\begin{equation}
\label{updates}
    \mathcal{C}: w_{t+1} = \phi(w_t) + \overline{n}_{t}, \quad
    \mathcal{C}': w_{t+1}' = \phi'(w_t') + \overline{n}_{t}'.
\end{equation}
$\phi(w_t)$ denotes the accumulation of total $K$ steps from the initialization state $w_{i,0,t}=w_t$ at round $t$. $\overline{n}_t$ could be considered as the averaged noise, i.e. $\overline{n}_t\sim\mathcal{N}(0,\sigma^2I_d/m)$. Traditional methods require performing privacy amplification $T$ times based on the relationship between $w$ and $w'$, yielding non-convergent privacy as $T$. To avoid loose privacy amplification, we follow \cite{bok2024shifted} to adopt the \textit{shifted interpolation} technique. Specifically, we define the following sequence:
\begin{equation}
\label{interpolation}
    \widetilde{w}_{t+1}=\lambda_{t+1}\phi(w_{t}) + (1-\lambda_{t+1})\phi'(\widetilde{w}_{t}) + \overline{n}_t,
\end{equation}
where $t=t_0,\cdots,T-1$. By setting $\lambda_{T}=1$, then $\widetilde{w}_{T}=w_{T}$, and we add the definition of $\widetilde{w}_{t_0}=w_{t_0}'$ as the beginning of interpolations. $0\leq \lambda_t\leq 1$ are interpolation coefficients to be optimized. As shown in Figure~\ref{tx:global_local_privacy} (left), the interpolation sequence path enables a privacy amplification analysis over $T-t_0$ times where $t_0$ is an optimizable coefficient. Therefore, we can establish the following theorem along this new privacy amplification path.

\begin{theorem}
\label{thm1}
    Under Assumption~\ref{assupmtion:1} and corresponding updates in Eq.(\ref{updates}), after $T$ training rounds on the adjacent datasets $\mathcal{C}$ and $\mathcal{C}'$, we can bound the trade-off function between $w_T$ and $w_T'$ as:
    \begin{equation}
    \label{thm1_privacy}
    \begin{split}
        T(w_T;w_T')=T(\widetilde{w}_T;w_T') \geq T_G\left(\frac{\sqrt{m}}{\sigma}\sqrt{\sum_{t=t_0}^{T-1}\lambda_{t+1}^2\Vert \phi(w_t) - \phi'(\widetilde{w}_t) \Vert^2}\right).
    \end{split}
\end{equation}
\end{theorem}
In addition to the influence of standard parameters, Theorem \ref{thm1} highlights the critical relationship between the privacy lower bound and the weighted sum of global sensitivity terms from $t_0$ to $T$. Therefore, we then analyze the global sensitivity term $\Vert \phi(w_t) - \phi'(\widetilde{w}_t) \Vert$.

\subsection{Global Sensitivity}
The sensitivity term $\Vert \phi(w_t) - \phi'(\widetilde{w}_t) \Vert^2$ means the stability gaps between $w_t$ and $\widetilde{w}_t$ after performing local training on datasets $\mathcal{C}$ and $\mathcal{C}'$ respectively. It is influenced by both the model parameters and the data samples, making the analysis extremely challenging. To achieve a fine-grained analysis, we propose an auxiliary sequence $\phi'(w_t)$. As shown in Figure~\ref{tx:global_local_privacy} (right), the global sensitivity can be split into \textit{data sensitivity} and \textit{model sensitivity}. The \textit{data sensitivity} measures the estimable errors obtained after training on different datasets for several steps from the same initialization. This discrepancy is solely caused by the data. The \textit{model sensitivity} measures the estimable errors of the updates when two different initialized states are trained on the same dataset. Clearly, this discrepancy is directly related to the degree of similarity between the two initializations. Thus, we have:

\begin{theorem}
\label{thm1:sensitivity}
    Under $K$ local updates by Eq.(\ref{avg-local-iter}) and Eq.(\ref{prox-local-iter}), the global sensitivity in {\ttfamily Noisy-FedAvg} and {\ttfamily Noisy-FedProx} methods can be shown as:
    \begin{equation}
    \label{thm1:res:sensitivity}
        \Vert \phi(w_t) - \phi'(\widetilde{w}_t) \Vert \leq \underbrace{\rho_t\Vert w_t - \widetilde{w}_t \Vert}_{\text{from model sensitivity}} + \underbrace{\gamma_t}_{\text{from data sensitivity}},
    \end{equation}
    where $\rho_t$ and $\gamma_t$ are shown in Table~\ref{res_sensitivity}.
\end{theorem}

\begin{table}[t]
  \vspace{-0.2cm}
  \caption{Specific formulation of $\rho_t$ and $\gamma_t$ in Theorem~\ref{thm1:sensitivity}.}
  \label{res_sensitivity}
  \centering
  \begin{threeparttable}
  \setlength{\tabcolsep}{1.5mm}{
  \begin{tabular}{cccc}
    \toprule
                                     & Learning rate & $\rho_t$ & $\gamma_t$ \\
    \midrule
    \multirow{6}{*}{\makecell{\ttfamily Noisy-FedAvg}} & $\mu$
    & $\left(1+\mu L\right)^K$ & $\frac{2\mu V}{m}K$ \\
    \specialrule{0em}{1pt}{1pt}
    \cline{2-4}
    \specialrule{0em}{1pt}{1pt}
        &$\frac{\mu}{k+1}$& $\left(1+K\right)^{c\mu L}$ & $\frac{2cV}{m}\ln(K+1)$ \\
    \specialrule{0em}{1pt}{1pt}
    \cline{2-4}
    \specialrule{0em}{1pt}{1pt}
     &$\frac{\mu}{t+1}$ & $\left(1+\frac{\mu L}{t+1}\right)^K$ & $\frac{2\mu V}{m}\frac{K}{t+1}$ \\
     \specialrule{0em}{1pt}{1pt}
     \cline{2-4}
     \specialrule{0em}{1pt}{1pt}
        &$\frac{\mu}{tK+k+1}$ & $\left(\frac{t+2}{t+1}\right)^{z\mu L}$ & $\frac{2zV}{m}\ln\left(\frac{t+2}{t+1}\right)$ \\
    \specialrule{0em}{1pt}{1pt}
    \midrule
    \specialrule{0em}{1pt}{1pt}
    \makecell{\ttfamily Noisy-FedProx} & \text{non-increase} & $\frac{\alpha}{\alpha - L}$ & $\frac{2V}{m\alpha}$ \\
    \specialrule{0em}{1pt}{1pt}
    \bottomrule
  \end{tabular}
  }
  \end{threeparttable}
\end{table}

\begin{remark}
    The result in Eq.(\ref{thm1:res:sensitivity}) aligns with the intuition of designing the splitting operators. It can be observed that the coefficient $\rho_t$ is consistently greater than $1$, which is a typical characteristic of non-convexity. It also implies that the sensitivity upper bound tends to diverge as $t\rightarrow \infty$. However, in Eq.(\ref{thm1_privacy}), the parameters $0\leq\lambda_t\leq 1$ can efficiently scale the sensitivity terms. By carefully selecting the optimal $\lambda_t$ values, it can ultimately achieve a convergent privacy lower bound.
\end{remark}

\subsection{Minimization Problem on $t_0$ and Its Relaxation}
According to Eq.(\ref{thm1_privacy}) and the sensitivity bound in Eq.(\ref{thm1:res:sensitivity}), we denote the weighted accumulation of the sensitivity term as $\mathcal{H}(\lambda_{t},t_0)$, where $\lambda_t$ and $t_0$ are both to-be-optimized parameters. Therefore, we can provide the tight bound of the privacy by solving the minimization of the following problem:
\begin{equation}
\label{H_star}
    \!\mathcal{H}_\star=\min_{\lambda_{t},t_0} \mathcal{H}(\lambda_{t},t_0)\triangleq\sum_{t=t_0}^{T-1}\lambda_{t+1}^2\left(\rho_t\Vert w_t - \widetilde{w}_t \Vert + \gamma_t\right)^2.
\end{equation}
If $t_0$ is very small, it means that the introduced stability gap will also be very small. However, consequently, the sensitivity terms will extremely increase due to the accumulation over $T-t_0$ rounds. Conversely, although the accumulated error is small, it remains divergent due to the unbounded global sensitivity term. To avoid this uncertain analysis, we have to make a compromise. Because $t_0$ is an integer belonging to $[0,T-1]$, its optimal selection certainly exists when $T$ is given. Therefore, we consider a relaxed and simple problem instead, i.e. under $t_0=0$,
\begin{equation}
\label{H_0}
\mathcal{H}_0=\min_{\lambda_{t}} \mathcal{H}(\lambda_{t},0)=\sum_{t=0}^{T-1}\lambda_{t+1}^2\left(\rho_t\Vert w_t - \widetilde{w}_t \Vert + \gamma_t\right)^2.
\end{equation}
Its advantage lies in the fact that when $t_0=0$, the sensitivity error is $0$, avoiding its divergence. Compared to the optimal solution $\mathcal{H}_\star$, it satisfies $\mathcal{H}_0\geq \mathcal{H}_\star$. More importantly, the solution of $\mathcal{H}_0$ eliminates the influence of $t_0$ , allowing us to obtain an effective solution to the minimization problem by directly minimizing the $\lambda_t$ terms. The lower bound in Theorem~\ref{thm1} will be replaced by:
\begin{equation}
\label{relaxiation}
    T(w_T;w_T')\geq T_G\left(\frac{\sqrt{m\mathcal{H}_\star}}{\sigma}\right)\geq T_G\left(\frac{\sqrt{m\mathcal{H}_0}}{\sigma}\right).
\end{equation}
Although this is a relaxation of the privacy lower bound, our subsequent proof confirms that $\mathcal{H}_0$ can still achieve convergent into a constant form, which means local privacy can still achieve convergence. We additionally provide a discussion of its tightness in Appendix~\ref{ap:tightness discussion}.

\subsection{Convergent Privacy}
In this part, we demonstrate our convergent privacy analysis. By solving Eq.(\ref{H_0}) under corresponding $\rho_t$ and $\gamma_t$, we provide the worst privacy for the {\ttfamily Noisy-FedAvg} and {\ttfamily Noisy-FedProx} methods.

\begin{theorem}
\label{privacy_avg}
    Let $f_i(w)$ be a $L$-smooth and non-convex local objective and local updates be performed as shown in Eq.(\ref{avg-local-iter}). Under perturbations of isotropic noises $n_i\sim\mathcal{N}\left(0, \sigma^2I_d\right)$, the worst privacy of the {\ttfamily Noisy-FedAvg} method achieves:

    (a) under constant learning rates $\eta_{k,t}=\mu$:
    \begin{equation}
    \begin{split}
        T(w_T;w_T') \geq T_G\left(\frac{2\mu VK}{\sqrt{m}\sigma}\sqrt{\frac{(1+\mu L)^K+1}{(1+\mu L)^K-1}\frac{(1+\mu L)^{KT}-1}{(1+\mu L)^{KT}+1}}\right).
    \end{split}
    \end{equation}
    
    (b) under cyclically decaying $\eta_{k,t}=\frac{\mu}{k+1}$:
    \begin{equation}
    \begin{split}
        T(w_T;w_T') \geq  T_G\left(\frac{2cV\ln(K + 1)}{\sqrt{m}\sigma}\sqrt{\frac{(1+K)^{c\mu L}+1}{(1+K)^{c\mu L}-1}\frac{(1+K)^{c\mu LT}-1}{(1+K)^{c\mu LT}+1}}\right).
    \end{split}
    \end{equation}

    (c) under stage-wise decaying $\eta_{k,t}=\frac{\mu}{t+1}$:
    \begin{equation}
        T(w_T;w_T') > T_G\left(\frac{2\mu VK}{\sqrt{m}\sigma}\sqrt{2-\frac{1}{T}}\right).
    \end{equation}

    (d) under continuously decaying $\eta_{k,t}=\frac{\mu}{tK+k+1}$:
    \begin{equation}
        T(w_T;w_T') > T_G\left(\frac{2z V}{\sqrt{m}\sigma}\sqrt{2-\frac{1}{T}}\right).
    \end{equation}
\end{theorem}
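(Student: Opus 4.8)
The plan is to feed Theorem~\ref{thm1:sensitivity} into the privacy lower bound Eq.~(\ref{privacy}) and turn the optimization Eq.~(\ref{H_0}) into a one-dimensional recursion. Write $D_t:=\rho_t\Vert w_t-\widetilde w_t\Vert+\gamma_t$ for the per-round sensitivity surrogate, so that $\mathcal{H}_0=\min_{\lambda}\sum_{t=0}^{T-1}\lambda_{t+1}^2D_t^2$ with $\lambda_T=1$. Eq.~(\ref{interpolation}) gives the exact identity $w_{t+1}-\widetilde w_{t+1}=(1-\lambda_{t+1})(\phi(w_t)-\phi'(\widetilde w_t))$, hence $\Vert w_{t+1}-\widetilde w_{t+1}\Vert=(1-\lambda_{t+1})\Vert\phi(w_t)-\phi'(\widetilde w_t)\Vert\le(1-\lambda_{t+1})D_t$, so that $D_{t+1}\le\rho_{t+1}(1-\lambda_{t+1})D_t+\gamma_{t+1}$ and $D_0=\gamma_0$ (here $t_0=0$ forces $\widetilde w_0=w_0'=w_0$, so the initial stability term vanishes — the whole reason Eq.~(\ref{H_0}) is tractable while Eq.~(\ref{H_star}) is not). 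Taking these bounds as equalities (worst-case data) reduces everything to a finite-dimensional convex quadratic program in $\lambda_1,\dots,\lambda_{T-1}$.

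\textbf{Solving the program and the constant/cyclic cases.} Reparametrize by $p_t:=(1-\lambda_{t+1})D_t$ (with $p_{T-1}=0$ from $\lambda_T=1$); the objective becomes $\sum_{t=0}^{T-1}c_t^2$ with residuals $c_0=\gamma_0-p_0$, $c_t=\rho_tp_{t-1}+\gamma_t-p_t$ ($1\le t\le T-2$), and $c_{T-1}=\rho_{T-1}p_{T-2}+\gamma_{T-1}$ — affine in the free variables $p_0,\dots,p_{T-2}$. Setting $\partial_{p_j}\sum_tc_t^2=0$ collapses to the first-order recurrence $c_j=\rho_{j+1}c_{j+1}$, i.e. $c_j=\big(\prod_{i=j+1}^{T-1}\rho_i\big)c_{T-1}$; and summing $c_t/P_t$ with $P_t:=\prod_{i=1}^{t}\rho_i$ telescopes away every $p_t$, yielding $\sum_{t=0}^{T-1}c_t/P_t=\sum_{t=0}^{T-1}\gamma_t/P_t$, which pins down $c_{T-1}$ and gives $\mathcal{H}_0=(\sum_{t=0}^{T-1}\gamma_t/P_t)^2/\sum_{t=0}^{T-1}P_t^{-2}$; then Eq.~(\ref{relaxiation}) delivers $T(w_T;w_T')\ge T_G(\sqrt{m\mathcal{H}_0}/\sigma)$. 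For (a) and (b) the tabulated $\rho_t\equiv\rho$, $\gamma_t\equiv\gamma$ are constant, so $P_t=\rho^t$, the geometric sums collapse, and using $1-\rho^{-2T}=(1-\rho^{-T})(1+\rho^{-T})$, $1-\rho^{-2}=(1-\rho^{-1})(1+\rho^{-1})$ one gets $\mathcal{H}_0=\gamma^2\tfrac{\rho+1}{\rho-1}\tfrac{\rho^T-1}{\rho^T+1}$; substituting $(\rho,\gamma)=((1+\mu L)^K,\,2\mu VK/m)$ and $((1+K)^{c\mu L},\,2cV\ln(K+1)/m)$ reproduces the two stated bounds verbatim.

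\textbf{The decaying cases and the main obstacle.} For (c) and (d) the coefficients vary, so rather than evaluate the Gamma-type product $P_t$ I would relax via Cauchy--Schwarz, $(\sum_t\gamma_t/P_t)^2\le(\sum_t\gamma_t^2)(\sum_tP_t^{-2})$, giving $\mathcal{H}_0\le\sum_{t=0}^{T-1}\gamma_t^2$; then $\sum_{t=0}^{T-1}\gamma_t^2=\tfrac{(2\mu VK)^2}{m^2}\sum_{k=1}^{T}k^{-2}<\tfrac{(2\mu VK)^2}{m^2}(2-\tfrac1T)$ for (c) (telescope $\tfrac1{k(k-1)}$), and $\gamma_t=\tfrac{2zV}{m}\ln\tfrac{t+2}{t+1}\le\tfrac{2zV}{m(t+1)}$ yields $\sum_t\gamma_t^2<\tfrac{(2zV)^2}{m^2}(2-\tfrac1T)$ for (d); feeding these into $T_G(\sqrt{m\mathcal{H}_0}/\sigma)$ gives the $T_G(\cdots\sqrt{2-1/T})$ guarantees, the strict inequality coming from the relaxation. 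I expect the delicate step to be the exact solution of the quadratic program — in particular, carrying the terminal condition $\lambda_T=1$ through the normal equations and verifying that the resulting minimizer corresponds to an admissible shifted-interpolation sequence to which Eq.~(\ref{privacy}) genuinely applies, since the $\rho_t>1$ of the non-convex regime drives the optimum toward the boundary; the auxiliary nuisance is the non-constant products $P_t=\prod_i(1+\tfrac{\mu L}{i+1})^K$ in (c)--(d), which is exactly why the Cauchy--Schwarz relaxation $\mathcal{H}_0\le\sum_t\gamma_t^2$ is the pragmatic route there, at the cost of a slightly looser — but still $T$-convergent, and for (d) also $K$-convergent — constant.
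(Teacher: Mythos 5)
Your proposal is correct and follows essentially the same route as the paper: worst-case equality in the shifted-interpolation recursion, telescoping the constraint to $\sum_t(\gamma_t/P_t)$, and arriving at $\mathcal{H}_0=\bigl(\sum_t\gamma_t/P_t\bigr)^2\big/\sum_tP_t^{-2}$ (the paper obtains the same expression via the Cauchy--Schwarz equality condition rather than your normal-equations derivation, which is equivalent), followed by exact geometric evaluation for (a)--(b) and the relaxation $\mathcal{H}_0\le\sum_t\gamma_t^2<\gamma^2(2-\tfrac1T)$ for (c)--(d). The admissibility caveat you raise about the optimal $\lambda_{t+1}\in[0,1]$ is also left unresolved in the paper itself, so it is not a gap specific to your argument.
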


\begin{remark}[General Bound.]
    Theorem~\ref{privacy_avg} provides the worst-case privacy analysis for the {\ttfamily Noisy-FedAvg} method. Its privacy is primarily affected by the clipping norm $V$, the local interval $K$, the scale $m$, and the noise intensity $\sigma$. A larger gradient clipping norm $V$ always results in larger gaps. The local interval $K$ determines the sensitivity of the entire local process, which is primarily influenced by the learning rate strategy. $m$ in our proof represents the client scale; in fact, the number of data samples is also proportional to $m$. An increased $m$ will largely reduce the sensitivity, yielding improvements in privacy. Infinite noise can provide perfect privacy, while zero noise offers no privacy. Constant-level noise can still achieve convergent privacy.
\end{remark}

\begin{remark}[Partial Participation.]
    The above analysis also applies to the partial participation setting. For example, suppose there are $m$ clients in total, and in each round $n$ clients are selected to participate. This corresponds to a sampling process, where the expected privacy in each round is at least amplified by a factor of $\frac{m}{n}$. Since the analysis of local iterations is carried out on each individual client, it is not affected by this change. Therefore, under partial participation, the general upper bound depends linearly on the number of participating nodes, and one can simply replace $m$ with $n$. In particular, when $m=n=1$, the analysis reduces to the standard DP-SGD.
\end{remark}

\begin{theorem}
\label{privacy_prox}
    Let $f_i(w)$ be a $L$-smooth and non-convex local objective and local updates be performed as shown in Eq.(\ref{prox-local-iter}). Let the proximal coefficient $\alpha > L$ and $\eta < \frac{1}{\alpha - L}$, under perturbations of isotropic noises $n_i\sim\mathcal{N}\left(0, \sigma^2I_d\right)$, the worst privacy of the {\ttfamily Noisy-FedProx} method achieves:
    \begin{equation}
    \begin{split}
        T(w_T;w_T')\geq T_G\left(\frac{2V}{\sqrt{m}\alpha\sigma}\sqrt{\frac{2\alpha-L}{L}\left(1-\frac{2}{\left(\frac{\alpha}{\alpha - L}\right)^T+1}\right)}\right),
    \end{split}
    \end{equation}
\end{theorem}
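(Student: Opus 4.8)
The plan is to feed the constant-in-$t$ sensitivity coefficients of {\ttfamily Noisy-FedProx} into the shifted-interpolation privacy bound of Eq.~(\ref{privacy}) and then solve the optimization in Eq.~(\ref{H_0}) in closed form. First I would specialize Theorem~\ref{thm1:sensitivity} (the {\ttfamily Noisy-FedProx} row of Table~\ref{res_sensitivity}) to the updates Eq.~(\ref{prox-local-iter}): under $\alpha>L$ and $\eta<\frac{1}{\alpha-L}$, the proximal term makes each local trajectory contract at a fixed rate, so the model-sensitivity factor is the \emph{constant} $\rho:=\frac{\alpha}{\alpha-L}>1$ and the data-sensitivity term is the \emph{constant} $\gamma:=\frac{2V}{m\alpha}$, independent of both $t$ and $k$. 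This $t$-independence is exactly what makes the optimization exactly solvable, in contrast with the $t$-varying $(\rho_t,\gamma_t)$ that forced the learning-rate-by-learning-rate treatment of Theorem~\ref{privacy_avg}.

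Next, take the observation point $t_0=0$ in Eq.~(\ref{relaxiation}); then the interpolation starts at $\widetilde w_0=w_0'=w_0$, the stability error vanishes, and the bound reads $T(w_T;w_T')\ge T_G\!\left(\frac{\sqrt m}{\sigma}\sqrt{\mathcal H_0}\right)$ with $\mathcal H_0=\min_{\lambda}\sum_{t=0}^{T-1}\lambda_{t+1}^2(\rho a_t+\gamma)^2$, where $a_t=\Vert w_t-\widetilde w_t\Vert$ is the interpolation shift governed by $a_0=0$, $a_{t+1}=(1-\lambda_{t+1})(\rho a_t+\gamma)$ and $\lambda_T=1$ (hence $a_T=0$). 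The key algebraic move is to change the optimization variables from the $\lambda_t$ to the shifts $a_t$: since $\lambda_{t+1}(\rho a_t+\gamma)=\rho a_t+\gamma-a_{t+1}$, setting $d_t:=\rho a_t+\gamma-a_{t+1}$ turns $\mathcal H_0$ into the strictly convex quadratic program $\min\sum_{t=0}^{T-1}d_t^2$ over $a_1,\dots,a_{T-1}$ subject only to the two endpoint values $a_0=a_T=0$.

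I would then solve this program through its first-order conditions: differentiating $\sum_t d_t^2$ in $a_t$ for $1\le t\le T-1$ gives $d_{t-1}=\rho\,d_t$, so the optimal slacks decay geometrically, $d_t=\rho^{-t}d_0$. Propagating the driven linear recursion $a_{t+1}=\rho a_t+\gamma-d_t$ forward from $a_0=0$ and imposing $a_T=0$ pins down $d_0=\gamma\,\rho^{T-1}\frac{\rho+1}{\rho^T+1}$; then $\mathcal H_0=d_0^2\sum_{t=0}^{T-1}\rho^{-2t}=d_0^2\frac{1-\rho^{-2T}}{1-\rho^{-2}}$, and collapsing the geometric factors (using $\rho^{2T}-1=(\rho^T-1)(\rho^T+1)$ and $\rho^2-1=(\rho-1)(\rho+1)$) yields the compact form $\mathcal H_0=\gamma^2\,\frac{\rho+1}{\rho-1}\cdot\frac{\rho^T-1}{\rho^T+1}$. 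Finally I would translate back using $\frac{\rho+1}{\rho-1}=\frac{2\alpha-L}{L}$, $\frac{\rho^T-1}{\rho^T+1}=1-\frac{2}{\rho^T+1}$, and $\frac{\sqrt m}{\sigma}\gamma=\frac{2V}{\sqrt m\,\alpha\sigma}$, which reproduces the stated bound; and since $\rho>1$, the term $\frac{2}{\rho^T+1}\to 0$, so the guarantee increases monotonically in $T$ to the finite limit $T_G\!\left(\frac{2V}{\sqrt m\,\alpha\sigma}\sqrt{\frac{2\alpha-L}{L}}\right)$, a constant involving neither $T$ nor $K$.

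I expect the optimization step — its set-up rather than its calculus — to be the main obstacle. The delicate points are: (i) arguing that once $\lambda_T=1$ (equivalently $a_T=0$) is imposed the minimization over the interpolation coefficients is effectively unconstrained, since restricting the shift sequence would give a strictly larger $\mathcal H_0$ and hence a weaker (though still convergent) bound, so one must apply the shifted-interpolation lemma with the full admissible range of the $\lambda_t$; (ii) the bookkeeping in solving the driven recursion for $a_t$, substituting $d_t=\rho^{-t}d_0$, and collapsing the nested sums $\sum\rho^{j}$, $\sum\rho^{-2j}$ into $\frac{\rho+1}{\rho-1}\cdot\frac{\rho^T-1}{\rho^T+1}$ without sign or index slips; and (iii) keeping the inequalities consistently oriented — monotonicity of $T_G(\cdot)$ is what lets us replace the exact sensitivities $\Vert\phi(w_t)-\phi'(\widetilde w_t)\Vert$ by the upper bound $\rho a_t+\gamma$ and still retain a valid lower bound on the trade-off function. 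The hypotheses $\alpha>L$ (so $\rho>1$ and the geometric sums are well behaved) and $\eta<\frac{1}{\alpha-L}$ (so the per-step contraction factor $1-\eta(\alpha-L)$ lies in $[0,1)$ and Theorem~\ref{thm1:sensitivity} applies with exactly these constants) enter precisely at this stage.
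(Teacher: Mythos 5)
Your proposal is correct and follows essentially the same route as the paper: specialize Theorem~\ref{thm1:sensitivity} to the constant pair $\rho=\frac{\alpha}{\alpha-L}$, $\gamma=\frac{2V}{m\alpha}$, take $t_0=0$ so the initial stability gap vanishes, and minimize $\sum_{t=0}^{T-1}\lambda_{t+1}^2(\rho a_t+\gamma)^2$ subject to the telescoped worst-case recursion with $a_T=0$. The only difference is presentational: the paper lower-bounds the objective via Cauchy--Schwarz against the weights $\prod_{j=t+1}^{T-1}\rho_j=\rho^{T-1-t}$, whose equality condition is exactly your first-order condition $d_{t-1}=\rho\,d_t$, so both computations yield the same $\mathcal{H}_0=\gamma^2\frac{\rho+1}{\rho-1}\cdot\frac{\rho^T-1}{\rho^T+1}$ and hence the stated bound (and your caveat (i) about whether the optimizing $\lambda_t$ remain in $[0,1]$ is a real subtlety that the paper likewise defers rather than resolves in its appendix).
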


\begin{remark} Impacts of the Regularization Coefficient $\alpha$.]
    Aside from the influence of standard coefficients, due to the correction of the regularization term, its privacy is no longer affected by the local interval $K$, even with a constant learning rate, which becomes a significant advantage of the {\ttfamily Noisy-FedProx} method. Specifically, when $\alpha>L$, increasing $\alpha$ significantly improves the worst privacy. As $\alpha$ increases, the proximal term dominates the objective to a greater extent, leading to less efficient per-step updates and a longer overall training trajectory. Therefore, the selection of $\alpha$ is a delicate trade-off between optimization and privacy. We also validate this trade-off in the subsequent experiments. An excessively large $\alpha$ may instead hinder the training process.
\end{remark}

\begin{table*}[t]
  \vspace{-0.5cm}
  \caption{Comparisons with the existing theoretical results in FL-DP. We losslessly transfer our results into $(\epsilon,\delta)$-DP and RDP results. In $(\epsilon,\delta)$-DP, we compare the requirement of noise variance corresponding to achieving $(\epsilon,\delta)$-DP. In ($\zeta, \epsilon$)-RDP, we directly compare the privacy budget term $\delta(\zeta)$. We mainly focus on the privacy changes on $T$ and $K$. $\Omega(\cdot)$, $\mathcal{O}(\cdot)$, and $o(\cdot)$ correspond to the lower, upper bound, and not tight upper bound of the complexity, respectively.}
  \vspace{0.05cm}
  \label{res_theorem_comp}
  \centering
  \begin{threeparttable}
  \resizebox{\textwidth}{!}{
  \begin{tabular}{c|c|c|c}
    \toprule
      & $(\epsilon,\delta)$-DP & $(\zeta,\epsilon)$-RDP & when $T,K\rightarrow\infty$ \\
    \midrule
    \cite{wei2020federated}  & $\sigma = \mathcal{O}\left(\frac{V}{\epsilon m}\sqrt{T^2-mL^2}\right)$ & - & \multirow{23}{*}{\makecell{$\sigma\rightarrow\infty$ on \\ non-convex}}  \\
    \specialrule{0em}{2pt}{2pt}
    \cline{1-3}
    \specialrule{0em}{2pt}{2pt}
    \cite{shi2021hfl}  & $\sigma = \mathcal{O}\left(\frac{V\sqrt{\log\left(\frac{1}{\delta}\right)}}{\epsilon}T\sqrt{K}\right)$ & - &  \\
    \specialrule{0em}{2pt}{2pt}
    \cline{1-3}
    \specialrule{0em}{2pt}{2pt}
    \cite{zhang2021faithful}  & $\sigma = \mathcal{O}\left(\frac{V\sqrt{\log\left(\frac{1}{\delta}\right)}}{\epsilon m}\sqrt{T+mK}\right)$ & - &  \\
    \specialrule{0em}{2pt}{2pt}
    \cline{1-3}
    \specialrule{0em}{2pt}{2pt}
    \cite{noble2022differentially}  & $\sigma = \Omega\left(\frac{V\sqrt{\log\left(\frac{2T}{\delta}\right)}}{\epsilon\sqrt{m}}\sqrt{TK}\right)$ & - &  \\
    \specialrule{0em}{2pt}{2pt}
    \cline{1-3}
    \specialrule{0em}{2pt}{2pt}
    \cite{cheng2022differentially}  & $\sigma = \Omega\left(\frac{V\sqrt{\log\left(\frac{1}{\delta}\right)}}{\epsilon}\sqrt{T}\right)$ & - &  \\
    \specialrule{0em}{2pt}{2pt}
    \cline{1-3}
    \specialrule{0em}{2pt}{2pt}
    \cite{zhang2022differential} & - & $\epsilon=\Omega\left(\frac{\zeta V^2}{\sigma^2}TK\right)$ &  \\
    \specialrule{0em}{2pt}{2pt}
    \cline{1-3}
    \specialrule{0em}{2pt}{2pt}
    \cite{hu2023federated}  & $\sigma = \Omega\left(\frac{V\sqrt{\epsilon+2\log\left(\frac{1}{\delta}\right)}}{\epsilon}\sqrt{T}\right)$ & - &  \\
    \specialrule{0em}{2pt}{2pt}
    \cline{1-3}
    \specialrule{0em}{2pt}{2pt}
    \cite{fukami2024dp}  & $\sigma = \Omega\left(\frac{V(1+\sqrt{1+\epsilon})\sqrt{\log\left(e+\frac{\epsilon}{\delta}\right)}}{\epsilon}\sqrt{T}\right)$ & - &  \\
    \specialrule{0em}{2pt}{2pt}
    \hline
    \specialrule{0em}{2pt}{2pt}
    \cite{bastianello2024enhancing}  & - & $\epsilon=\mathcal{O}\left(\frac{\zeta LV^2}{\beta^2\sigma^2}\left(1-e^{-\beta T}\right)\right)$ & \makecell{convergent on\\
     $\beta$-strongly convex} \\
    \specialrule{0em}{2pt}{2pt}
    \midrule
    \midrule
    \specialrule{0em}{2pt}{2pt}
    \textbf{Ours ({\ttfamily Noisy-FedAvg})} & $\sigma=o\left(\frac{V\sqrt{\left(\Phi^{-1}(\delta)\right)^2+4\epsilon}}{\epsilon\sqrt{m}}\sqrt{2-\frac{1}{T}}\right)$ & $\epsilon=\mathcal{O}\left(\frac{\zeta V^2}{m\sigma^2}\left(2-\frac{1}{T}\right)\right)$ & \makecell{convergent on\\
     non-convex}\\
    \specialrule{0em}{2pt}{2pt}
    \bottomrule
  \end{tabular}
  }
  \end{threeparttable}
\end{table*}

\textbf{Theoretical comparisons.}
Table~\ref{res_theorem_comp} demonstrates the comparison between existing theoretical results and ours of the {\ttfamily Noisy-FedAvg} method. Existing analyses are mostly based on the DP relaxations of $(\epsilon,\delta)$-DP and RDP~\citep{mironov2017renyi}. Apart from the lossiness in their DP definition, an important weakness is that privacy amplification on composition is entirely loose. For instance, the general amplification in $(\epsilon,\delta)$-DP indicates, the composition of an $(\epsilon_1,\delta_1)$-DP and an $(\epsilon_2,\delta_2)$-DP leads to an $(\epsilon_1+\epsilon_2,\delta_1+\delta_2)$-DP. Similarly, the composition of a $(\zeta,\epsilon_1)$-RDP and a $(\zeta,\epsilon_2)$-RDP results in a $(\zeta,\epsilon_1+\epsilon_2)$-RDP. This simple parameter addition mechanism directly leads to a linear amplification of the privacy budget. Therefore, in previous works, when achieving specific DP guarantees, it is often required that the noise intensity $\sigma^2$ is proportional to the communication rounds $T$~(or $TK$). \cite{wei2020federated} prove a double-noisy single-step local training on both client and server sides is possible to achieve the privacy amplification of $\mathcal{O}(T^2)$ rate. \cite{shi2021hfl} further consider the local intervals $K$. \cite{zhang2021faithful} and \cite{noble2022differentially} elevate the theoretical results to $\mathcal{O}\left(TK\right)$. Subsequent research further indicates that the impact of the interval $K$ can be eliminated to achieve $\mathcal{O}\left(T\right)$ rate via sparsified perturbation~\citep{hu2023federated,cheng2022differentially}, and algorithmic improvements~\citep{fukami2024dp}. However, these conclusions all indicate that the condition for achieving constant privacy guarantees is to continually increase the noise intensity. \cite{bastianello2024enhancing} provide constant privacy under $\beta$-strongly convex objectives.
\section{Empirical Validation}

\begin{table*}[t]
\begin{center}
\vspace{-0.3cm}
\caption{Comparison of the accuracy under different experimental settings. We select the scale $m$ from $\left[50, 100\right]$. Each client holds $600$ heterogeneous data samples of MNIST or $500$ heterogeneous data samples of CIFAR-10. For each scale, we test two settings of the local interval $K=50$, $100$, and $200$, respectively. Throughout the entire process, we fix $TK=30000$. ``-" means the training loss diverges. Each result is repeated $5$ times to compute its mean and variance.}
\label{exp_accuracy}
\vspace{0.05cm}
\small
\setlength{\tabcolsep}{1.5mm}{\begin{tabular}{@{}c|c|cccccc@{}}
\toprule
\multicolumn{1}{c}{} & \multirow{2.5}{*}{\makecell{Noisy\\Intensity}} & \multicolumn{3}{c}{$m=50$} & \multicolumn{3}{c}{$m=100$} \\ 
\cmidrule(lr){3-5} \cmidrule(lr){6-8} 
\multicolumn{1}{c}{} & & \multicolumn{1}{c}{$K=50$} & \multicolumn{1}{c}{$K=100$} & \multicolumn{1}{c}{$K=200$} & \multicolumn{1}{c}{$K=50$} & \multicolumn{1}{c}{$K=100$} & \multicolumn{1}{c}{$K=200$} \\
\cmidrule(lr){1-1} \cmidrule(lr){2-8}
\multirow{4.3}{*}{\makecell{\ttfamily MNIST \\ \ttfamily LeNet-5}} & $\sigma=1.0$ & - & - & - & - & - & - \\
& $\sigma=10^{-1}$ & $\text{95.40}_{\pm \text{0.18}}$ & $\text{95.42}_{\pm \text{0.15}}$ & $\text{95.21}_{\pm \text{0.11}}$ & $\text{97.32}_{\pm \text{0.14}}$ & $\text{97.50}_{\pm \text{0.11}}$ & $\text{97.42}_{\pm \text{0.18}}$\\
& $\sigma=10^{-2}$ & $\text{98.33}_{\pm \text{0.12}}$ & $\text{98.02}_{\pm \text{0.15}}$ & $\text{97.88}_{\pm \text{0.12}}$ & $\text{98.71}_{\pm \text{0.10}}$ & $\text{97.97}_{\pm \text{0.08}}$ & $\text{97.72}_{\pm \text{0.12}}$\\
& $\sigma=10^{-3}$ & $\text{98.41}_{\pm \text{0.07}}$ & $\text{98.23}_{\pm \text{0.03}}$ & $\text{98.00}_{\pm \text{0.07}}$ & $\text{98.94}_{\pm \text{0.04}}$ & $\text{98.50}_{\pm \text{0.06}}$ & $\text{98.01}_{\pm \text{0.10}}$\\
\midrule
\multirow{4.3}{*}{\makecell{\ttfamily CIFAR-10 \\ \ttfamily ResNet-18}} & $\sigma=1.0$ & - & - & - & - & - & - \\
& $\sigma=10^{-1}$ & $\text{53.76}_{\pm \text{0.25}}$ & $\text{53.38}_{\pm \text{0.23}}$ & $\text{53.49}_{\pm \text{0.21}}$ & $\text{62.02}_{\pm \text{0.28}}$ & $\text{61.33}_{\pm \text{0.25}}$ & $\text{61.11}_{\pm \text{0.17}}$\\
& $\sigma=10^{-2}$ & $\text{70.11}_{\pm \text{0.22}}$ & $\text{69.08}_{\pm \text{0.12}}$ & $\text{66.63}_{\pm \text{0.16}}$ & $\text{74.34}_{\pm \text{0.29}}$ & $\text{72.87}_{\pm \text{0.19}}$ & $\text{70.74}_{\pm \text{0.15}}$\\
& $\sigma=10^{-3}$ & $\text{70.98}_{\pm \text{0.11}}$ & $\text{69.81}_{\pm \text{0.20}}$ & $\text{67.98}_{\pm \text{0.03}}$ & $\text{75.38}_{\pm \text{0.19}}$ & $\text{74.44}_{\pm \text{0.12}}$ & $\text{72.11}_{\pm \text{0.06}}$\\
\bottomrule
\end{tabular}}
\end{center}
\end{table*}

\label{Experiments}

\textbf{Setups.} We conduct experiments on MNIST~\citep{lecun1998gradient} and CIFAR-10~\citep{krizhevsky2009learning} with the LeNet-5~\citep{lecun1998gradient} and ResNet-18~\citep{he2016deep} models. We follow the widely used standard federated learning experimental setups to introduce heterogeneity by the Dirichlet splitting. The heterogeneity level is set high~(Dir-$0.1$ splitting). In the following experiments, we follow the classical studies to adopt the stage-wise decaying learning rate for training. We also provide a sensitivity studies of all learning rate schedules in Appendix~\ref{ap:more exp}.

\textbf{Accuracy.} Table~\ref{exp_accuracy} shows the comparison on {\ttfamily Noisy-FedAvg}. Our theory precisely addresses this misconception and rigorously provides its privacy protection performance. It can be observed that as the number of clients increases, the impact of noise gradually diminishes. We have previously explained this principle: for the globally averaged model, the more noise involved in the averaging process, the closer it gets to the noise mean, which is akin to the situation without noise interference. When we adjust the intensity from $\sigma=10^{-3}$ to $10^{-1}$, the accuracy decreases by $5.57\%$ and $1.62\%$ on $m=20$ and $100$ respectively on the MNIST and $14.19\%$ and $11\%$ on the CIFAR-10. The local interval $K$ does not significantly affect noise, and the accuracy drops consistently. $K$ primarily affects global sensitivity and higher aggregation frequency usually means better performance.

\textbf{Sensitivity in {\ttfamily Noisy-FedAvg}.} We mainly study the impact from the scale $m$, local interval $K$, and clipping norm $V$, as shown in Fig.~\ref{sensitivity}. The first figure clearly demonstrates the impact of the scale $m$ on sensitivity, which corresponds to the worst privacy bound $\mathcal{O}\left(\frac{1}{\sqrt{m}}\right)$. More clients generally imply stronger global privacy. The second figure shows evident that although increasing $K$ can raise the sensitivity during the process, it does not alter the upper bound of sensitivity after optimization converges. This is entirely consistent with our analysis, indicating that the privacy lower bound exists and is unaffected by $T$ and $K$. The third figure indicates that the sensitivity will be affected by the $V$, which corresponds to the worst privacy bound $\mathcal{O}\left(V\right)$.

\begin{wraptable}[8]{r}{0.6\textwidth}
    \vspace{-0.65cm}
    \centering
    \caption{Performance and sensitivity ($T=600$).}
    \label{comparsion_avg_prox}
    \begin{tabular}{ccc} 
        \toprule
          & Accuracy & Sensitivity \\  
        \midrule
        {\ttfamily Noisy-FedAvg} & 60.67 & 31.33 \\
        {\ttfamily Noisy-FedProx} $\alpha=0.01$ & 60.69 & 30.97 \\
        {\ttfamily Noisy-FedProx} $\alpha=0.1$ & \textbf{60.94} & \textbf{18.52} \\
        {\ttfamily Noisy-FedProx} $\alpha=1$ & 56.33 & 6.34 \\
        \bottomrule 
    \end{tabular}
\end{wraptable}
\textbf{Sensitivity in {\ttfamily Noisy-FedProx}.} As shown in Fig.~\ref{sensitivity}~(the fourth figure), the larger $\alpha$ means smaller global sensitivity. This is consistent with our analysis, which states that the lower bound of privacy performance is given by $\mathcal{O}\left(\frac{1}{\sqrt{\alpha}}\right)$. When we select $\alpha=0$, it degrades to the {\ttfamily Noisy-FedAvg} method. In fact, based on the comparison, we can see that when $\alpha$ is sufficiently small, i.e. $\alpha=0.01$, its global sensitivity is almost at the same level as {\ttfamily Noisy-FedAvg}. In Table~\ref{comparsion_avg_prox}, we present a comparison between them. Although the proximal term provides limited improvement in accuracy, selecting an appropriate 
$\alpha$ significantly reduces global sensitivity. This implies that the privacy performance of {\ttfamily Noisy-FedProx} is far superior to that of {\ttfamily Noisy-FedAvg}. While achieving similar performance, the regularization proxy term can significantly reduce the global sensitivity of the output model, thereby enhancing privacy. This conclusion also demonstrates the superiority on privacy of a series of FL-DP optimization methods based on training with this regularization.

\begin{figure*}[t]
    \centering
        \includegraphics[width=0.24\textwidth]{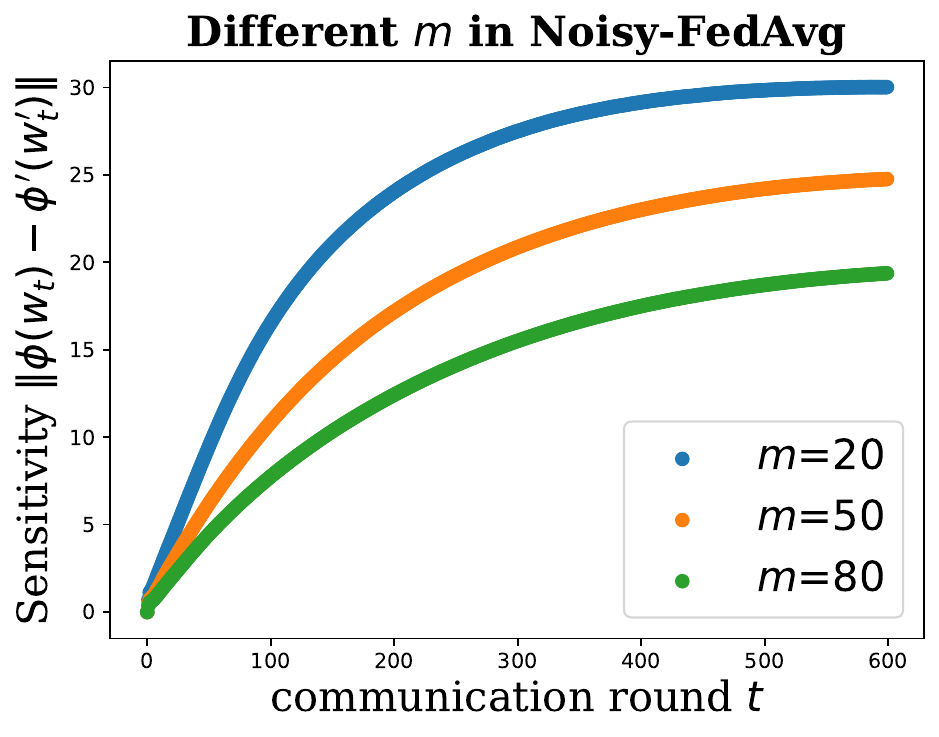}
    \!\!\!\!
        \includegraphics[width=0.24\textwidth]{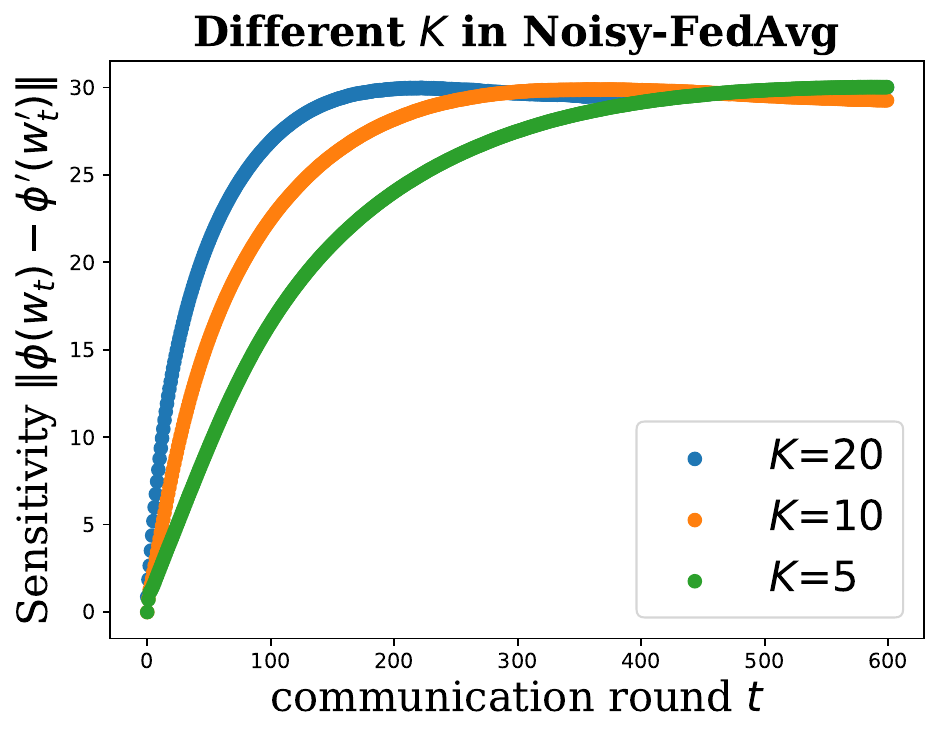}
    \!\!\!\!
        \includegraphics[width=0.24\textwidth]{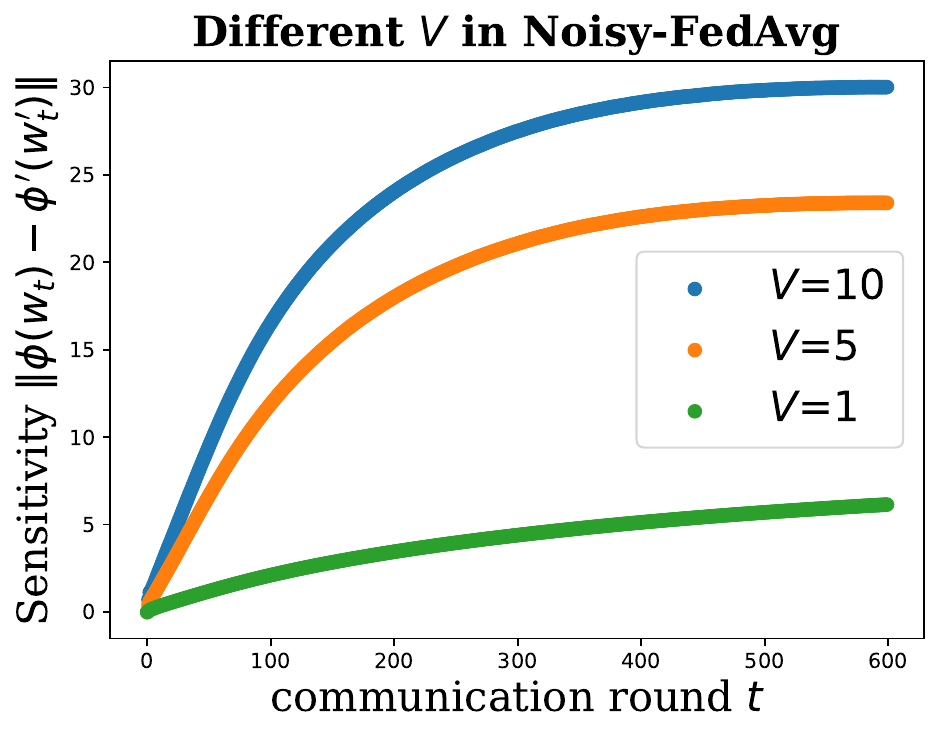}
    \!\!\!\!
        \includegraphics[width=0.24\textwidth]{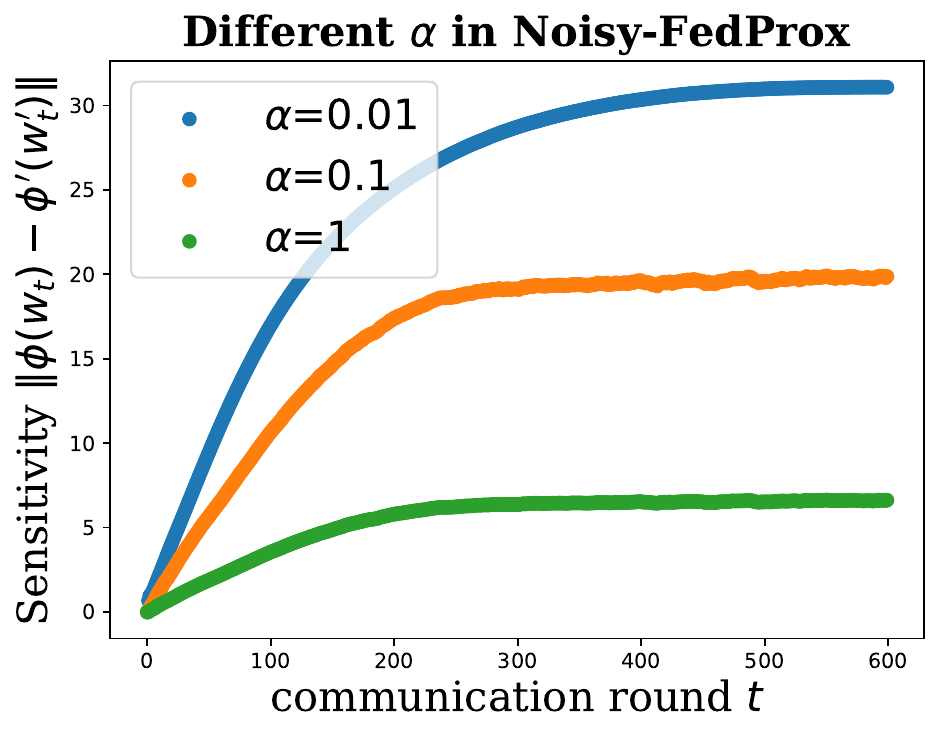}
    \caption{Sensitivity studies on {\ttfamily Noisy-FedAvg} and {\ttfamily Noisy-FedProx}. The general setups are $m=20$, $K=5$, and $V=10$. In each group, we keep all other parameters fixed to ensure fairness.}
    \label{sensitivity}
    \vspace{-0.5cm}
\end{figure*}
\section{Conclusion}
\label{summary}

To our best knowledge, this paper is the first work to demonstrate convergent privacy for the general FL-DP paradigms. We comprehensively study and illustrate the fine-grained privacy level for {\ttfamily Noisy-FedAvg} and {\ttfamily Noisy-FedProx} methods based on $f$-DP analysis, an information-theoretic lossless DP definition. Moreover, we conduct comprehensive analysis with existing work on other DP frameworks and highlight the long-term cognitive bias of the privacy lower bound. Our analysis fills the theoretical gap in the convergent privacy of FL-DP while further providing a reliable theoretical guarantee for its privacy protection performance. Moreover, We conduct a series of experiments to verify the boundedness of global sensitivity and its influence on different variables, further validating that our theoretical analysis aligns more closely with practical scenarios.

\bibliography{iclr2026_conference}

@article{kairouz2021advances,
  title={Advances and open problems in federated learning},
  author={Kairouz, Peter and McMahan, H Brendan and Avent, Brendan and Bellet, Aur{\'e}lien and Bennis, Mehdi and Bhagoji, Arjun Nitin and Bonawitz, Kallista and Charles, Zachary and Cormode, Graham and Cummings, Rachel and others},
  journal={Foundations and trends{\textregistered} in machine learning},
  volume={14},
  number={1--2},
  pages={1--210},
  year={2021},
  publisher={Now Publishers, Inc.}
}

@inproceedings{mcmahan2017communication,
  title={Communication-efficient learning of deep networks from decentralized data},
  author={McMahan, Brendan and Moore, Eider and Ramage, Daniel and Hampson, Seth and y Arcas, Blaise Aguera},
  booktitle={Artificial intelligence and statistics},
  pages={1273--1282},
  year={2017},
  organization={PMLR}
}

@inproceedings{karimireddy2020scaffold,
  title={Scaffold: Stochastic controlled averaging for federated learning},
  author={Karimireddy, Sai Praneeth and Kale, Satyen and Mohri, Mehryar and Reddi, Sashank and Stich, Sebastian and Suresh, Ananda Theertha},
  booktitle={International conference on machine learning},
  pages={5132--5143},
  year={2020},
  organization={PMLR}
}

@article{zhang2021fedpd,
  title={Fedpd: A federated learning framework with adaptivity to non-iid data},
  author={Zhang, Xinwei and Hong, Mingyi and Dhople, Sairaj and Yin, Wotao and Liu, Yang},
  journal={IEEE Transactions on Signal Processing},
  volume={69},
  pages={6055--6070},
  year={2021},
  publisher={IEEE}
}

@article{sun2023fedspeed,
  title={Fedspeed: Larger local interval, less communication round, and higher generalization accuracy},
  author={Sun, Yan and Shen, Li and Huang, Tiansheng and Ding, Liang and Tao, Dacheng},
  journal={arXiv preprint arXiv:2302.10429},
  year={2023}
}

@inproceedings{sun2023dynamic,
  title={Dynamic regularized sharpness aware minimization in federated learning: Approaching global consistency and smooth landscape},
  author={Sun, Yan and Shen, Li and Chen, Shixiang and Ding, Liang and Tao, Dacheng},
  booktitle={International Conference on Machine Learning},
  pages={32991--33013},
  year={2023},
  organization={PMLR}
}

@article{liu2023enhance,
  title={Enhance local consistency in federated learning: A multi-step inertial momentum approach},
  author={Liu, Yixing and Sun, Yan and Ding, Zhengtao and Shen, Li and Liu, Bo and Tao, Dacheng},
  journal={arXiv preprint arXiv:2302.05726},
  year={2023}
}

@article{huang2023fusion,
  title={Fusion of global and local knowledge for personalized federated learning},
  author={Huang, Tiansheng and Shen, Li and Sun, Yan and Lin, Weiwei and Tao, Dacheng},
  journal={arXiv preprint arXiv:2302.11051},
  year={2023}
}

@article{li2023visual,
  title={Visual prompt based personalized federated learning},
  author={Li, Guanghao and Wu, Wansen and Sun, Yan and Shen, Li and Wu, Baoyuan and Tao, Dacheng},
  journal={arXiv preprint arXiv:2303.08678},
  year={2023}
}

@article{sun2024fedpd,
  title={A-FedPD: aligning dual-drift is all federated primal-dual learning needs},
  author={Sun, Yan and Shen, Li and Tao, Dacheng},
  journal={Advances in Neural Information Processing Systems},
  volume={37},
  pages={85742--85777},
  year={2024}
}

@article{sun2025towards,
  title={Towards Understanding Generalization and Stability Gaps between Centralized and Decentralized Federated Learning},
  author={Sun, Yan and Shen, Li and Tao, Dacheng},
  journal={IEEE Transactions on Pattern Analysis and Machine Intelligence},
  year={2025},
  publisher={IEEE}
}

@article{sun2023mode,
  title={Which mode is better for federated learning? centralized or decentralized},
  author={Sun, Yan and Shen, Li and Tao, Dacheng},
  year={2023}
}

@article{dai2023fedgamma,
  title={Fedgamma: Federated learning with global sharpness-aware minimization},
  author={Dai, Rong and Yang, Xun and Sun, Yan and Shen, Li and Tian, Xinmei and Wang, Meng and Zhang, Yongdong},
  journal={IEEE Transactions on Neural Networks and Learning Systems},
  volume={35},
  number={12},
  pages={17479--17492},
  year={2023},
  publisher={IEEE}
}

@article{li2023subspace,
  title={Subspace based federated unlearning},
  author={Li, Guanghao and Shen, Li and Sun, Yan and Hu, Yue and Hu, Han and Tao, Dacheng},
  journal={arXiv preprint arXiv:2302.12448},
  year={2023}
}

@inproceedings{stich2019local,
  title={Local SGD Converges Fast and Communicates Little},
  author={Stich, Sebastian Urban},
  booktitle={ICLR 2019-International Conference on Learning Representations},
  year={2019}
}

@inproceedings{woodworth2020local,
  title={Is local SGD better than minibatch SGD?},
  author={Woodworth, Blake and Patel, Kumar Kshitij and Stich, Sebastian and Dai, Zhen and Bullins, Brian and Mcmahan, Brendan and Shamir, Ohad and Srebro, Nathan},
  booktitle={International Conference on Machine Learning},
  pages={10334--10343},
  year={2020},
  organization={PMLR}
}

@inproceedings{lin2019don,
  title={Don't Use Large Mini-Batches, Use Local SGD},
  author={Lin, Tao and Stich, Sebastian Urban and Patel, Kumar Kshitij and Jaggi, Martin},
  booktitle={Proceedings of the 8th International Conference on Learning Representations},
  year={2019}
}

@inproceedings{gorbunov2021local,
  title={Local sgd: Unified theory and new efficient methods},
  author={Gorbunov, Eduard and Hanzely, Filip and Richt{\'a}rik, Peter},
  booktitle={International Conference on Artificial Intelligence and Statistics},
  pages={3556--3564},
  year={2021},
  organization={PMLR}
}

@inproceedings{wang2020federated,
  title={Federated latent dirichlet allocation: A local differential privacy based framework},
  author={Wang, Yansheng and Tong, Yongxin and Shi, Dingyuan},
  booktitle={Proceedings of the AAAI Conference on Artificial Intelligence},
  volume={34},
  number={04},
  pages={6283--6290},
  year={2020}
}

@article{chen2021communication,
  title={Communication-efficient federated learning},
  author={Chen, Mingzhe and Shlezinger, Nir and Poor, H Vincent and Eldar, Yonina C and Cui, Shuguang},
  journal={Proceedings of the National Academy of Sciences},
  volume={118},
  number={17},
  pages={e2024789118},
  year={2021},
  publisher={National Acad Sciences}
}

@inproceedings{mendieta2022local,
  title={Local learning matters: Rethinking data heterogeneity in federated learning},
  author={Mendieta, Matias and Yang, Taojiannan and Wang, Pu and Lee, Minwoo and Ding, Zhengming and Chen, Chen},
  booktitle={Proceedings of the IEEE/CVF Conference on Computer Vision and Pattern Recognition},
  pages={8397--8406},
  year={2022}
}

@article{malinovsky2022variance,
  title={Variance reduced proxskip: Algorithm, theory and application to federated learning},
  author={Malinovsky, Grigory and Yi, Kai and Richt{\'a}rik, Peter},
  journal={Advances in Neural Information Processing Systems},
  volume={35},
  pages={15176--15189},
  year={2022}
}

@inproceedings{li2023effectiveness,
  title={On the effectiveness of partial variance reduction in federated learning with heterogeneous data},
  author={Li, Bo and Schmidt, Mikkel N and Alstr{\o}m, Tommy S and Stich, Sebastian U},
  booktitle={Proceedings of the IEEE/CVF Conference on Computer Vision and Pattern Recognition},
  pages={3964--3973},
  year={2023}
}

@inproceedings{jhunjhunwala2022fedvarp,
  title={Fedvarp: Tackling the variance due to partial client participation in federated learning},
  author={Jhunjhunwala, Divyansh and Sharma, Pranay and Nagarkatti, Aushim and Joshi, Gauri},
  booktitle={Uncertainty in Artificial Intelligence},
  pages={906--916},
  year={2022},
  organization={PMLR}
}

@inproceedings{wang2022fedadmm,
  title={Fedadmm: A federated primal-dual algorithm allowing partial participation},
  author={Wang, Han and Marella, Siddartha and Anderson, James},
  booktitle={2022 IEEE 61st Conference on Decision and Control (CDC)},
  pages={287--294},
  year={2022},
  organization={IEEE}
}

@inproceedings{acarfederated,
  title={Federated Learning Based on Dynamic Regularization},
  author={Acar, Durmus Alp Emre and Zhao, Yue and Matas, Ramon and Mattina, Matthew and Whatmough, Paul and Saligrama, Venkatesh},
  booktitle={International Conference on Learning Representations}
}

@article{liu2020accelerating,
  title={Accelerating federated learning via momentum gradient descent},
  author={Liu, Wei and Chen, Li and Chen, Yunfei and Zhang, Wenyi},
  journal={IEEE Transactions on Parallel and Distributed Systems},
  volume={31},
  number={8},
  pages={1754--1766},
  year={2020},
  publisher={IEEE}
}

@article{khanduri2021stem,
  title={Stem: A stochastic two-sided momentum algorithm achieving near-optimal sample and communication complexities for federated learning},
  author={Khanduri, Prashant and Sharma, Pranay and Yang, Haibo and Hong, Mingyi and Liu, Jia and Rajawat, Ketan and Varshney, Pramod},
  journal={Advances in Neural Information Processing Systems},
  volume={34},
  pages={6050--6061},
  year={2021}
}

@inproceedings{das2022faster,
  title={Faster non-convex federated learning via global and local momentum},
  author={Das, Rudrajit and Acharya, Anish and Hashemi, Abolfazl and Sanghavi, Sujay and Dhillon, Inderjit S and Topcu, Ufuk},
  booktitle={Uncertainty in Artificial Intelligence},
  pages={496--506},
  year={2022},
  organization={PMLR}
}

@inproceedings{grudzien2023can,
  title={Can 5th generation local training methods support client sampling? yes!},
  author={Grudzie{\'n}, Micha{\l} and Malinovsky, Grigory and Richt{\'a}rik, Peter},
  booktitle={International Conference on Artificial Intelligence and Statistics},
  pages={1055--1092},
  year={2023},
  organization={PMLR}
}

@inproceedings{mishchenko2022proxskip,
  title={Proxskip: Yes! local gradient steps provably lead to communication acceleration! finally!},
  author={Mishchenko, Konstantin and Malinovsky, Grigory and Stich, Sebastian and Richt{\'a}rik, Peter},
  booktitle={International Conference on Machine Learning},
  pages={15750--15769},
  year={2022},
  organization={PMLR}
}

@inproceedings{reisizadeh2020fedpaq,
  title={Fedpaq: A communication-efficient federated learning method with periodic averaging and quantization},
  author={Reisizadeh, Amirhossein and Mokhtari, Aryan and Hassani, Hamed and Jadbabaie, Ali and Pedarsani, Ramtin},
  booktitle={International conference on artificial intelligence and statistics},
  pages={2021--2031},
  year={2020},
  organization={PMLR}
}

@article{shlezinger2020uveqfed,
  title={UVeQFed: Universal vector quantization for federated learning},
  author={Shlezinger, Nir and Chen, Mingzhe and Eldar, Yonina C and Poor, H Vincent and Cui, Shuguang},
  journal={IEEE Transactions on Signal Processing},
  volume={69},
  pages={500--514},
  year={2020},
  publisher={IEEE}
}

@inproceedings{dai2022dispfl,
  title={Dispfl: Towards communication-efficient personalized federated learning via decentralized sparse training},
  author={Dai, Rong and Shen, Li and He, Fengxiang and Tian, Xinmei and Tao, Dacheng},
  booktitle={International conference on machine learning},
  pages={4587--4604},
  year={2022},
  organization={PMLR}
}

@article{sun2023efficient,
  title={Efficient Federated Learning Via Local Adaptive Amended Optimizer With Linear Speedup},
  author={Sun, Yan and Shen, Li and Sun, Hao and Ding, Liang and Tao, Dacheng},
  journal={IEEE Transactions on Pattern Analysis and Machine Intelligence},
  volume={45},
  number={12},
  pages={14453--14464},
  year={2023},
  publisher={IEEE}
}

@inproceedings{yao2019towards,
  title={Towards faster and better federated learning: A feature fusion approach},
  author={Yao, Xin and Huang, Tianchi and Wu, Chenglei and Zhang, Ruixiao and Sun, Lifeng},
  booktitle={2019 IEEE International Conference on Image Processing (ICIP)},
  pages={175--179},
  year={2019},
  organization={IEEE}
}

@inproceedings{zhang2021federated,
  title={Federated learning for non-iid data via unified feature learning and optimization objective alignment},
  author={Zhang, Lin and Luo, Yong and Bai, Yan and Du, Bo and Duan, Ling-Yu},
  booktitle={Proceedings of the IEEE/CVF international conference on computer vision},
  pages={4420--4428},
  year={2021}
}

@inproceedings{xupersonalized,
  title={Personalized Federated Learning with Feature Alignment and Classifier Collaboration},
  author={Xu, Jian and Tong, Xinyi and Huang, Shao-Lun},
  booktitle={The Eleventh International Conference on Learning Representations}
}

@article{sun2024understanding,
  title={Understanding how consistency works in federated learning via stage-wise relaxed initialization},
  author={Sun, Yan and Shen, Li and Tao, Dacheng},
  journal={Advances in Neural Information Processing Systems},
  volume={36},
  year={2024}
}

@article{geiping2020inverting,
  title={Inverting gradients-how easy is it to break privacy in federated learning?},
  author={Geiping, Jonas and Bauermeister, Hartmut and Dr{\"o}ge, Hannah and Moeller, Michael},
  journal={Advances in neural information processing systems},
  volume={33},
  pages={16937--16947},
  year={2020}
}

@inproceedings{nasr2019comprehensive,
  title={Comprehensive privacy analysis of deep learning: Passive and active white-box inference attacks against centralized and federated learning},
  author={Nasr, Milad and Shokri, Reza and Houmansadr, Amir},
  booktitle={2019 IEEE symposium on security and privacy (SP)},
  pages={739--753},
  year={2019},
  organization={IEEE}
}

@article{chourasia2021differential,
  title={Differential privacy dynamics of langevin diffusion and noisy gradient descent},
  author={Chourasia, Rishav and Ye, Jiayuan and Shokri, Reza},
  journal={Advances in Neural Information Processing Systems},
  volume={34},
  pages={14771--14781},
  year={2021}
}

@article{ye2022differentially,
  title={Differentially private learning needs hidden state (or much faster convergence)},
  author={Ye, Jiayuan and Shokri, Reza},
  journal={Advances in Neural Information Processing Systems},
  volume={35},
  pages={703--715},
  year={2022}
}

@article{altschuler2022privacy,
  title={Privacy of noisy stochastic gradient descent: More iterations without more privacy loss},
  author={Altschuler, Jason and Talwar, Kunal},
  journal={Advances in Neural Information Processing Systems},
  volume={35},
  pages={3788--3800},
  year={2022}
}

@article{altschuler2024privacy,
  title={On the Privacy of Noisy Stochastic Gradient Descent for Convex Optimization},
  author={Altschuler, Jason M and Bok, Jinho and Talwar, Kunal},
  journal={SIAM Journal on Computing},
  volume={53},
  number={4},
  pages={969--1001},
  year={2024},
  publisher={SIAM}
}

@inproceedings{dwork2006calibrating,
  title={Calibrating noise to sensitivity in private data analysis},
  author={Dwork, Cynthia and McSherry, Frank and Nissim, Kobbi and Smith, Adam},
  booktitle={Theory of Cryptography: Third Theory of Cryptography Conference, TCC 2006, New York, NY, USA, March 4-7, 2006. Proceedings 3},
  pages={265--284},
  year={2006},
  organization={Springer}
}

@inproceedings{dwork2006our,
  title={Our data, ourselves: Privacy via distributed noise generation},
  author={Dwork, Cynthia and Kenthapadi, Krishnaram and McSherry, Frank and Mironov, Ilya and Naor, Moni},
  booktitle={Advances in Cryptology-EUROCRYPT 2006: 24th Annual International Conference on the Theory and Applications of Cryptographic Techniques, St. Petersburg, Russia, May 28-June 1, 2006. Proceedings 25},
  pages={486--503},
  year={2006},
  organization={Springer}
}

@inproceedings{dwork2006differential,
  title={Differential privacy},
  author={Dwork, Cynthia},
  booktitle={International colloquium on automata, languages, and programming},
  pages={1--12},
  year={2006},
  organization={Springer}
}

@article{dwork2014algorithmic,
  title={The algorithmic foundations of differential privacy},
  author={Dwork, Cynthia and Roth, Aaron and others},
  journal={Foundations and Trends{\textregistered} in Theoretical Computer Science},
  volume={9},
  number={3--4},
  pages={211--407},
  year={2014},
  publisher={Now Publishers, Inc.}
}

@inproceedings{abadi2016deep,
  title={Deep learning with differential privacy},
  author={Abadi, Martin and Chu, Andy and Goodfellow, Ian and McMahan, H Brendan and Mironov, Ilya and Talwar, Kunal and Zhang, Li},
  booktitle={Proceedings of the 2016 ACM SIGSAC conference on computer and communications security},
  pages={308--318},
  year={2016}
}

@inproceedings{dwork2015robust,
  title={Robust traceability from trace amounts},
  author={Dwork, Cynthia and Smith, Adam and Steinke, Thomas and Ullman, Jonathan and Vadhan, Salil},
  booktitle={2015 IEEE 56th Annual Symposium on Foundations of Computer Science},
  pages={650--669},
  year={2015},
  organization={IEEE}
}

@inproceedings{mironov2017renyi,
  title={R{\'e}nyi differential privacy},
  author={Mironov, Ilya},
  booktitle={2017 IEEE 30th computer security foundations symposium (CSF)},
  pages={263--275},
  year={2017},
  organization={IEEE}
}

@inproceedings{wu2020noisy,
  title={On the noisy gradient descent that generalizes as sgd},
  author={Wu, Jingfeng and Hu, Wenqing and Xiong, Haoyi and Huan, Jun and Braverman, Vladimir and Zhu, Zhanxing},
  booktitle={International Conference on Machine Learning},
  pages={10367--10376},
  year={2020},
  organization={PMLR}
}

@article{zhao2019differential,
  title={Differential privacy preservation in deep learning: Challenges, opportunities and solutions},
  author={Zhao, Jingwen and Chen, Yunfang and Zhang, Wei},
  journal={IEEE Access},
  volume={7},
  pages={48901--48911},
  year={2019},
  publisher={IEEE}
}

@article{arachchige2019local,
  title={Local differential privacy for deep learning},
  author={Arachchige, Pathum Chamikara Mahawaga and Bertok, Peter and Khalil, Ibrahim and Liu, Dongxi and Camtepe, Seyit and Atiquzzaman, Mohammed},
  journal={IEEE Internet of Things Journal},
  volume={7},
  number={7},
  pages={5827--5842},
  year={2019},
  publisher={IEEE}
}

@article{geyer2017differentially,
  title={Differentially private federated learning: A client level perspective},
  author={Geyer, Robin C and Klein, Tassilo and Nabi, Moin},
  journal={arXiv preprint arXiv:1712.07557},
  year={2017}
}

@article{zhu2021fine,
  title={A fine-grained differentially private federated learning against leakage from gradients},
  author={Zhu, Linghui and Liu, Xinyi and Li, Yiming and Yang, Xue and Xia, Shu-Tao and Lu, Rongxing},
  journal={IEEE Internet of Things Journal},
  volume={9},
  number={13},
  pages={11500--11512},
  year={2021},
  publisher={IEEE}
}

@inproceedings{lowy2023private,
  title={Private non-convex federated learning without a trusted server},
  author={Lowy, Andrew and Ghafelebashi, Ali and Razaviyayn, Meisam},
  booktitle={International Conference on Artificial Intelligence and Statistics},
  pages={5749--5786},
  year={2023},
  organization={PMLR}
}

@article{yang2023federated,
  title={A federated learning differential privacy algorithm for non-Gaussian heterogeneous data},
  author={Yang, Xinyu and Wu, Weisan},
  journal={Scientific Reports},
  volume={13},
  number={1},
  pages={5819},
  year={2023},
  publisher={Nature Publishing Group UK London}
}

@article{hu2020personalized,
  title={Personalized federated learning with differential privacy},
  author={Hu, Rui and Guo, Yuanxiong and Li, Hongning and Pei, Qingqi and Gong, Yanmin},
  journal={IEEE Internet of Things Journal},
  volume={7},
  number={10},
  pages={9530--9539},
  year={2020},
  publisher={IEEE}
}

@article{wei2023personalized,
  title={Personalized federated learning with differential privacy and convergence guarantee},
  author={Wei, Kang and Li, Jun and Ma, Chuan and Ding, Ming and Chen, Wen and Wu, Jun and Tao, Meixia and Poor, H Vincent},
  journal={IEEE Transactions on Information Forensics and Security},
  year={2023},
  publisher={IEEE}
}

@inproceedings{yang2021federated,
  title={Federated learning with personalized local differential privacy},
  author={Yang, Ge and Wang, Shaowei and Wang, Haijie},
  booktitle={2021 IEEE 6th International Conference on Computer and Communication Systems (ICCCS)},
  pages={484--489},
  year={2021},
  organization={IEEE}
}

@article{yang2023dynamic,
  title={Dynamic personalized federated learning with adaptive differential privacy},
  author={Yang, Xiyuan and Huang, Wenke and Ye, Mang},
  journal={Advances in Neural Information Processing Systems},
  volume={36},
  pages={72181--72192},
  year={2023}
}

@article{chen2022decentralized,
  title={Decentralized wireless federated learning with differential privacy},
  author={Chen, Shuzhen and Yu, Dongxiao and Zou, Yifei and Yu, Jiguo and Cheng, Xiuzhen},
  journal={IEEE Transactions on Industrial Informatics},
  volume={18},
  number={9},
  pages={6273--6282},
  year={2022},
  publisher={IEEE}
}

@inproceedings{wittkopp2020decentralized,
  title={Decentralized federated learning preserves model and data privacy},
  author={Wittkopp, Thorsten and Acker, Alexander},
  booktitle={International Conference on Service-Oriented Computing},
  pages={176--187},
  year={2020},
  organization={Springer}
}

@article{gao2023privacy,
  title={Privacy-preserving and reliable decentralized federated learning},
  author={Gao, Yuanyuan and Zhang, Lei and Wang, Lulu and Choo, Kim-Kwang Raymond and Zhang, Rui},
  journal={IEEE Transactions on Services Computing},
  volume={16},
  number={4},
  pages={2879--2891},
  year={2023},
  publisher={IEEE}
}

@inproceedings{shi2023make,
  title={Make landscape flatter in differentially private federated learning},
  author={Shi, Yifan and Liu, Yingqi and Wei, Kang and Shen, Li and Wang, Xueqian and Tao, Dacheng},
  booktitle={Proceedings of the IEEE/CVF Conference on Computer Vision and Pattern Recognition},
  pages={24552--24562},
  year={2023}
}

@inproceedings{girgis2021shuffled,
  title={Shuffled model of differential privacy in federated learning},
  author={Girgis, Antonious and Data, Deepesh and Diggavi, Suhas and Kairouz, Peter and Suresh, Ananda Theertha},
  booktitle={International Conference on Artificial Intelligence and Statistics},
  pages={2521--2529},
  year={2021},
  organization={PMLR}
}

@article{wu2022adaptive,
  title={An adaptive federated learning scheme with differential privacy preserving},
  author={Wu, Xiang and Zhang, Yongting and Shi, Minyu and Li, Pei and Li, Ruirui and Xiong, Neal N},
  journal={Future Generation Computer Systems},
  volume={127},
  pages={362--372},
  year={2022},
  publisher={Elsevier}
}

@inproceedings{zheng2021federated,
  title={Federated f-differential privacy},
  author={Zheng, Qinqing and Chen, Shuxiao and Long, Qi and Su, Weijie},
  booktitle={International conference on artificial intelligence and statistics},
  pages={2251--2259},
  year={2021},
  organization={PMLR}
}

@article{rodriguez2020federated,
  title={Federated Learning and Differential Privacy: Software tools analysis, the Sherpa. ai FL framework and methodological guidelines for preserving data privacy},
  author={Rodr{\'\i}guez-Barroso, Nuria and Stipcich, Goran and Jim{\'e}nez-L{\'o}pez, Daniel and Ruiz-Mill{\'a}n, Jos{\'e} Antonio and Mart{\'\i}nez-C{\'a}mara, Eugenio and Gonz{\'a}lez-Seco, Gerardo and Luz{\'o}n, M Victoria and Veganzones, Miguel Angel and Herrera, Francisco},
  journal={Information Fusion},
  volume={64},
  pages={270--292},
  year={2020},
  publisher={Elsevier}
}

@article{wei2021user,
  title={User-level privacy-preserving federated learning: Analysis and performance optimization},
  author={Wei, Kang and Li, Jun and Ding, Ming and Ma, Chuan and Su, Hang and Zhang, Bo and Poor, H Vincent},
  journal={IEEE Transactions on Mobile Computing},
  volume={21},
  number={9},
  pages={3388--3401},
  year={2021},
  publisher={IEEE}
}

@inproceedings{kim2021federated,
  title={Federated learning with local differential privacy: Trade-offs between privacy, utility, and communication},
  author={Kim, Muah and G{\"u}nl{\"u}, Onur and Schaefer, Rafael F},
  booktitle={ICASSP 2021-2021 IEEE International Conference on Acoustics, Speech and Signal Processing (ICASSP)},
  pages={2650--2654},
  year={2021},
  organization={IEEE}
}

@article{ling2024efficient,
  title={Efficient federated learning privacy preservation method with heterogeneous differential privacy},
  author={Ling, Jie and Zheng, Junchang and Chen, Jiahui},
  journal={Computers \& Security},
  volume={139},
  pages={103715},
  year={2024},
  publisher={Elsevier}
}

@article{jiao2024differential,
  title={A Differential Privacy Federated Learning Scheme Based on Adaptive Gaussian Noise.},
  author={Jiao, Sanxiu and Cai, Lecai and Wang, Xinjie and Cheng, Kui and Gao, Xiang},
  journal={CMES-Computer Modeling in Engineering \& Sciences},
  volume={138},
  number={2},
  year={2024}
}

@article{he2023clustered,
  title={Clustered federated learning with adaptive local differential privacy on heterogeneous iot data},
  author={He, Zaobo and Wang, Lintao and Cai, Zhipeng},
  journal={IEEE Internet of Things Journal},
  year={2023},
  publisher={IEEE}
}

@article{dong2022gaussian,
  title={Gaussian differential privacy},
  author={Dong, Jinshuo and Roth, Aaron and Su, Weijie J},
  journal={Journal of the Royal Statistical Society: Series B (Statistical Methodology)},
  volume={84},
  number={1},
  pages={3--37},
  year={2022},
  publisher={Wiley Online Library}
}

@article{bok2024shifted,
  title={Shifted Interpolation for Differential Privacy},
  author={Bok, Jinho and Su, Weijie and Altschuler, Jason M},
  journal={arXiv preprint arXiv:2403.00278},
  year={2024}
}

@article{wasserman2010statistical,
  title={A statistical framework for differential privacy},
  author={Wasserman, Larry and Zhou, Shuheng},
  journal={Journal of the American Statistical Association},
  volume={105},
  number={489},
  pages={375--389},
  year={2010},
  publisher={Taylor \& Francis}
}

@inproceedings{kairouz2015composition,
  title={The composition theorem for differential privacy},
  author={Kairouz, Peter and Oh, Sewoong and Viswanath, Pramod},
  booktitle={International conference on machine learning},
  pages={1376--1385},
  year={2015},
  organization={PMLR}
}

@article{wei2020federated,
  title={Federated learning with differential privacy: Algorithms and performance analysis},
  author={Wei, Kang and Li, Jun and Ding, Ming and Ma, Chuan and Yang, Howard H and Farokhi, Farhad and Jin, Shi and Quek, Tony QS and Poor, H Vincent},
  journal={IEEE transactions on information forensics and security},
  volume={15},
  pages={3454--3469},
  year={2020},
  publisher={IEEE}
}

@inproceedings{shi2021hfl,
  title={HFL-DP: Hierarchical federated learning with differential privacy},
  author={Shi, Lu and Shu, Jiangang and Zhang, Weizhe and Liu, Yang},
  booktitle={2021 IEEE Global Communications Conference (GLOBECOM)},
  pages={1--7},
  year={2021},
  organization={IEEE}
}

@article{zhang2021faithful,
  title={Faithful edge federated learning: Scalability and privacy},
  author={Zhang, Meng and Wei, Ermin and Berry, Randall},
  journal={IEEE Journal on Selected Areas in Communications},
  volume={39},
  number={12},
  pages={3790--3804},
  year={2021},
  publisher={IEEE}
}

@inproceedings{noble2022differentially,
  title={Differentially private federated learning on heterogeneous data},
  author={Noble, Maxence and Bellet, Aur{\'e}lien and Dieuleveut, Aymeric},
  booktitle={International Conference on Artificial Intelligence and Statistics},
  pages={10110--10145},
  year={2022},
  organization={PMLR}
}

@inproceedings{cheng2022differentially,
  title={Differentially private federated learning with local regularization and sparsification},
  author={Cheng, Anda and Wang, Peisong and Zhang, Xi Sheryl and Cheng, Jian},
  booktitle={Proceedings of the IEEE/CVF conference on computer vision and pattern recognition},
  pages={10122--10131},
  year={2022}
}

@inproceedings{zhang2022differential,
  title={A differential privacy federated learning framework for accelerating convergence},
  author={Zhang, Yaling and Tang, Dongtai},
  booktitle={2022 18th International Conference on Computational Intelligence and Security (CIS)},
  pages={122--126},
  year={2022},
  organization={IEEE}
}

@article{hu2023federated,
  title={Federated learning with sparsified model perturbation: Improving accuracy under client-level differential privacy},
  author={Hu, Rui and Guo, Yuanxiong and Gong, Yanmin},
  journal={IEEE Transactions on Mobile Computing},
  year={2023},
  publisher={IEEE}
}

@article{fukami2024dp,
  title={DP-Norm: Differential Privacy Primal-Dual Algorithm for Decentralized Federated Learning},
  author={Fukami, Takumi and Murata, Tomoya and Niwa, Kenta and Tyou, Iifan},
  journal={IEEE Transactions on Information Forensics and Security},
  year={2024},
  publisher={IEEE}
}

@article{bastianello2024enhancing,
  title={Enhancing Privacy in Federated Learning through Local Training},
  author={Bastianello, Nicola and Liu, Changxin and Johansson, Karl H},
  journal={arXiv preprint arXiv:2403.17572},
  year={2024}
}

@article{krizhevsky2009learning,
  title={Learning multiple layers of features from tiny images},
  author={Krizhevsky, Alex and Hinton, Geoffrey and others},
  year={2009},
  publisher={Toronto, ON, Canada}
}

@article{lecun1998gradient,
  title={Gradient-based learning applied to document recognition},
  author={LeCun, Yann and Bottou, L{\'e}on and Bengio, Yoshua and Haffner, Patrick},
  journal={Proceedings of the IEEE},
  volume={86},
  number={11},
  pages={2278--2324},
  year={1998},
  publisher={Ieee}
}

@inproceedings{he2016deep,
  title={Deep residual learning for image recognition},
  author={He, Kaiming and Zhang, Xiangyu and Ren, Shaoqing and Sun, Jian},
  booktitle={Proceedings of the IEEE conference on computer vision and pattern recognition},
  pages={770--778},
  year={2016}
}

@inproceedings{arevalo2024task,
  title={Task-agnostic privacy-preserving representation learning for federated learning against attribute inference attacks},
  author={Arevalo, Caridad Arroyo and Noorbakhsh, Sayedeh Leila and Dong, Yun and Hong, Yuan and Wang, Binghui},
  booktitle={Proceedings of the AAAI Conference on Artificial Intelligence},
  volume={38},
  number={10},
  pages={10909--10917},
  year={2024}
}
\bibliographystyle{iclr2026_conference}

\newpage
\appendix

\textbf{Statement of Using LLMs.} Large language models (LLMs) were occasionally employed as writing aids in the preparation of this manuscript. Their use was restricted to detecting minor typographical errors and refining a small number of lengthy sentences for improved clarity. Beyond these limited writing-related adjustments, LLMs played no role in the research design, implementation, or analysis.

\textbf{Statement of Ethical Concerns.} This paper is contributed to theoretical exploration and validation experiments conducted on publicly available datasets and models, and therefore does not involve any ethical concerns.

\section{Notations}
\label{ap:notation}
In the subsequent content, we use italics for scalars and denote the integer set from $1$ to $a$ by $\left[a\right]$. All sequences of variables are represented in subscript, e.g. $w_{i,k,t}$. For arithmetic operators, unless specifically stated otherwise, the calculations are performed element-wise. Other symbols used in this paper will be explicitly defined when they are first introduced. We introduce a complete description in the following Table.
\begin{table}[H]
\centering
\caption{Notations descriptions.}
\label{ap:notations_table}
\vspace{0.05cm}
\begin{tabular}{ll}
\toprule
 $T$ & Communication Round \\
 $K$ & Local Interval \\
 $m$ & Number of Clients \\ 
 $\sigma$ & Noise Level \\
 $V$ & Gradient Clipping \\
 $L$ & Lipschitz Constant \\
 $w$ & Parameters \\
 $g$ & Gradients \\
 $n$ & Noise \\
 $\eta$ & Learning Rates \\
 $\alpha$ & Proxy Coefficient \\
 $\lambda$ & Interpolation Coefficient \\
 $\mathcal{S}$ & Local Dataset \\
 $\mathcal{C}$ & Client Union \\
 $\mathcal{M}$ & Training Mechanism \\
 $\mathcal{N}$ & Gaussian Distribution \\
 $f(\cdot)$ & Optimization Objective \\
 $\phi(\cdot)$ & Local Training \\
 $T(\cdot;\cdot)$ & Trade-off Function \\
 $T_G(\cdot)$ & Gaussian DP Trade-off Function \\
 $\Phi(\cdot)$ & Gaussian CDF \\
\bottomrule
\end{tabular}
\end{table}

\section{General FL-DP Framework}
\label{ap:fl_dp}

FL framework usually allows local clients to train several iterations and then
aggregates these optimized local models for global consistency guarantees.
Though indirect access to the dataset significantly mitigates the risk of data leakage, vanilla gradients or parameters communicated to the server still bring privacy concerns, i.e. indirect leakage. Thus, DP techniques are introduced by adding isotropic noises on local parameters before communication, to further enhance privacy protection.

\begin{algorithm}[H]
	\renewcommand{\algorithmicrequire}{\textbf{Input:}}
	\renewcommand{\algorithmicensure}{\textbf{Output:}}
	\caption{General FL-DP Framework}
	\begin{algorithmic}[1]
		\REQUIRE initial parameters $w_0$, round $T$, interval $K$
		\ENSURE global parameters $w_{T}$
		\FOR{$t = 0, 1, 2, \cdots, T-1$}
		\STATE activate local clients and communications
		\FOR {client $i \in \mathcal{I}$ in parallel}
		\STATE set the initialization $w_{i,0,t}=w_t$
		\FOR{$k = 0, 1, 2, \cdots, K-1$}
		\STATE $w_{i,k+1,t} =  \textit{L-update}(w_{i,k,t})$
		\ENDFOR
        \STATE generate a noise $n_{i}\sim\mathcal{N}(0,\sigma^2I_d)$
		\STATE communicate $w_{i,K,t} + n_{i}$ to the server
		\ENDFOR
        \STATE $w_{t+1} = \textit{G-update}(\left\{w_{i,K,t} + n_{i}\right\})$
		\ENDFOR
	\end{algorithmic}
	\label{algorithm}
\end{algorithm}

In our analysis, we consider the FL-DP framework with the classical normal client-level noises, as shown in Algorithm~\ref{algorithm}. At the beginning of each communication round $t$, the server activates local clients and communicates necessary variables. Then local clients begin the training in parallel. We describe this process as a total of $K>1$ steps of \textit{L-update} function updates. Depending on algorithm designs, the specific form of local update functions varies. After training, the local clients enhance local privacy by adding noise perturbations to the uploaded model parameters. Our analysis primarily considers the properties of the isotropic Gaussian noise distribution, i.e. $n_i\sim\mathcal{N}(0,\sigma^2I_d)$. Then the global server aggregates the noisy parameters to generate the global model $w_{t+1}$ via the \textit{G-update} function. Repeat this for $T$ rounds and return $w_T$ as output.

\section{More Experiments Validation}
\label{ap:more exp}
In this section, we present additional experimental validations, including larger client scales and different neural network architectures, to further substantiate our analysis.

Table~\ref{ap:more} summarizes additional results under different noise intensities, client scales, and local iteration lengths. Overall, when the noise level is very large, the model fails to converge, showing the detrimental effect of excessive perturbation. As the noise weakens, performance improves steadily, confirming the expected trade-off between privacy and accuracy. Increasing the client scale consistently leads to higher accuracy, since more participants help average out the injected noise and stabilize training. In contrast, enlarging the local iteration length tends to slightly degrade performance, especially under stronger noise, as longer updates accumulate errors. These findings further support our theoretical claims: moderate noise is essential for balancing utility and privacy, larger client populations enhance robustness, and overly strong noise inevitably causes learning to fail. These observations highlight several important insights. First, the results demonstrate that privacy-preserving noise must be carefully calibrated: too strong a perturbation eliminates useful signal, while moderate levels allow training to proceed effectively. Second, the consistent benefit of larger client scales indicates that federated participation not only improves model generalization but also plays a role in mitigating the variance introduced by noise. Finally, the relatively minor but noticeable impact of longer local iterations suggests a delicate balance between communication efficiency and robustness to noise. Together, these findings provide empirical evidence that supports the theoretical analysis in the main text, and further confirm the scalability and stability of our proposed approach under diverse settings.

\textbf{Results on ViT-Small.} We conducted tests on ViT-Small, and the results show a similar trend to ResNet-18, although the sensitivity is higher, the convergence behavior remains largely consistent. We use the $m=50$, $K=5$, and $V=10$. To ensure satisfactory convergence of the ViT-Small model, we increased the number of training steps to $T=1000$.

\begin{table}[H]
\centering
\caption{Results on ViT-Small.}
\label{tab:vit_small}
\vspace{0.05cm}
\begin{tabular}{lccccc}
\toprule
  & $T=200$ & $T=400$ & $T=600$ & $T=800$ & $T=1000$ \\
\midrule
ResNet-18 & 16.94 & 23.13 & 24.04 & 24.11 & 24.17 \\
ViT-Small & 23.52 & 31.37 & 35.66 & 36.32 & 36.58 \\
\bottomrule
\end{tabular}
\end{table}

Sensitivity of ResNet-18 converges after approximately 600 steps, while that of ViT-Small converges after 800 steps. Table~\ref{tab:vit_small} compares the sensitivity curves of ResNet-18 and ViT-Small under different training steps. We observe that the sensitivity of ResNet-18 stabilizes after roughly 600 iterations, while ViT-Small requires about 800 iterations to reach convergence. This indicates that larger and more expressive models generally take longer to stabilize, as their higher capacity introduces additional variance in the early stage of training. Nevertheless, although the convergence point is delayed for ViT-Small, the eventual sensitivity magnitude does not exceed that of ResNet-18, which confirms that our theoretical stability upper bound remains unchanged regardless of model size. These results suggest that scaling up the model primarily affects the rate of convergence but not the asymptotic stability guarantee, thereby validating the robustness of our analysis in both CNN- and Transformer-based architectures.

\begin{table*}[t]
\begin{center}
\vspace{-0.1cm}
\caption{More experiments (hyperparameters are fixed as the same selection with Table.~\ref{exp_accuracy}).}
\label{ap:more}
\vspace{0.05cm}
\small
\setlength{\tabcolsep}{1.5mm}{\begin{tabular}{@{}c|c|cccccc@{}}
\toprule
\multicolumn{1}{c}{} & \multirow{2.5}{*}{\makecell{Noisy\\Intensity}} & \multicolumn{3}{c}{$m=20$} & \multicolumn{3}{c}{$m=500$} \\ 
\cmidrule(lr){3-5} \cmidrule(lr){6-8} 
\multicolumn{1}{c}{} & & \multicolumn{1}{c}{$K=50$} & \multicolumn{1}{c}{$K=100$} & \multicolumn{1}{c}{$K=200$} & \multicolumn{1}{c}{$K=50$} & \multicolumn{1}{c}{$K=100$} & \multicolumn{1}{c}{$K=200$} \\
\cmidrule(lr){1-1} \cmidrule(lr){2-8}
\multirow{4.3}{*}{\makecell{\ttfamily CIFAR-10 \\ \ttfamily ResNet-18}} & $\sigma=1.0$ & - & - & - & - & - & - \\
& $\sigma=10^{-1}$ & $\text{42.69}_{\pm \text{0.33}}$ & $\text{41.69}_{\pm \text{0.28}}$ & $\text{41.94}_{\pm \text{0.35}}$ & $\text{68.42}_{\pm \text{0.49}}$ & $\text{66.93}_{\pm \text{0.41}}$ & $\text{66.36}_{\pm \text{0.55}}$\\
& $\sigma=10^{-2}$ & $\text{58.99}_{\pm \text{0.18}}$ & $\text{58.59}_{\pm \text{0.14}}$ & $\text{55.62}_{\pm \text{0.27}}$ & $\text{77.68}_{\pm \text{0.13}}$ & $\text{76.25}_{\pm \text{0.19}}$ & $\text{73.84}_{\pm \text{0.23}}$\\
& $\sigma=10^{-3}$ & $\text{60.23}_{\pm \text{0.08}}$ & $\text{59.35}_{\pm \text{0.13}}$ & $\text{56.13}_{\pm \text{0.18}}$ & $\text{78.14}_{\pm \text{0.22}}$ & $\text{77.32}_{\pm \text{0.34}}$ & $\text{76.07}_{\pm \text{0.17}}$\\
\bottomrule
\end{tabular}}
\end{center}
\end{table*}

Our experiments follow the standard settings used in prior classical work. For complex neural networks, using a constant learning rate throughout training may lead to instability, which is mainly a training issue rather than the focus of this paper, as our primary contribution lies in the privacy theory. In this part, we additionally train the model under four different learning-rate schedules and report the results below. As training progresses, the privacy guarantee of the model continues to exhibit a strong convergence trend, which is consistent with our theoretical analysis.
\begin{table*}[t]
\begin{center}
\vspace{-0.1cm}
\caption{Sensitivity of different learning rates on LeNet.}
\label{ap:sensitivity of different learning rates}
\vspace{0.05cm}
\small
\setlength{\tabcolsep}{1.5mm}{\begin{tabular}{@{}c|cccccc@{}}
\toprule
 & \multicolumn{1}{c}{$T=100$} & \multicolumn{1}{c}{$T=200$} & \multicolumn{1}{c}{$T=300$} & \multicolumn{1}{c}{$T=400$} & \multicolumn{1}{c}{$T=500$} & \multicolumn{1}{c}{$T=600$} \\
\midrule
constant    & 19.77 & 29.24 & 33.52 & 36.37 & 37.52 & 37.98 \\
cyclically  & 15.36 & 28.44 & 33.37 & 36.24 & 37.82 & 38.15 \\
stage-wise  & 15.24 & 24.33 & 26.89 & 27.77 & 28.69 & 29.03 \\
iteratively & 14.33 & 22.14 & 23.35 & 24.11 & 24.52 & 24.68 \\
\bottomrule
\end{tabular}}
\end{center}
\end{table*}

\textbf{Results on CIFAR-100 and TinyImagenet.} We follow existing works~\citep{dai2023fedgamma,li2023visual,huang2023fusion,liu2023enhance,sun2023mode} conduct additional noisy experiments on CIFAR-100 and TinyImagenet, and the observed behavior is consistent with the results on CIFAR-10 and MNIST. The effect of noise strength is direct, and prior work has extensively studied this phenomenon, as the introduction of noise inevitably degrades the convergence rate. This degradation stems from the optimization process itself and reflects the fundamental trade-off between privacy and utility: optimization prefers smaller noise, while privacy requires larger noise. Our analysis further improves the lower bound in privacy characterization. Previous analyses often viewed this trade-off as inherently conflicting and irreconcilable. However, our results reveal a more nuanced relationship in the federated learning setting. Specifically, given a desired privacy guarantee, the convergence speed does not decay indefinitely.

\begin{table}[H]
\begin{center}
\vspace{-0.1cm}
\caption{More experiments (hyperparameters are fixed as the same selection with Table.~\ref{exp_accuracy}).}
\label{ap:more dataset}
\vspace{0.05cm}
\small
\setlength{\tabcolsep}{1.5mm}{\begin{tabular}{@{}c|cccccccc@{}}
\toprule
 \multirow{2.5}{*}{\makecell{Noisy\\Intensity}} & \multicolumn{4}{c}{CIFAR-100} & \multicolumn{4}{c}{TinyImagenet} \\ 
\cmidrule(lr){2-5} \cmidrule(lr){6-9} 
 & \multicolumn{1}{c}{$K=50$} & \multicolumn{1}{c}{$K=100$} & \multicolumn{1}{c}{$K=200$} & \multicolumn{1}{c}{$K=500$} & \multicolumn{1}{c}{$K=50$} & \multicolumn{1}{c}{$K=100$} & \multicolumn{1}{c}{$K=200$} & \multicolumn{1}{c}{$K=500$}\\
\cmidrule(lr){1-9}
 $\sigma=10^{-1}$ & 35.42 & 34.18 & 33.25 & 33.21 & 28.52 & 27.39 & 25.94 & 25.65 \\
 $\sigma=10^{-2}$ & 38.42 & 38.11 & 37.36 & 37.03 & 34.99 & 34.36 & 34.10 & 33.84 \\
 $\sigma=10^{-3}$ & 38.95 & 38.73 & 38.11 & 37.84 & 35.15 & 34.87 & 34.64 & 34.21 \\
\bottomrule
\end{tabular}}
\end{center}
\end{table}

\textbf{Empirical Studies of Membership Inference.} We conduct an empirical study using membership inference attacks to validate the stability of the privacy guarantees. We consider a federated learning setup with 100 clients, among which 10 are designated as target clients. The attacker is assumed to intercept the transmitted parameters during communication but has no knowledge of their source. Using gradient inversion, the attacker attempts to identify the target clients in the different training rounds. We report the attack success rate over 500 steps.

\begin{table}[H]
\centering
\caption{Results of membership inference.}
\label{tab:membership}
\vspace{0.05cm}
\begin{tabular}{lccccc}
\toprule
  & $T=200$ & $T=400$ & $T=600$ & $T=800$ & $T=1000$ \\
\midrule
$\sigma=1.0$ & 0\% & 0\% & 10\% & 10\% & 0\% \\
$\sigma=0.1$  & 20\% & 30\% & 30\% & 40\% & 30\% \\  
$\sigma=0.01$ & 40\% & 50\% & 60\% & 60\% & 70\% \\
$\sigma=0$ & 40\% & 80\% & 100\% & 100\% & 100\% \\
\bottomrule
\end{tabular}
\end{table}

It can be seen that in the absence of noise, the attack success rate eventually stabilizes at 100\%, indicating a complete lack of privacy protection. Under extremely strong noise, the attack performance is nearly equivalent to random guessing, which indicates a high level of privacy protection. Under different levels of constant noise, the attack success rate does not approach 100\% even in the later stages of training. With a noise level of 0.01, the success rate eventually stabilizes around 70\%, while a stronger noise level of 0.1 reduces it to about 30\%. This demonstrates that the privacy protection does not rely on an ever-increasing noise magnitude; instead, the convergence of privacy is an inherent outcome.

\textbf{Different number of data samples.} Since the local noise is injected at each client after completing its local training in every round, the amount of local data does not have a substantial impact. We follow the previous works~\citep{sun2025towards} conducted the following validation on CIFAR-10 by varying both the number of clients and the amount of data per client, and we report the resulting sensitivity differences after 500 rounds of training.

\begin{table}[H]
\centering
\caption{Sensitivity of different sample numbers and client numbers.}
\label{tab:number of samples}
\vspace{0.05cm}
\begin{tabular}{lccccc}
\toprule
  & $S=100$ & $S=200$ & $S=300$ & $S=400$ & $S=500$ \\
\midrule
$m=80$ & 17.41 & 17.34 & 17.15 & 17.22 & 17.25 \\
$m=50$ & 24.95 & 24.93 & 24.62 & 24.83 & 24.53 \\  
$m=20$ & 30.12 & 30.38 & 30.45 & 29.89 & 30.08 \\
\bottomrule
\end{tabular}
\end{table}

\section{Empirical Studies of Convergent Privacy}
We additionally conducted the following experiments on CIFAR-10 and CIFAR-100. Specifically, we evalute the privacy performance on label reconstruction, gradient inversion attack, attribute inference attack, and knowledge extraction attack to demonstrate that convergent privacy can be empirically satisfied.

\textbf{Label Reconstruction.} Label Reconstruction is a privacy attack in which an adversary attempts to infer and reconstruct the true labels of training samples from the model's outputs or gradients. We randomly select 1000 samples from the test set to evaluate the label reconstruction ability, which reflects the change in privacy leakage.
\begin{table}[H]
\centering
\caption{Label reconstruction performance on different constant level noise.}
\label{tab:label reconstruction}
\vspace{0.05cm}
\begin{tabular}{lcccccc}
\toprule
  & Noise level & $T=100$ & $T=200$ & $T=300$ & $T=400$ & $T=500$ \\
\midrule
CIFAR-10 & $\sigma=0.1$ & 17.7 & 20.4 & 22.0 & 22.5 & 22.8 \\
CIFAR-10 & $\sigma=0.01$ & 32.5 & 38.3 & 44.2 & 46.6 & 48.2 \\  
CIFAR-10 & $\sigma=0$ & 36.5 & 44.7 & 53.1 & 59.8 & 65.4 \\
\midrule
CIFAR-100 & $\sigma=0.1$ & 12.3 & 15.5 & 17.5 & 19.0 & 19.2 \\
CIFAR-100 & $\sigma=0.01$ & 20.3 & 25.5 & 28.6 & 29.7 & 30.6 \\  
CIFAR-100 & $\sigma=0$ & 24.4 & 30.1 & 36.5 & 40.5 & 43.3 \\
\bottomrule
\end{tabular}
\end{table}
It can be observed that without noise, the reconstruction accuracy steadily increases, as the correspondence between gradients and labels is highly pronounced in the noise-free setting. After injecting noise with sufficient strength, the recovery accuracy gradually stabilizes at a certain level and no longer improves. This is consistent with our theoretical analysis.

\textbf{Gradient Inversion Attack.} Gradient inversion attack aims to reconstruct the original input data by exploiting the gradients shared during training. Since we use images as the data in this setting, the reconstruction quality is evaluated by the average L2 norm between the recovered images and the original ones. We randomly select 500 samples from the CIFAR-10 for evaluation.
\begin{table}[H]
\centering
\caption{Gradient inversion attack performance on different constant level noise.}
\label{tab:gradient inversion attack}
\vspace{0.05cm}
\begin{tabular}{lcccccc}
\toprule
  & Noise level & $T=100$ & $T=200$ & $T=300$ & $T=400$ & $T=500$ \\
\midrule
CIFAR-10 & $\sigma=0.1$ & 20.45 & 18.32 & 17.17 & 16.65 & 16.34 \\
CIFAR-10 & $\sigma=0.01$ & 15.51 & 12.26 & 10.35 & 8.44 & 8.02\\  
CIFAR-10 & $\sigma=0$ & 10.23 & 4.55 & 1.72 & 1.34 & 0.97 \\
\bottomrule
\end{tabular}
\end{table}
The reconstruction results show that without noise, the recovered images become highly accurate as training progresses, since the mapping from gradients to images is very direct in LeNet. In contrast, when noise is injected, the reconstruction quality is significantly degraded and eventually stabilizes at a constant magnitude.

\textbf{Attribute Inference Attack.} Attribute inference attack is a privacy attack in which an adversary attempts to infer hidden or sensitive attributes of training samples from the model's outputs or intermediate representations. Here we follow \cite{arevalo2024task} to evaluate the attribute of "Animal or not". We select 1000 data samples to test and report the attribute inference accuracy.
\begin{table}[H]
\centering
\caption{Attribute inference attack performance on different constant level noise.}
\label{tab:attribute inference attack}
\vspace{0.05cm}
\begin{tabular}{lcccccc}
\toprule
  & Noise level & $T=100$ & $T=200$ & $T=300$ & $T=400$ & $T=500$ \\
\midrule
CIFAR-10 & $\sigma=0.1$ & 20.4 & 28.1 & 34.2 & 38.8 & 42.5 \\
CIFAR-10 & $\sigma=0.01$ & 30.1 & 46.4 & 53.3 & 58.8 & 62.1 \\  
CIFAR-10 & $\sigma=0$ & 30.3 & 52.2 & 59.2 & 67.3 & 74.4 \\
\bottomrule
\end{tabular}
\end{table}
Attribute inference attacks share certain similarities with label reconstruction attacks, but differ in that the inferred attribute is an unlabeled feature that often exhibits clustering behavior during inference. Injecting noise significantly increases the difficulty of effectively identifying this attribute. Our experiments further confirm that the privacy leakage detected by such attacks remains stable under limited noise injection: its growth does not increase indefinitely but instead stays within a gradually shrinking range and eventually converges to a stable level.

\textbf{Knowledge Extraction Attack.} Knowledge Extraction Attack is a privacy attack in which an adversary queries the target model's outputs to train a substitute model that mimics the target model's behavior. In our experiments, we train the attacker model using the stolen noisy logits, and evaluate the privacy protection capability by comparing the resulting training loss. It can be observed that constant noise effectively maintains stable privacy protection, which is fully consistent with our theoretical predictions.
\begin{table}[H]
\centering
\caption{Knowledge extraction attack performance on different constant level noise.}
\label{tab:knowledge extraction attack}
\vspace{0.05cm}
\begin{tabular}{lccc}
\toprule
  & Noise level & Vanilla Model & Attacker \\
\midrule
CIFAR-10 & $\sigma=0.1$ & 74.25 & 60.20 \\
CIFAR-10 & $\sigma=0.01$ & 78.44 & 72.43 \\  
CIFAR-10 & $\sigma=0$ & 82.55 & 80.35 \\
CIFAR-100 & $\sigma=0.1$ & 32.36 & 15.52 \\
CIFAR-100 & $\sigma=0.01$ & 37.41 & 26.49 \\  
CIFAR-100 & $\sigma=0$ & 45.25 & 38.36 \\
\midrule
\bottomrule
\end{tabular}
\end{table}

\section{Preliminary Properties of $f$-DP}
In our analysis, the f-DP composition is applied to the sequence of perturbed model updates, each of which is a Gaussian mechanism. For Gaussian mechanisms, a valid coupling is guaranteed to exist: the optimal transport coupling that aligns the two output distributions via a shared Gaussian noise source. Formally, for adjacent datasets $\mathcal{C}$ and $\mathcal{C}'$, the mechanisms can be written as:
$$
\mathcal{M}(C)=\text{Standard FL Update}(C) + Z \ \  and \ \   \mathcal{M}(C')=\text{Standard FL Update}(C') + Z,
$$
where $\text{Standard FL Update}()$ is generally defined by the algorithm itself, e.g. FedAvg and FedProx in our paper. $Z$ is the shared Gaussian noise. This defines a measurable coupling, since the pair $(\text{Standard FL Update}(C) + Z, \text{Standard FL Update}(C') + Z)$ is a measurable mapping of Gaussian random variable. Under this coupling, the privacy-loss random variable has the explicit closed form used in f-DP analysis, and its mean shift is
$$
\mu_t=\frac{\Vert\text{Standard FL Update}(C)-\text{Standard FL Update}(C')\Vert^2}{2\sigma^2},
$$
which is finite and fully controlled through our sensitivity bounds. Therefore, the total mean shift $\sum_t\mu_t$ is well-defined and directly bounded by our model-sensitivity recursion, ensuring that the f-DP composition lemma applies without additional assumptions.

In this section, we mainly supplement some basic properties of $f$-DP, all of which are lemmas proposed by \citet{dong2022gaussian}. Specifically, Lemmas \ref{ap:lemma1} and \ref{composition} are employed in our theoretical analysis, whereas Lemmas \ref{transfer1} and \ref{transfer2} facilitate a lossless translation of our results into other standard DP frameworks for comparative purposes.

\begin{lemma}[Post-processing]
\label{ap:lemma1}
    If a randomized mechanism $\mathcal{M}$ is $f$-DP, any post processing mechanism based on $\mathcal{M}$ is still at least $f$-DP, i.e. $T(P';Q')\geq T(P;Q)$ for any post-processing mapping which leads to $P\rightarrow P'$ and $Q\rightarrow Q'$.
\end{lemma}

Intuitively, post-processing mappings bring some changes in the original distributions. However, such changes can not allow the updated distributions to be much easier to discern. This lemma also widely exists in other DP relaxations and stands as one of the foundational elements in current privacy analyses. In $f$-DP, this lemma also clearly demonstrates that the difficulty of hypothesis testing problems can not be simplified with the addition of known information, which still preserves the original distinguishability.

\begin{lemma}[Composition]
\label{composition}
    We have a series of mechanisms $\mathcal{M}_i$ and a joint serial composition mechanism $\mathcal{M}$. Let each private mechanism $\mathcal{M}_i(\cdot, y_1, \cdots, y_{i-1})$ be $f_i$-DP for all $y_1\in Y_1, \cdots, y_{i-1}\in Y_{i-1}$. Then the $n$-fold composed mechanism $\mathcal{M}:X\rightarrow Y_1\times\cdots\times Y_n$ is $f_1\otimes \cdots \otimes f_n$-DP, where $\otimes$ denotes the joint distribution. For instance, if $f=T(P;Q)$ and $g=T(P';Q')$, then $f\otimes g=T(P\times P';Q\times Q')$.
\end{lemma}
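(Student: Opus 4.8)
The plan is to reduce the $n$-fold claim to the two-fold adaptive case by induction, and to prove the two-fold case by exhibiting the composed experiment as a \emph{post-processing} (garbling) of a product of experiments that realizes $f_1\otimes f_2$. The starting point is the structure theory of trade-off functions: every trade-off function $f$ equals $T(P_f;Q_f)$ for some pair of distributions $(P_f,Q_f)$, and the pointwise order $T(P;Q)\le T(P';Q')$ is equivalent, by the comparison-of-experiments theory for binary experiments (Blackwell), to $(P',Q')$ being the image of $(P,Q)$ under a common Markov kernel. Using this, I would first check that the tensor product $\otimes$ in Lemma~\ref{composition} is well posed: if $f_i=T(P_i;Q_i)=T(\widetilde P_i;\widetilde Q_i)$ for $i=1,2$, then $(P_i,Q_i)$ and $(\widetilde P_i,\widetilde Q_i)$ are mutual garblings, so tensoring the kernels shows $(P_1\times P_2,Q_1\times Q_2)$ and $(\widetilde P_1\times\widetilde P_2,\widetilde Q_1\times\widetilde Q_2)$ are mutual garblings and hence carry the same trade-off function. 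Associativity of $\otimes$ follows the same way from associativity of the product of measures, and gives $f_1\otimes\cdots\otimes f_n=T(P_1\times\cdots\times P_n;Q_1\times\cdots\times Q_n)$.

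For the two-fold adaptive step, fix adjacent unions $\mathcal{C},\mathcal{C}'$ and realize the composed output laws as the iterated kernels $P(dy_1,dy_2)=\mathcal{M}_1(\mathcal{C})(dy_1)\,\mathcal{M}_2(\mathcal{C},y_1)(dy_2)$ and $P'(dy_1,dy_2)=\mathcal{M}_1(\mathcal{C}')(dy_1)\,\mathcal{M}_2(\mathcal{C}',y_1)(dy_2)$. Pick pairs $(P_1,Q_1)$ on $\Omega_1$ and $(P_2,Q_2)$ on $\Omega_2$ realizing $f_1$ and $f_2$. Since $\mathcal{M}_1$ is $f_1$-DP, $T(\mathcal{M}_1(\mathcal{C});\mathcal{M}_1(\mathcal{C}'))\ge f_1$ forces $(\mathcal{M}_1(\mathcal{C}),\mathcal{M}_1(\mathcal{C}'))$ to be a garbling of $(P_1,Q_1)$ via a kernel $K_1:\Omega_1\to Y_1$; since each $\mathcal{M}_2(\cdot,y_1)$ is $f_2$-DP, for every $y_1$ the pair $(\mathcal{M}_2(\mathcal{C},y_1),\mathcal{M}_2(\mathcal{C}',y_1))$ is a garbling of $(P_2,Q_2)$ via a kernel $K_{y_1}:\Omega_2\to Y_2$. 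Now assemble one kernel $\Lambda:\Omega_1\times\Omega_2\to Y_1\times Y_2$ that on input $(\omega_1,\omega_2)$ draws $y_1\sim K_1(\omega_1,\cdot)$ and then $y_2\sim K_{y_1}(\omega_2,\cdot)$. Pushing $P_1\times P_2$ through $\Lambda$ yields a law whose $y_1$-marginal is $K_1P_1=\mathcal{M}_1(\mathcal{C})$ and whose $y_2$-conditional given $y_1$ is $K_{y_1}P_2=\mathcal{M}_2(\mathcal{C},y_1)$ (the independence of $\omega_2$ is preserved along the way), that is, exactly $P$; likewise $\Lambda$ pushes $Q_1\times Q_2$ to $P'$. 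Hence $(P,P')$ is a garbling of $(P_1\times P_2,Q_1\times Q_2)$, so $T(P;P')\ge T(P_1\times P_2;Q_1\times Q_2)=f_1\otimes f_2$, proving the $n=2$ case. For general $n$, group $(\mathcal{M}_2,\dots,\mathcal{M}_n)$ into one mechanism that, conditioned on $y_1$, is an $(n-1)$-fold adaptive composition and hence $(f_2\otimes\cdots\otimes f_n)$-DP by the induction hypothesis; applying the two-fold case and associativity of $\otimes$ closes the induction.

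The step I expect to be the main obstacle is the measure-theoretic infrastructure behind the adaptive argument: the map $\Lambda$ must be a genuine Markov kernel, which requires the family $y_1\mapsto K_{y_1}$ to be \emph{measurable}, and this needs a measurable-selection argument for garbling kernels together with a standing assumption that the relevant spaces are Polish (so regular conditional distributions exist and the Blackwell equivalence is available). The only substantial external ingredient is the nontrivial (converse) direction of the comparison theorem for binary experiments --- that $T(P;Q)\ge T(P_f;Q_f)$ \emph{forces} $(P,Q)$ to be a garbling of a realizing pair $(P_f,Q_f)$; the reverse implication is precisely the already-stated post-processing lemma and is immediate. A final, purely routine check is that adaptive composition is faithfully modeled by the iterated kernel used above.
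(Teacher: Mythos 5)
This lemma is not proved in the paper at all: it is quoted as background, with the surrounding text explicitly attributing it (together with the other definitions and lemmas of that subsection) to \cite{dong2022gaussian}, and the appendix contains no argument for it. So there is no in-paper proof to compare against; what you have written is a reconstruction of the proof of the composition theorem from the $f$-DP literature, and it is essentially the canonical one. Your outline is correct: well-definedness and associativity of $\otimes$ via mutual garblings of realizing pairs, the two-fold adaptive case by exhibiting the composed experiment as a single Markov kernel applied to $(P_1\times P_2,\,Q_1\times Q_2)$ followed by post-processing, and induction on $n$ by grouping the tail mechanisms. The computation that $\Lambda$ pushes $P_1\times P_2$ to the iterated-kernel law (using that $\omega_2$ remains independent of $y_1$, so its conditional law given $y_1$ is still $P_2$) is the right bookkeeping, and it is important --- as you implicitly use --- that Blackwell's converse supplies \emph{one} kernel transporting both members of each pair simultaneously. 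The two caveats you flag are the genuine ones: the nontrivial direction of Blackwell's comparison theorem for dichotomies is the only substantive external input, and the measurability of $y_1\mapsto K_{y_1}$ (needed for $\Lambda$ to be a kernel) requires either a measurable-selection argument on Polish spaces or a restriction to countable/standard Borel output spaces; the latter is the standing convention in this literature and is harmless for the Gaussian mechanisms the paper actually analyzes. In short: the proposal is sound, but the paper itself offers nothing to measure it against, since Lemma~\ref{composition} is imported, not proved.
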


The composition in the $f$-DP framework is \textit{closed} and \textit{tight}. This is also one of the advantages of privacy representation in $f$-DP. Correspondingly, the advanced composition theorem for $(\varepsilon,\delta)$-DP can not admit the optimal parameters to exactly capture the privacy in the composition process~\citep{dwork2015robust}. However, the trade-off function has an exact probabilistic interpretation and can precisely measure the composition.

\begin{lemma}[GDP $\rightarrow$ $(\epsilon,\delta)$-DP]
\label{transfer1}
    A $\mu$-GDP mechanism with a trade-off function $T_G(\mu)$ is also $(\epsilon,\delta(\epsilon))$-DP for all $\epsilon\geq 0$ where
    \begin{equation}
        \delta(\epsilon)=\Phi\left(-\frac{\epsilon}{\mu}+\frac{\mu}{2}\right) - e^{\epsilon}\Phi\left(-\frac{\epsilon}{\mu}-\frac{\mu}{2}\right).
    \end{equation}
\end{lemma}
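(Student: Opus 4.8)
The plan is to prove this via the primal (trade-off function) characterization of $(\epsilon,\delta)$-DP, which fits naturally into the $f$-DP framework already introduced: a mechanism is $(\epsilon,\delta)$-DP exactly when its trade-off function dominates $f_{\epsilon,\delta}(\gamma):=\max\{0,\,1-\delta-e^{\epsilon}\gamma,\,e^{-\epsilon}(1-\delta-\gamma)\}$ pointwise on $[0,1]$ (the $(\epsilon,\delta)$ counterpart of a trade-off function, which I would assume). Since $\mathcal{M}$ is $\mu$-GDP, its trade-off function is $T_G(\mu)$, so it suffices to find, for each fixed $\epsilon\geq 0$, the smallest $\delta$ for which $T_G(\mu)\geq f_{\epsilon,\delta}$ everywhere. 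The branch ``$0$'' is automatic because $T_G(\mu)\geq 0$, and because $T_G(\mu)$ is symmetric about the antidiagonal (equivalently, it equals its own inverse trade-off function) the branch $e^{-\epsilon}(1-\delta-\gamma)$ imposes the same constraint as the branch $1-\delta-e^{\epsilon}\gamma$. Hence the required value is
\[
  \delta(\epsilon)=\sup_{\gamma\in[0,1]}\Big[\,1-e^{\epsilon}\gamma-T_G(\mu)(\gamma)\,\Big]
  =\sup_{\gamma\in[0,1]}\Big[\,1-e^{\epsilon}\gamma-\Phi\big(\Phi^{-1}(1-\gamma)-\mu\big)\,\Big].
\]

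The next step is a one-variable optimization. Substituting $x=\Phi^{-1}(1-\gamma)$, so that $\gamma=\Phi(-x)$ and $T_G(\mu)(\gamma)=\Phi(x-\mu)$, the quantity to maximize becomes $h(x)=1-e^{\epsilon}\Phi(-x)-\Phi(x-\mu)$ over $x\in\mathbb{R}$. Writing $\phi$ for the standard normal density and using $\phi(-x)=\phi(x)$, we get $h'(x)=e^{\epsilon}\phi(x)-\phi(x-\mu)$; setting $h'(x)=0$ is equivalent, after taking logarithms of the Gaussian densities, to $\epsilon=\tfrac12\big(x^2-(x-\mu)^2\big)=\mu x-\tfrac{\mu^2}{2}$, whose unique root is $x^{\star}=\tfrac{\epsilon}{\mu}+\tfrac{\mu}{2}$. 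Since $h'>0$ for $x<x^{\star}$ and $h'<0$ for $x>x^{\star}$, this $x^{\star}$ is the global maximizer, and it lies in the interior so no endpoint $\gamma\in\{0,1\}$ needs separate treatment. Evaluating $h$ at $x^{\star}$, with $-x^{\star}=-\tfrac{\epsilon}{\mu}-\tfrac{\mu}{2}$ and $x^{\star}-\mu=\tfrac{\epsilon}{\mu}-\tfrac{\mu}{2}$, and then rewriting $\Phi\!\left(\tfrac{\epsilon}{\mu}-\tfrac{\mu}{2}\right)=1-\Phi\!\left(-\tfrac{\epsilon}{\mu}+\tfrac{\mu}{2}\right)$, gives
\[
  \delta(\epsilon)=1-e^{\epsilon}\Phi\!\left(-\tfrac{\epsilon}{\mu}-\tfrac{\mu}{2}\right)-\Phi\!\left(\tfrac{\epsilon}{\mu}-\tfrac{\mu}{2}\right)
  =\Phi\!\left(-\tfrac{\epsilon}{\mu}+\tfrac{\mu}{2}\right)-e^{\epsilon}\Phi\!\left(-\tfrac{\epsilon}{\mu}-\tfrac{\mu}{2}\right),
\]
which is exactly the claimed expression.

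To close, I would note that $\delta(\epsilon)=h(x^{\star})\geq\lim_{x\to+\infty}h(x)=0$, so the bound is a legitimate $\delta\in[0,1]$, and reconfirm that the antidiagonal symmetry of $T_G(\mu)$ is what lets us discard the $e^{-\epsilon}(1-\delta-\gamma)$ branch. Essentially everything here is routine: the derivative computation, the log-density manipulation, and the final simplification with $\Phi(y)+\Phi(-y)=1$. The only point demanding genuine care rather than calculation is the passage between $(\epsilon,\delta)$-DP and its trade-off function, together with the symmetry argument reducing the two linear branches to one; once that is in place, the lemma follows from the single critical point $x^{\star}=\tfrac{\epsilon}{\mu}+\tfrac{\mu}{2}$.
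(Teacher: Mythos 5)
The paper does not prove this lemma at all: it is imported verbatim from \cite{dong2022gaussian} (Corollary 2.13 there), so there is no in-paper argument to compare against. Your proof is correct and is essentially the canonical derivation from that reference: you use the primal characterization of $(\epsilon,\delta)$-DP as domination of $f_{\epsilon,\delta}(\gamma)=\max\{0,\,1-\delta-e^{\epsilon}\gamma,\,e^{-\epsilon}(1-\delta-\gamma)\}$, dispose of the second linear branch by the antidiagonal symmetry $T_G(\mu)^{-1}=T_G(\mu)$, and reduce the problem to maximizing $h(x)=1-e^{\epsilon}\Phi(-x)-\Phi(x-\mu)$, whose unique critical point $x^{\star}=\epsilon/\mu+\mu/2$ gives exactly the stated $\delta(\epsilon)$; the derivative computation, the sign analysis via the monotone likelihood ratio $\phi(x-\mu)/\phi(x)=e^{\mu x-\mu^2/2}$, and the endpoint check are all sound. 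The only implicit assumption worth flagging is $\mu>0$ (the formula and your monotonicity argument both degenerate at $\mu=0$, where the mechanism is perfectly private and $\delta(\epsilon)=0$ trivially), but this does not affect the validity of the result as used in the paper.
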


\begin{lemma}[GDP $\rightarrow$ RDP]
\label{transfer2}
    A $\mu$-GDP mechanism with a trade-off function $T_G(\mu)$ is also $\left(\zeta, \frac{1}{2}\mu^2\zeta\right)$-RDP for any $\zeta > 1$.
\end{lemma}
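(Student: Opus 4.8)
The plan is to reduce the statement to the corresponding one for the canonical Gaussian pair. Recall that $\mathcal{M}$ is $(\zeta,\rho)$-RDP exactly when the R\'enyi divergence of order $\zeta$, written $D_\zeta$, obeys $D_\zeta(\mathcal{M}(\mathcal{C})\,\|\,\mathcal{M}(\mathcal{C}'))\le\rho$ for every pair of adjacent datasets; since adjacency is symmetric, this simultaneously controls the reversed divergence, so it suffices to bound $D_\zeta(\mathcal{M}(\mathcal{C})\,\|\,\mathcal{M}(\mathcal{C}'))$ by $\frac{1}{2}\mu^2\zeta$. The key observation is that $\mu$-GDP says precisely that the two-point experiment $(\mathcal{M}(\mathcal{C}),\mathcal{M}(\mathcal{C}'))$ is \emph{less informative} than $(\mathcal{N}(0,1),\mathcal{N}(\mu,1))$ in the Blackwell order --- their trade-off functions are ordered in the right direction --- which forces the pair of outputs to be a common post-processing of the two Gaussians. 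Since R\'enyi divergence is monotone under any such post-processing, the bound follows from the closed-form divergence between equal-variance Gaussians.

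Concretely, I would carry out the following steps. Step 1: by Definition~\ref{GDP}, $T(\mathcal{M}(\mathcal{C}),\mathcal{M}(\mathcal{C}'))(\gamma)\ge T_G(\mu)(\gamma)=T(\mathcal{N}(0,1),\mathcal{N}(\mu,1))(\gamma)$ for all $\gamma\in[0,1]$. Step 2: for binary experiments the trade-off function is a complete invariant of the Blackwell order, so the domination of Step 1, combined with the fact that the Gaussian pair attains $T_G(\mu)$ exactly, lets me invoke Blackwell's comparison theorem to produce a Markov kernel (garbling) $K$ with $K(\mathcal{N}(0,1))=\mathcal{M}(\mathcal{C})$ and $K(\mathcal{N}(\mu,1))=\mathcal{M}(\mathcal{C}')$; this is exactly the converse of the post-processing lemma stated above. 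Step 3: apply the data-processing inequality for R\'enyi divergence followed by the standard Gaussian computation,
\[
D_\zeta\big(\mathcal{M}(\mathcal{C})\,\big\|\,\mathcal{M}(\mathcal{C}')\big)\le D_\zeta\big(\mathcal{N}(0,1)\,\big\|\,\mathcal{N}(\mu,1)\big)=\tfrac{1}{2}\zeta\mu^2 ,
\]
where the last equality is the one-line integral $\frac{1}{\zeta-1}\log\int \phi_0^{\zeta}\,\phi_\mu^{1-\zeta}\,\mathrm{d}x$ for unit-variance Gaussian densities $\phi_0,\phi_\mu$ with means $0$ and $\mu$. As this holds for every adjacent pair (and, by the symmetry just noted, with the two arguments swapped), $\mathcal{M}$ is $(\zeta,\frac{1}{2}\mu^2\zeta)$-RDP.

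The main obstacle is Step 2: the post-processing lemma in the excerpt gives only the easy implication (post-processing an $f$-DP mechanism remains $f$-DP), whereas here I need its converse for the extremal Gaussian dichotomy --- a clean statement of Blackwell's theorem for binary experiments on general (here Euclidean) sample spaces, together with the mild measurability/regularity conditions ensuring such a kernel $K$ exists. The remaining ingredients are routine: the data-processing inequality for $D_\zeta$ is classical and the Gaussian R\'enyi divergence is a short calculation. If one prefers to rely only on lemmas already established, an alternative route is to first invoke Lemma~\ref{transfer1} to get that $\mathcal{M}$ is $(\epsilon,\delta(\epsilon))$-DP for all $\epsilon\ge 0$, then express $D_\zeta$ as a weighted integral of the induced hockey-stick divergences and bound that integral; this avoids Blackwell at the cost of a more delicate integral identity and may not recover the exact constant $\frac{1}{2}\mu^2\zeta$.
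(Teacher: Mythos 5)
Your proof is correct. The paper itself gives no proof of this lemma --- it is imported verbatim from \cite{dong2022gaussian} --- and your argument (trade-off-function domination, Blackwell's theorem for dichotomies to realize $(\mathcal{M}(\mathcal{C}),\mathcal{M}(\mathcal{C}'))$ as a garbling of $(\mathcal{N}(0,1),\mathcal{N}(\mu,1))$, the data-processing inequality for R\'enyi divergence, and the closed-form computation $D_\zeta\left(\mathcal{N}(0,1)\,\Vert\,\mathcal{N}(\mu,1)\right)=\tfrac{1}{2}\zeta\mu^2$) is precisely the standard route taken in that reference, with the one genuinely nontrivial ingredient (the converse of post-processing for binary experiments) correctly identified and attributed.
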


We state the transition and conversion calculations from $f$-DP~(we specifically consider the GDP) to other DP relaxations, e.g. for the $(\varepsilon,\delta)$-DP and RDP. These lemmas can effectively compare our theoretical results with existing ones. Our comparison primarily aims to demonstrate that the convergent privacy obtained in our analysis would directly derive bounded privacy budgets in other DP relaxations. Moreover, we will illustrate how the convergent $f$-DP further addresses conclusions that current FL-DP work cannot cover theoretically, which provides solid support for understanding its reliability of privacy protection.

\begin{lemma}[Accumulation in GDP.]
    For GDP, $T_G(\mu_1)\otimes \cdots\otimes T_G(\mu_n)=T_G(\sqrt{\mu_1^2+\cdots+\mu_n^2})$.
\end{lemma}
\begin{proof}
    Let $\mu=(\mu_1,\mu_2)\in\mathbb{R}^2$ and $I_2$ be the $2\times 2$ identity matrix. Then
    $$
    T_G(\mu_1)\otimes T_G(\mu_2) = T(\mathcal{N}(0,1)\times\mathcal{N}(0,1);\mathcal{N}(\mu_1,1)\times\mathcal{N}(\mu_2,1))=T(\mathcal{N}(0,I_2);\mathcal{N}(\mu,I_2)).
    $$
    Again we use the invariance of trade-off functions under invertible transformations. $\mathcal{N}(0,I_2)$ is rotation invariant, so we can rotate $\mathcal{N}(\mu,I_2)$ so that the mean is $(\sqrt{\mu_1^2+\mu_2^2},0)$, i.e.,
    $$
    T(\mathcal{N}(0,I_2);\mathcal{N}(\mu,I_2)) = T(\mathcal{N}(0,1);\mathcal{N}(\sqrt{\mu_1^2+\mu_2^2},1))\otimes T(\mathcal{N}(0,1);\mathcal{N}(0,1)) = T_G(\sqrt{\mu_1^2+\mu_2^2}).
    $$
\end{proof}

\section{A Discussion of Tightness in Relaxation}
\label{ap:tightness discussion}
The appropriateness of our relaxation can be justified through parallels with prior work, especially the analysis in shifted-interpolation on DP-SGD. The essential challenge arises from the fact that obtaining the worst-case privacy loss requires minimizing the privacy term jointly over $\lambda_t$ and $t_0$, which is an NP-hard problem. To make the computation feasible, Bok et al. assume that the optimization domain has diameter $D$, thereby ensuring that the global sensitivity at every $t_0$ is bounded by this constant. Although this assumption simplifies the optimization, such a projection is rarely aligned with practical training dynamics. If $t_0$ were a fixed and known constant, the assumption would indeed apply. However, when $t_0$ increases with the total number of training steps $T$, this simplification may no longer hold. Our analysis introduces a relaxed upper bound, but comparison with their results indicates that our bound differs only by a constant multiplicative term. As $T/t_0 \rightarrow \infty$, this discrepancy shrinks rapidly and becomes negligible.

\section{Proof of Main Theorems}

\textbf{Proof Sketch.} Since the standard interpolation technique cannot be directly applied to the federated learning setting, we introduce a more sophisticated interpolation construction. In this design, the interpolation sequence is defined at the level of the global model, while its evolution is implicitly governed by the local training updates performed on each client. To quantify the effect of this interpolation on the original optimization trajectory, we conduct separate analyses of model sensitivity and data sensitivity for the local sequences, which allow us to derive upper bounds on how these sensitivities change. Using the global interpolation, we then obtain the complete f-DP privacy characterization of the FL-DP framework, which ultimately leads to a bounded privacy guarantee. In this section, we provide the main lemmas of sensitivity studies on the different cases.

\subsection{Proofs of Theorem \ref{thm1}}
\label{a:proofs of eq12}
We consider the general updates on the adjacent datasets $\mathcal{C}$ and $\mathcal{C}'$ on round $t$ as follows:
\begin{equation}
\label{ap:update}
    \begin{split}
        w_{t+1} &= \phi(w_{t}) + \frac{1}{m}\sum_{i\in\mathcal{I}}n_{i,t}, \\
        w_{t+1}' &= \phi'(w_{t}') + \frac{1}{m}\sum_{i\in\mathcal{I}}n_{i,t}',
    \end{split}
\end{equation}
where $w_0$ is the initial state. $n_{i,t}$ and $n_{i,t}'$ are two noises generated from the normal distribution $\mathcal{N}(0,\sigma^2I_d)$. To construct the interpolated sequence, we introduce the concentration coefficients $\lambda_t$ to provide a convex combination of the updates above, which is,
\begin{equation}
\label{ap:interpolation}
    \widetilde{w}_{t+1}=\lambda_{t+1}\phi(w_{t}) + (1-\lambda_{t+1})\phi'(\widetilde{w}_{t}) + \frac{1}{m}\sum_{i\in\mathcal{I}}n_{i,t},
\end{equation}
for $t=t_0,t_0+1,\cdots,T-1$. Furthermore, we set $\lambda_{T}=1$ to let $\widetilde{w}_{T}=\phi(w_{T-1})+\frac{1}{m}\sum_{i\in\mathcal{I}}n_{i,T-1}=w_{T}$, and we add the definition of $\widetilde{w}_{t_0}=w_{t_0}'$ as the interpolation beginning. $t_0$ determines the length of the interpolation sequence.\\

\begin{lemma}
\label{ap:dp-iterative}
    According to the expansion of trade-off functions, for the general updates in Eq.(\ref{ap:interpolation}), we have the following recurrence relation:
    \begin{equation}
        T\left(\widetilde{w}_{t+1}; w_{t+1}'\right)\geq T\left(\widetilde{w}_{t}; w_{t}'\right)\otimes T_G\left(\frac{\sqrt{m}}{\sigma}\lambda_{t+1}\Vert\phi(w_{t}) - \phi'(\widetilde{w}_{t})\Vert\right).
    \end{equation}
\end{lemma}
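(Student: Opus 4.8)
The plan is to peel off one communication round by viewing the passage from the round‑$t$ states to the round‑$(t+1)$ states as a composition of three operations — a deterministic post‑processing (the map $\phi'$), a bounded data‑dependent shift, and an injection of fresh Gaussian noise — and then to absorb the shift into that noise via a shift‑reduction argument. Write $\overline n_t:=\frac1m\sum_{i\in\mathcal I} n_{i,t}\sim\mathcal N(0,\sigma^2 I_d/m)$; this noise is drawn fresh at round $t$, hence independent of $(w_t,\widetilde w_t)$, and Eq.~(\ref{ap:interpolation}), Eq.~(\ref{ap:update}) become $\widetilde w_{t+1}=\phi'(\widetilde w_t)+\lambda_{t+1}\bigl(\phi(w_t)-\phi'(\widetilde w_t)\bigr)+\overline n_t$ and $w'_{t+1}=\phi'(w'_t)+\overline n'_t$. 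Since a trade‑off function depends only on the two marginal laws, I would argue conditionally on the round‑$t$ states $(w_t,\widetilde w_t)$ and $w'_t$, treating $\phi(w_t)$, $\phi'(\widetilde w_t)$, $\phi'(w'_t)$ as fixed vectors, so that $\Vert\phi(w_t)-\phi'(\widetilde w_t)\Vert$ is the (worst‑case, via Theorem~\ref{thm1:sensitivity}) deterministic sensitivity quantity and the resulting $T_G(\cdot)$ factor is well defined.

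First I would introduce the auxiliary ``shift‑free'' variable $\widehat w_{t+1}:=\phi'(\widetilde w_t)+\overline n_t$. The pair $(\widehat w_{t+1},w'_{t+1})$ is obtained from $(\widetilde w_t,w'_t)$ by applying the \emph{same} randomized map $x\mapsto\phi'(x)+N$ with $N\sim\mathcal N(0,\sigma^2 I_d/m)$, so the post‑processing lemma immediately gives $T(\widehat w_{t+1};w'_{t+1})\ge T(\widetilde w_t;w'_t)$. Note this step uses nothing about $\phi'$ beyond measurability — its Lipschitz constant enters only later, when the recurrence is unrolled against the sensitivity values of Table~\ref{res_sensitivity}.

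The crux is then to compare $\widetilde w_{t+1}$ with $\widehat w_{t+1}$: the law of $\widetilde w_{t+1}$ is the pushforward of the law of $\widetilde w_t$ under $x\mapsto\phi'(x)+\lambda_{t+1}(\phi(w_t)-\phi'(x))$, convolved with $\mathcal N(0,\sigma^2 I_d/m)$, whereas that of $\widehat w_{t+1}$ uses the pushforward under $x\mapsto\phi'(x)$; coupled through a common $x$, these two pre‑convolution measures are displaced pointwise by at most $\lambda_{t+1}\Vert\phi(w_t)-\phi'(\widetilde w_t)\Vert$. I would invoke the shift‑reduction mechanism underlying privacy amplification by iteration, transported to trade‑off functions exactly as in \cite{bok2024shifted}: using that a single Gaussian mechanism with mean displacement of norm $s$ and noise scale $\sigma/\sqrt m$ is exactly $T_G(\sqrt m\,s/\sigma)$‑DP (Definition~\ref{GDP}) and that $T_G$ is non‑increasing in its argument (so the pointwise displacement bound can be used uniformly), one obtains
\begin{equation}
    T(\widetilde w_{t+1};w'_{t+1})\ \ge\ T(\widehat w_{t+1};w'_{t+1})\ \otimes\ T_G\!\left(\frac{\sqrt m}{\sigma}\,\lambda_{t+1}\Vert\phi(w_t)-\phi'(\widetilde w_t)\Vert\right).
\end{equation}
Chaining this with the post‑processing inequality of the previous paragraph and using that $\otimes$ is monotone in each argument ($f\ge g\Rightarrow f\otimes h\ge g\otimes h$) yields the claimed recurrence.

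I expect the shift‑reduction step to be the main obstacle: the displacement $\lambda_{t+1}(\phi(w_t)-\phi'(\widetilde w_t))$ is data‑dependent rather than a fixed vector, so one cannot simply feed a deterministic mean gap into the $f$‑DP composition lemma; the Gaussian smoothing is precisely what ``uniformizes'' the perturbation across realizations of $\widetilde w_t$, but making this rigorous requires the coupling / joint‑convexity argument of \cite{bok2024shifted} rather than a one‑line application of Lemma~\ref{composition}. A secondary care point is the bookkeeping that keeps $\Vert\phi(w_t)-\phi'(\widetilde w_t)\Vert$ deterministic — either by conditioning on the round‑$t$ states and taking the worst case, or by substituting the bound of Theorem~\ref{thm1:sensitivity} — so that the $T_G$ factor, and hence the accumulated quantity $\mathcal H$ in Eq.~(\ref{H_0}) on which the later optimization over $\{\lambda_t\}$ relies, is well defined.
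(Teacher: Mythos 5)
Your overall strategy---peel off one round by post-processing the previous round's trade-off function through $\phi'$, and extract a $T_G$ factor from the Gaussian noise absorbing the bounded displacement $\lambda_{t+1}(\phi(w_t)-\phi'(\widetilde w_t))$---is exactly the mechanism of the paper's proof (and of Lemma 3.2/C.4 in \cite{bok2024shifted}). However, your intermediate decomposition puts the noise on the wrong side. You define $\widehat w_{t+1}=\phi'(\widetilde w_t)+\overline n_t$, attaching the round-$t$ noise to the first factor, and then claim $T(\widetilde w_{t+1};w'_{t+1})\ge T(\widehat w_{t+1};w'_{t+1})\otimes T_G\bigl(\tfrac{\sqrt m}{\sigma}\lambda_{t+1}\Vert\phi(w_t)-\phi'(\widetilde w_t)\Vert\bigr)$. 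Since $T(\widehat w_{t+1};w'_{t+1})\ge T(\phi'(\widetilde w_t);\phi'(w'_t))\ge T(\widetilde w_t;w'_t)$ by post-processing, this is strictly stronger than the lemma: it asks the same noise $\overline n_t$ both to fully smooth the first factor and to absorb the displacement in the second at the full rate $\sigma/\sqrt m$. That is not what the cited shift-reduction/composition machinery delivers; if you try to prove it by splitting $\overline n_t=n_1+n_2$, you can only trade variance between the two roles, and in the limit where the displacement is absorbed at the stated rate the first factor degenerates to the noiseless $T(\phi'(\widetilde w_t);\phi'(w'_t))$.

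The paper avoids this by keeping the noise glued to the displacement: it rewrites $\widetilde w_{t+1}=\phi'(\widetilde w_t)+\lambda_{t+1}\bigl(\phi(w_t)-\phi'(\widetilde w_t)\bigr)+\overline n_t$, passes by post-processing to the pair $\bigl(\phi'(\widetilde w_t),\ \lambda_{t+1}(\phi(w_t)-\phi'(\widetilde w_t))+\overline n_t\bigr)$ versus $\bigl(\phi'(w'_t),\ \overline n'_t\bigr)$, applies Lemma~\ref{composition} so that the second coordinate is exactly a Gaussian mechanism with mean shift of norm $\lambda_{t+1}\Vert\phi(w_t)-\phi'(\widetilde w_t)\Vert$ and variance $\sigma^2/m$, hence contributes the $T_G(\cdot)$ factor, and only then post-processes the noiseless first coordinate down to $T(\widetilde w_t;w'_t)$. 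Your argument is repaired by exactly this reordering---replace $T(\widehat w_{t+1};w'_{t+1})$ in your shift-reduction step by $T(\phi'(\widetilde w_t);\phi'(w'_t))$---after which it coincides with the paper's proof. Your secondary concerns (the data-dependence of the displacement and the conditioning needed to make the $T_G$ argument well defined) are correctly identified and are handled in the paper by taking the worst case inside the adaptive composition.
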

\begin{proof}
    Based on the post-processing and compositions, let $z$ and $z'$ be the corresponding noises above, for any constant $\lambda\in[0,1]$, we have~(subscripts are temporarily omitted):
    \begin{align*}
        &\quad \ T\left(\lambda\phi(w) + (1-\lambda)\phi'(\widetilde{w}) + z; \phi'(w') + z'\right)\\
        &= T\left(\phi'(\widetilde{w}) + \lambda\left(\phi(w) - \phi'(\widetilde{w})\right) + z; \phi'(w') + z'\right)\\
        &\geq T\left(\left(\phi'(\widetilde{w}), \lambda\left(\phi(w) - \phi'(\widetilde{w})\right) + z\right); \left(\phi'(w'), z'\right)\right)\\
        &\geq T\left(\phi'(\widetilde{w}); \phi'(w')\right)\otimes T\left(\lambda\left(\phi(w) - \phi'(\widetilde{w})\right) + z;z'\right)\\
        &\geq T\left(\widetilde{w}; w'\right)\otimes T\left(\lambda\left(\phi(w) - \phi'(\widetilde{w})\right) + z;z'\right),
    \end{align*}
    where $z$ and $z'$ are two Gaussian noises that can be considered to be sampled from $\mathcal{N}(0,\frac{\sigma^2}{m}I_d)$~(average of $m$ isotropic Gaussian noises). Therefore, the distinguishability between the first term and the second term does not exceed the mean shift of the distribution, which is $\Vert\frac{\sqrt{m}}{\sigma}\lambda\left(\phi(w) - \phi'(\widetilde{w})\right)\Vert$. By taking $w=w_{t}$ and $\lambda=\lambda_{t+1}$, the proofs are completed.\\
\end{proof}

According to the above lemma, by expanding it from $t=t_0$ to $T-1$ and the factor $T(\widetilde{w}_{t_0};w_{t_0}')=T_G(0)$, we can prove the formulation in Eq.~(\ref{thm1_privacy}).

\subsection{Proofs of Theorem~\ref{thm1:sensitivity}}
\label{a:proofs of th1}

Lemma~\ref{ap:dp-iterative} provides the general recursive relationship on the global states along the communication round $t$. To obtain the lower bound of the trade-off function, we only need to solve for the gaps $\Vert \phi(w) - \phi'(\widetilde{w}) \Vert$. It is worth noting that the local update process here involves dual replacement of both the dataset~($\phi$ and $\phi'$) and the initial state~($w$ and $\widetilde{w}$). Therefore, we measure their maximum discrepancy by assessing their respective distances to the intermediate variable constructed by the cross-items:
\begin{equation}
    \Vert \phi(w) - \phi'(\widetilde{w}) \Vert \leq \underbrace{\Vert \phi(w) - \phi'(w) \Vert}_{\text{Data Sensitivity}} + \underbrace{\Vert \phi'(w) - \phi'(\widetilde{w}) \Vert}_{\text{Model Sensitivity}}.
\end{equation}
The first term measures the disparity in training on different datasets and the second term measures the gap in training from different initial models. One of our contributions is to provide their general gaps. In our paper, we expand the update function $\phi(x)$ by considering the multiple local iterations and federated cross-device settings. By simply setting the local interval to $1$ and the number of clients to $1$, our results can easily reproduce the original conclusion in \citep{bok2024shifted}. Furthermore, our comprehensive considerations have led to a new understanding of the impact of local updates on privacy.

$\phi(w_t)$ and $\phi'(w_t)$ begin from $w_t$. $\phi'(w_t)$ and $\phi'(\widetilde{w}_t)$ adopt the data samples $\varepsilon'\in\mathcal{C}'$. We naturally use $w_{i,k,t}$ and $\widetilde{w}_{i,k,t}$ to represent individual states in $\phi(w_t)$ and $\phi'(\widetilde{w}_t)$, respectively. \textbf{To avoid ambiguity, we define the states in $\phi'(w_t)$ as $\hat{w}_{i,k,t}$.} When $i\neq i^\star$, since $\varepsilon=\varepsilon'$, then $w_{i,k,t}$ only differs from $\hat{w}_{i,k,t}$ on $i^\star$-th client.

\subsection*{on the {\ttfamily Noisy-FedAvg} Method:}

\quad 

\begin{lemma}[Data Sensitivity.]
\label{ap:data_sensitivity}
     The \textbf{data sensitivity} caused by gradient descent steps can be bounded as:
    \begin{equation}
        \Vert \phi(w_t) - \phi'(w_t) \Vert \leq \frac{2V}{m}\sum_{k=0}^{K-1}\eta_{k,t},
    \end{equation}
    where $\eta_{k,t}$ is the learning rate at the $k$-th iteration of $t$-th communication round.
\end{lemma}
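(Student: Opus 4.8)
The plan is to bound the data sensitivity $\Vert \phi(w_t) - \phi'(w_t)\Vert$, where both processes start from the same initialization $w_t$ but run gradient descent on the adjacent datasets $\mathcal{C}$ and $\mathcal{C}'$. Since the two unions differ in exactly one sample $\varepsilon_{i^\star j^\star}\neq \varepsilon_{i^\star j^\star}'$, only the $i^\star$-th client's local trajectory can differ; for every other client $i\neq i^\star$ we have $w_{i,k,t}=\hat{w}_{i,k,t}$ for all $k$, so those terms contribute nothing to the gap after averaging. Hence I would first reduce the problem to controlling the single-client discrepancy $\Vert w_{i^\star,K,t}-\hat{w}_{i^\star,K,t}\Vert$, and then note that the $G$-update is the average over $m$ clients, producing the prefactor $\tfrac{1}{m}$.

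Next I would track the per-step evolution of the single-client gap. Writing $e_{k}=\Vert w_{i^\star,k,t}-\hat{w}_{i^\star,k,t}\Vert$ with $e_0=0$ (both start at $w_t$), the local iteration Eq.(\ref{avg-local-iter}) gives
\begin{equation}
e_{k+1} \le \Vert w_{i^\star,k,t}-\hat{w}_{i^\star,k,t}\Vert + \eta_{k,t}\Vert g_{i^\star,k,t}-\hat{g}_{i^\star,k,t}\Vert,
\end{equation}
and since the clipped gradients each have norm at most $V$, the difference term is bounded by $2V$ regardless of smoothness. This is the crucial point: because we are differencing two \emph{different-data} gradients at possibly different points, there is no $L$-contraction or $L$-expansion to exploit here — we only use the uniform clipping bound $\Vert g\Vert\le V$. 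Unrolling the recursion from $k=0$ to $K-1$ yields $e_K \le 2V\sum_{k=0}^{K-1}\eta_{k,t}$, and dividing by $m$ completes the bound.

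The main subtlety — and the step I would be most careful about — is justifying that the $i\neq i^\star$ clients really do cancel exactly. One must argue that the pair $(w_{i,k,t},\hat{w}_{i,k,t})$ evolves via the identical update map (same data, same initialization, deterministic gradient steps), so by induction on $k$ they coincide; only then does $\phi(w_t)-\phi'(w_t)$ reduce to $\tfrac{1}{m}(w_{i^\star,K,t}-\hat{w}_{i^\star,K,t})$. A secondary point worth a sentence is that the clipping operator does not interfere with the $2V$ bound: even though clipping is nonlinear, $\Vert g_{i^\star,k,t}\Vert\le V$ and $\Vert \hat g_{i^\star,k,t}\Vert\le V$ hold unconditionally, so the triangle inequality gives $\Vert g_{i^\star,k,t}-\hat g_{i^\star,k,t}\Vert \le 2V$ directly. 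No smoothness assumption is needed for this lemma; $L$ enters only in the companion model-sensitivity lemma. This matches the table entry $\gamma_t = \tfrac{2V}{m}\sum_{k=0}^{K-1}\eta_{k,t}$, which specializes to $\tfrac{2\mu V}{m}K$ for constant learning rate, $\tfrac{2cV}{m}\ln(K+1)$ for $\eta_{k,t}=\tfrac{\mu}{k+1}$ (using $\sum_{k=0}^{K-1}\tfrac{1}{k+1}\le c\ln(K+1)$), and so on.
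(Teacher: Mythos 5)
Your proposal is correct and follows essentially the same route as the paper: the gap reduces to the single client $i^\star$ because all other clients' trajectories coincide (same data, same initialization), and the remaining discrepancy is bounded term-by-term using only the clipping bound $\Vert g\Vert\le V$, giving $\frac{2V}{m}\sum_{k=0}^{K-1}\eta_{k,t}$ with no use of smoothness. Your unrolled recursion $e_{k+1}\le e_k + 2V\eta_{k,t}$ is just the telescoped form of the paper's direct expansion of $\phi(w_t)-\phi'(w_t)$ into accumulated gradient steps, and your explicit remark that the bound applies to the clipped gradients (rather than the raw gradients the paper's displayed expansion nominally writes) is if anything slightly more careful.
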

\begin{proof}
    By directly expanding the update functions $\phi$ and $\phi'$ at $w_t$, we have:
    \begin{align*}
        &\quad \ \Vert \phi(w_t) - \phi'(w_t) \Vert\\
        &= \Vert w_t-\frac{1}{m}\sum_{i\in\mathcal{I}}\sum_{k=0}^{K-1}\eta_{k,t}\nabla f_i(w_{i,k,t},\varepsilon) - w_t + \frac{1}{m}\sum_{i\in\mathcal{I}}\sum_{k=0}^{K-1}\eta_{k,t}\nabla f_i(\hat{w}_{i,k,t},\varepsilon') \Vert\\
        &\leq \frac{1}{m}\sum_{i\in\mathcal{I}}\sum_{k=0}^{K-1}\eta_{k,t}\Vert \nabla f_i(w_{i,k,t},\varepsilon) - \nabla f_i(\hat{w}_{i,k,t},\varepsilon') \Vert\\
        &= \frac{1}{m}\sum_{k=0}^{K-1}\eta_{k,t}\Vert \nabla f_{i^\star}(w_{i^\star,k,t},\varepsilon) - \nabla f_{i^\star}(\hat{w}_{i^\star,k,t},\varepsilon') \Vert \leq \frac{2V}{m}\sum_{k=0}^{K-1}\eta_{k,t}.
    \end{align*}
    The last equation adopts $\varepsilon=\varepsilon'$ when $i\neq i^\star$. This completes the proofs.
\end{proof}

\quad 

\begin{lemma}[Model Sensitivity.]
\label{ap:model_sensitivity}
    The \textbf{model sensitivity} caused by gradient descent steps can be bounded as:
    \begin{equation}
        \Vert \phi'(w_t) - \phi'(\widetilde{w}_t) \Vert\leq \left(1 + \eta(K,t)L\right)\Vert w_t - \widetilde{w}_t\Vert,
    \end{equation}
    where $\eta(K,t)=\eta_{0,t} + \sum_{k=1}^{K-1}\eta_{k,t}\prod_{j=0}^{k-1}\left(1+\eta_{j,t}L\right)$ is a constant related the selection of learning rates.
\end{lemma}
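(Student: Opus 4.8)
\textbf{Proof proposal for Lemma~\ref{ap:model_sensitivity}.}
The plan is to unroll the gradient-descent recursion for $\phi'$ started at the two initializations $w_t$ and $\widetilde{w}_t$ on the same dataset, track the per-iteration growth of the discrepancy via $L$-smoothness, and then close the telescoping product. First I would fix a client $i$ and write $\hat w_{i,k,t}$ and $\widetilde w_{i,k,t}$ for the $k$-th local iterate of $\phi'$ started from $w_t$ and $\widetilde w_t$ respectively, both using the data $\varepsilon'$. The single-step recursion $\hat w_{i,k+1,t}=\hat w_{i,k,t}-\eta_{k,t}\nabla f_i(\hat w_{i,k,t},\varepsilon')$ (and likewise for $\widetilde w$) gives, after subtracting and applying the triangle inequality together with Assumption~1,
\begin{equation}
\Vert \hat w_{i,k+1,t}-\widetilde w_{i,k+1,t}\Vert \le \bigl(1+\eta_{k,t}L\bigr)\Vert \hat w_{i,k,t}-\widetilde w_{i,k,t}\Vert .
\end{equation}

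Next I would iterate this bound from $k=0$ to $k=K-1$. A naive telescoping yields $\Vert \hat w_{i,K,t}-\widetilde w_{i,K,t}\Vert \le \prod_{j=0}^{K-1}(1+\eta_{j,t}L)\,\Vert w_t-\widetilde w_t\Vert$, but the statement asks for the \emph{additive} form $1+\eta(K,t)L$ with $\eta(K,t)=\eta_{0,t}+\sum_{k=1}^{K-1}\eta_{k,t}\prod_{j=0}^{k-1}(1+\eta_{j,t}L)$. To get this sharper linear-in-$L$ shape I would instead expand the product and isolate the leading terms: one shows by induction on $K$ that $\prod_{j=0}^{K-1}(1+\eta_{j,t}L)=1+L\bigl(\eta_{0,t}+\sum_{k=1}^{K-1}\eta_{k,t}\prod_{j=0}^{k-1}(1+\eta_{j,t}L)\bigr)$, which is an exact identity rather than an inequality — this is the algebraic heart of the lemma. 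Concretely, the induction step uses $\prod_{j=0}^{K}(1+\eta_{j,t}L)=(1+\eta_{K,t}L)\prod_{j=0}^{K-1}(1+\eta_{j,t}L)$ and collects the new term $\eta_{K,t}L\prod_{j=0}^{K-1}(1+\eta_{j,t}L)$ into the sum defining $\eta(K+1,t)$.

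Finally I would pass from per-client to global sensitivity: since $\phi'(w_t)-\phi'(\widetilde w_t)=\frac{1}{m}\sum_{i\in\mathcal I}(\hat w_{i,K,t}-\widetilde w_{i,K,t})$ and every client sees the same data in this comparison, averaging and the triangle inequality give $\Vert \phi'(w_t)-\phi'(\widetilde w_t)\Vert\le (1+\eta(K,t)L)\Vert w_t-\widetilde w_t\Vert$, as claimed. The only subtlety I anticipate is the clipping operator in $g_{i,k,t}$: strictly, $\phi'$ uses clipped gradients, and one must check that clipping is $1$-Lipschitz (projection onto a ball is non-expansive) so that $\Vert g_i(\hat w)-g_i(\widetilde w)\Vert\le \Vert\nabla f_i(\hat w,\varepsilon')-\nabla f_i(\widetilde w,\varepsilon')\Vert\le L\Vert\hat w-\widetilde w\Vert$ still holds — this is where I expect the main (though mild) technical care to be needed, after which the recursion and telescoping are routine.
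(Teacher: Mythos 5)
Your proposal is correct and follows essentially the same route as the paper: a one-step recursion $\Vert\hat w_{i,k+1,t}-\widetilde w_{i,k+1,t}\Vert\le(1+\eta_{k,t}L)\Vert\hat w_{i,k,t}-\widetilde w_{i,k,t}\Vert$ from $L$-smoothness, telescoped over $k$ and averaged over clients — the paper keeps the sum-of-gradient-increments form and substitutes the per-iterate product bounds, while you telescope to the full product and invoke the exact identity $\prod_{j=0}^{K-1}(1+\eta_{j,t}L)=1+L\,\eta(K,t)$, which yields the identical constant. Your extra care about the clipping operator is warranted (the paper's proof silently replaces $g_{i,k,t}$ by $\nabla f_i$), and your resolution is right: the clipping map is Euclidean projection onto the ball of radius $V$, hence non-expansive, so the clipped gradient remains $L$-Lipschitz and the recursion is unaffected.
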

\begin{proof}
    We first learn an individual case. On the $t$-th round, we assume the initial states of two sequences are $w_t$ and $\widetilde{w}_t$. Each is performed by the update function $\phi'$ for local $K$ steps. For each step, we have:
    \begin{align*}
        &\quad \ \Vert \hat{w}_{i,k+1,t} - \widetilde{w}_{i,k+1,t}\Vert\\
        &\leq \Vert \hat{w}_{i,k,t} - \widetilde{w}_{i,k,t}\Vert + \eta_{k,t}\Vert \nabla f_i(\hat{w}_{i,k,t}, \varepsilon') - \nabla f_i(\widetilde{w}_{i,k,t}, \varepsilon') \Vert\\
        &\leq (1+\eta_{k,t}L)\Vert \hat{w}_{i,k,t} - \widetilde{w}_{i,k,t}\Vert.
    \end{align*}
    This implies each gap when $k\geq 1$ can be upper bounded by: 
    \begin{align*}
        \Vert \hat{w}_{i,k,t} - \widetilde{w}_{i,k,t}\Vert\leq (1+\eta_{k-1,t}L)\Vert \hat{w}_{i,k-1,t} - \widetilde{w}_{i,k-1,t}\Vert\leq \cdots\leq \prod_{j=0}^{k-1}\left(1+\eta_{j,t}L\right)\Vert w_{t} - \widetilde{w}_{t}\Vert.
    \end{align*}
    Then we consider the recursive formulation of the stability gaps along the iterations $k$. We can directly apply Eq.(\ref{ap:interpolation}) to obtain the relationship for the differences updated from different initial states on the same dataset. By directly expanding the update function $\phi'$ at $w_t$ and $\widetilde{w}_t$, we have:
    \begin{align*}
        &\quad \ \Vert \phi'(w_t) - \phi'(\widetilde{w}_t) \Vert\\
        &= \Vert w_t-\frac{1}{m}\sum_{i\in\mathcal{I}}\sum_{k=0}^{K-1}\eta_{k,t}\nabla f_i(\hat{w}_{i,k,t},\varepsilon') - \widetilde{w}_t + \frac{1}{m}\sum_{i\in\mathcal{I}}\sum_{k=0}^{K-1}\eta_{k,t}\nabla f_i(\widetilde{w}_{i,k,t},\varepsilon') \Vert\\
        &\leq \Vert w_t - \widetilde{w}_t\Vert + \Vert \frac{1}{m}\sum_{i\in\mathcal{I}}\sum_{k=0}^{K-1}\eta_{k,t}\left(\nabla f_i(\hat{w}_{i,k,t},\varepsilon') - \nabla f_i(\widetilde{w}_{i,k,t},\varepsilon')\right) \Vert \\
        &\leq \Vert w_t - \widetilde{w}_t\Vert + \frac{L}{m}\sum_{i\in\mathcal{I}}\sum_{k=0}^{K-1}\eta_{k,t}\Vert \hat{w}_{i,k,t} - \widetilde{w}_{i,k,t} \Vert\\
        &\leq \left[1 + \left(\eta_{0,t} + \sum_{k=1}^{K-1}\eta_{k,t}\prod_{j=0}^{k-1}\left(1+\eta_{j,t}L\right)\right)L\right]\Vert w_t - \widetilde{w}_t\Vert.
    \end{align*}
    This completes the proofs.\\
\end{proof}

We have successfully quantified the specific form of the problem as above. By solving for a series of reasonable values of the auxiliary variable $\lambda$ to minimize the above problem, we obtain the tight lower bound on privacy. Before that, let's discuss the learning rate to simplify this expression. Both $\eta(K,t)$ and $\sum\eta_{k,t}$ terms are highly related to the selections of learning rates. Typically, this choice is determined by the optimization process. Whether it's generalization or privacy analysis, both are based on the assumption that the optimization can converge properly. Therefore, we selected several different learning rate designs based on various combination methods to complete the subsequent analysis. Due to the unique two-stage learning perspective of federated learning, current methods for designing the learning rate generally choose between a constant rate or a rate that decreases with local rounds or iterations. Therefore, we discuss them separately including constant learning rate, cyclically decaying learning rate, stage-wise decaying learning rate, and continuously decaying learning rate. We provide a simple comparison in Figure~\ref{ap:learning_rate}.
\begin{figure}[h]
    \centering
        \includegraphics[width=0.24\textwidth]{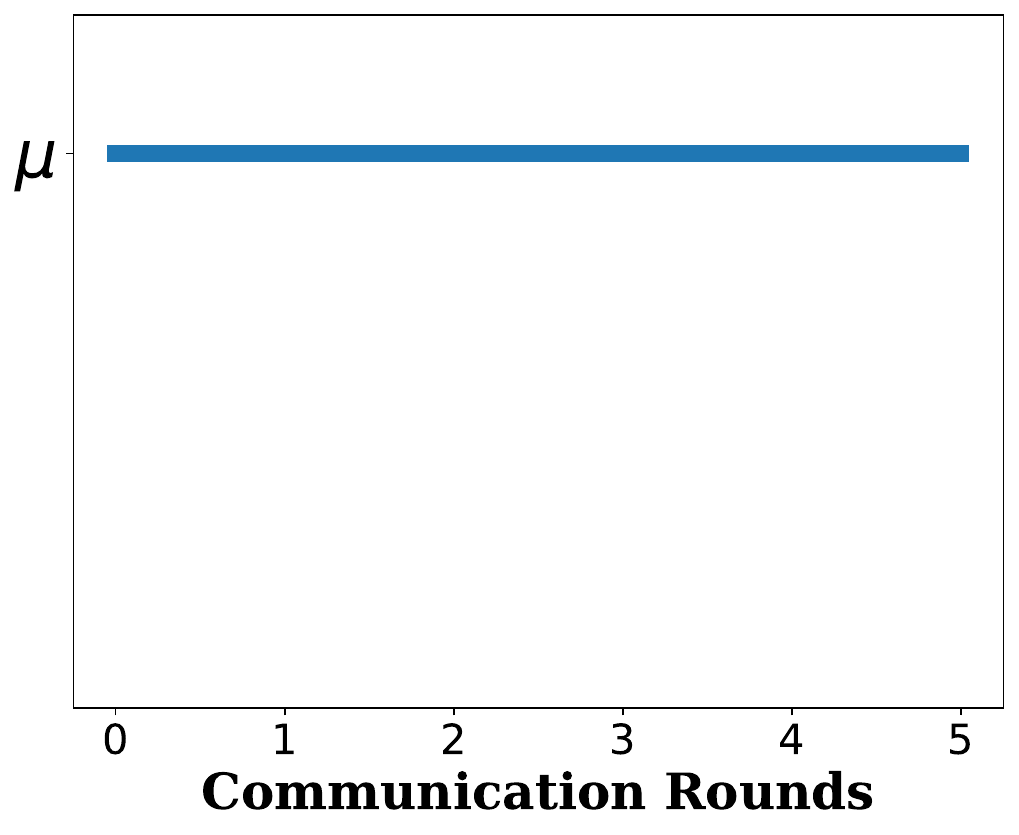}
        \label{fig:image1}
    \!\!\!\!
        \includegraphics[width=0.24\textwidth]{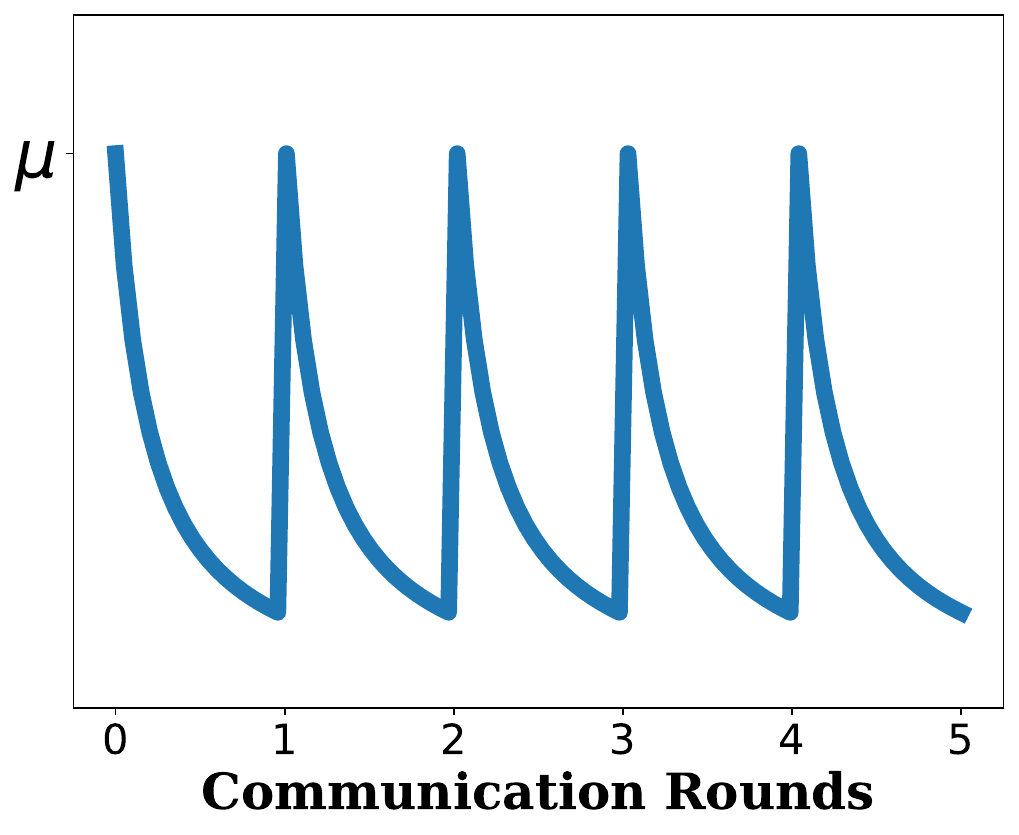}
        \label{fig:image2}
    \!\!\!\!
        \includegraphics[width=0.24\textwidth]{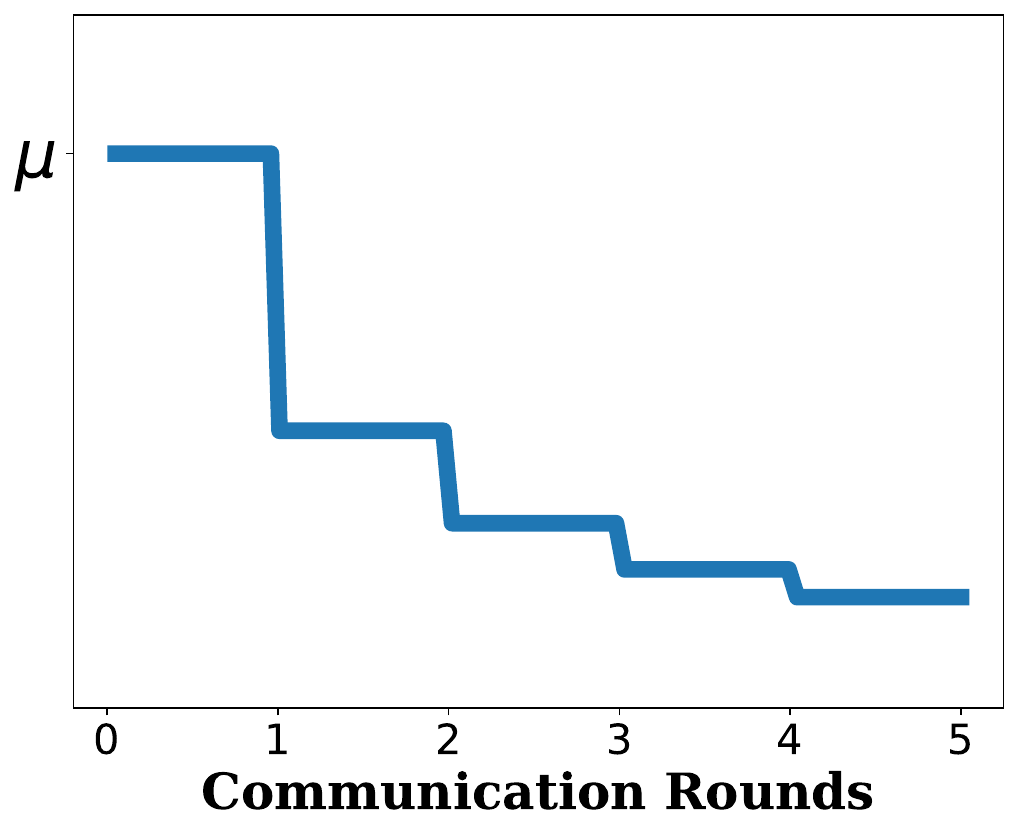}
        \label{fig:image3}
    \!\!\!\!
        \includegraphics[width=0.24\textwidth]{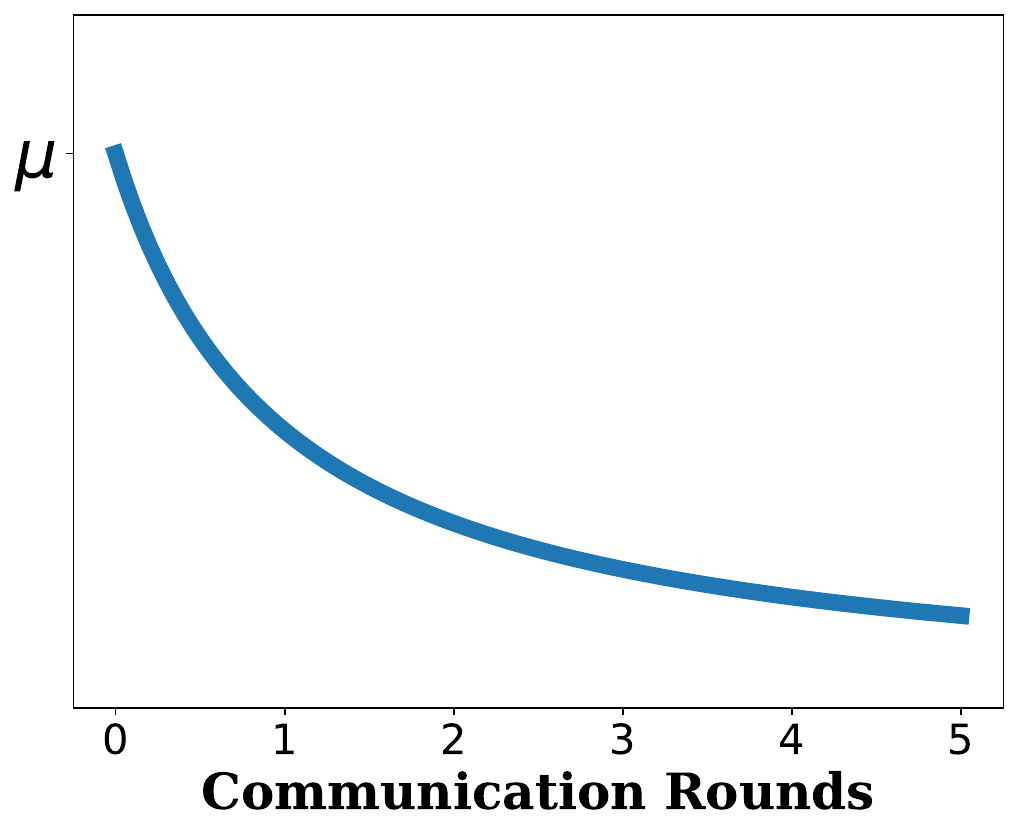}
        \label{fig:image4}
    \caption{Four general setups of learning rate adopted in the federated learning community. From left to right, they are: \textit{Constant learning rates}, \textit{Cyclically decaying learning rates}, \textit{Stage-wise decaying learning rate}, and \textit{Continuously decaying learning rate}.}
    \label{ap:learning_rate}
\end{figure}

\paragraph{Constant learning rates} This is currently the simplest case. We consider the learning rate to always be a constant, i.e. $\eta_{k,t} = \mu$. Then we have that the accumulation term $\sum_{k=0}^{K-1}\eta_{k,t}=\mu K$. For the $\eta(K,t)$ term, we have:
\begin{align*}
    \eta(K,t)
    =\eta_{0,t} + \sum_{k=1}^{K-1}\eta_{k,t}\prod_{j=0}^{k-1}\left(1+\eta_{j,t}L\right) = \mu\sum_{k=0}^{K-1}\left(1+\mu L\right)^k = \frac{1}{L}\left((1+\mu L)^K - 1\right).
\end{align*}
When $K$ is selected, both of them can be considered as a constant related to $K$. The choice of 
$\mu$ also requires careful consideration. Although it is a constant, its selection is typically related to $m$, $K$, and $T$ based on the optimization process. We will discuss this point in the final theorems.

\paragraph{Cyclically decaying learning rates} Some works treat this learning process as an aggregation process of several local training processes, i.e. each local client learns from a better initial state (knowledge learned from other clients). And since the client pool is very large, most clients will exit after obtaining the model they desire. This setting is often used in ``cross-device" scenarios~\citep{kairouz2021advances}. Thus, local learning can be considered as an independent learning process. In this case, the learning rate is designed to decay in an inversely proportional function to achieve optimal local accuracy, i.e. $\eta_{k,t}=\frac{\mu}{k+1}$, and is restored to a larger initial value at the start of each round, i.e. $\eta_{0,t}=\mu$. Then we have the accumulation term:
\begin{equation}
   \ln(K+1) = \int_{k=0}^{K}\frac{1}{k+1}dk \leq \sum_{k=0}^{K-1}\frac{1}{k+1} \leq 1 + \int_{0}^{K-1}\frac{1}{k+1}dk= 1 + \ln(K).
\end{equation}
When $K$ is large, this term is dominated by $\mathcal{O}(\ln(K))$. Based on the fact that $K$ is very large in federated learning, we further approximate this term to $c\ln(K+1)$ where $c$ is a scaled constant. It is easy to check that there must exist $1\leq c<1.543$ for any $K\geq 1$. Thus we have the accumulation term as $\sum_{k=0}^{K-1}\eta_{k,t}=c\mu\ln(K+1)$. For the $\eta(K,t)$ term, we have its upper bound:
\begin{align*}
    \eta(K,t)
    &= \mu + \sum_{k=1}^{K-1}\frac{\mu}{k+1}\prod_{j=0}^{k-1}\left(1+\frac{\mu L}{j+1}\right)
    \leq \mu + \sum_{k=1}^{K-1}\frac{\mu}{k+1}\prod_{j=0}^{k-1}\exp\left(\frac{\mu L}{j+1}\right)\\
    &= \mu + \sum_{k=1}^{K-1}\frac{\mu}{k+1}\left[\exp\left(\sum_{j=0}^{k-1}\frac{1}{j+1}\right)\right]^{\mu L} = \sum_{k=0}^{K-1}\frac{\mu}{k+1}\left[\exp\left(c\ln(k+1)\right)\right]^{\mu L}\\
    &= \mu\sum_{k=0}^{K-1}\left(k+1\right)^{c\mu L-1}
    \leq \mu\int_{k=0}^{K}\left(k+1\right)^{c\mu L-1}dk = \frac{1}{cL}\left((1+K)^{c\mu L} - 1\right).
\end{align*}
The first inequality adopts $1+x\leq e^x$ and the last adopts the concavity. Actually, we still can learn its general lower bound by a scaling constant. By adopting a scaling $b$, we can have $1+x\geq e^{bx}$, which is equal to $b\leq\frac{\ln(x+1)}{x}$. It is also easy to check $0.693<b<1$ when $0< x \leq 1$. Thus we have:
\begin{align*}
    \eta(K,t)
    &= \mu + \sum_{k=1}^{K-1}\frac{\mu}{k+1}\prod_{j=0}^{k-1}\left(1+\frac{\mu L}{j+1}\right)
    \geq \mu + \sum_{k=1}^{K-1}\frac{\mu}{k+1}\prod_{j=0}^{k-1}\exp\left(\frac{\mu bL}{j+1}\right) \\
    &= \mu + \sum_{k=1}^{K-1}\frac{\mu}{k+1}\left[\exp\left(\sum_{j=0}^{k-1}\frac{1}{j+1}\right)\right]^{\mu bL} = \sum_{k=0}^{K-1}\frac{\mu}{k+1}\left[\exp\left(c\ln(k+1)\right)\right]^{\mu bL}\\
    &= \mu\sum_{k=0}^{K-1}\left(k+1\right)^{c\mu bL-1}
    \geq \mu\int_{k=-1}^{K-1}\left(k+1\right)^{c\mu bL-1}dk = \frac{1}{cbL}K^{c\mu bL}.
\end{align*}
The last inequality also adopts concavity. Through this simple scaling, we learn the general bounds for the learning rate function $\eta(K,t)$ as:
\begin{equation}
    \frac{1}{cbL}K^{c\mu bL}\leq \eta(K,t) \leq \frac{1}{cL}\left((1+K)^{c\mu L} - 1\right),
\end{equation}
where $1\leq c<1.543$, $0.693<b<1$ and $\mu\leq\frac{1}{L}$~(this condition is almost universally satisfied in current optimization theories). Although we cannot precisely find the tight bound of this function $\eta(K,t)$, we can still treat it as a form based on constants to complete the subsequent analysis, i.e. it could be approximated as a larger upper bound $\frac{1}{L}\left((1+K)^{c\mu L} - 1\right)$. More importantly, we have determined that this learning rate function still diverges as $K$ increases.

\paragraph{Stage-wise decaying learning rates} This is one of the most common selections of learning rate in the current federated community, which is commonly applied in ``cross-silo" scenarios~\citep{kairouz2021advances}. When the client pool is not very large, clients who participate in the training often aim to establish long-term cooperation to continuously improve their models. Therefore, each client will contribute to the entire training process over a long period. From a learning perspective, local training is more like exploring the path to a local optimum rather than actually achieving the local optimum. Therefore, each local training will adopt a constant learning rate and perform several update steps, i.e. $\eta_{k,t}=\eta_t$. At each communication round, the learning rate decays once and continues to the next stage, i.e. $\eta_t = \frac{\mu}{t+1}$. Based on the analysis of the constant learning rate, the accumulation term is $\sum_{k=0}^{K-1}\eta_{k,t}=\frac{\mu K}{t+1}$. For the $\eta(K,t)$ term, we have:
\begin{align*}
    \eta(K,t)
    &=\frac{\mu}{t+1} + \sum_{k=1}^{K-1}\frac{\mu}{t+1}\prod_{j=0}^{k-1}\left(1+\frac{\mu L}{t+1}\right) \\
    &= \frac{\mu L}{t+1}\sum_{k=0}^{K-1}\left(1+\frac{\mu L}{t+1}\right)^k = \frac{1}{L}\left(\left(1+\frac{\mu L}{t+1}\right)^K - 1\right).
\end{align*}
It can be seen that the analysis of this function is more challenging because the learning rate function $\eta(K,t)$ is decided by $t$, which introduces complexity to the subsequent analysis. We will explain this in detail in the subsequent discussion.

\paragraph{Continuously decaying learning rates} This is a common selection of learning rate in the federated community, involving dual learning rate decay along both local training and global training. This can almost be applied to all methods to adapt to the final training, including both the cross-silo and cross-device cases. At the same time, its analysis is also more challenging because the learning rate is coupled with communication rounds and local iterations, yielding new upper and lower bounds. We consider the general case $\eta_{k,t}=\frac{\mu}{tK+k+1}$. Therefore, the accumulation term can be bounded as:
\begin{align*}
   \sum_{k=0}^{K-1}\frac{1}{tK+k+1} &> \int_{k=0}^{K} \frac{1}{tK+k+1} dk =\ln\left(\frac{tK+K+1}{tK+1}\right),\\
   \sum_{k=0}^{K-1}\frac{1}{tK+k+1} &< \frac{1}{tK+1}+\int_{k=0}^{K-1}\frac{1}{tK+k+1} dk = \frac{1}{tK+1} + \ln\left(\frac{tK+K}{tK+1}\right).
\end{align*}
Similarly, when $K$ is large enough, this term is dominated by $\mathcal{O}\left(\ln\left(\frac{t+1}{t}\right)\right)$. For simplicity in the subsequent proof, we follow the process above and let it be $z\ln\left(\frac{t+2}{t+1}\right)$ to include the term at $t=0$. It is also easy to check that $z > 1$ is a constant for any $K > 1$. And $z$ is also a constant. It means we can always select the lower bound as its representation. Therefore, for the learning rate function $\eta(K,t)$, we have:
\begin{align*}
    \eta(K,t)
    &=\frac{\mu}{tK+1} + \sum_{k=1}^{K-1}\frac{\mu}{tK+k+1}\prod_{j=0}^{k-1}\left(1+\frac{\mu L}{tK+j+1}\right)\\
    &\leq \frac{\mu}{tK+1} + \sum_{k=1}^{K-1}\frac{\mu}{tK+k+1}\left[\exp\left(\sum_{j=0}^{k-1}\frac{1}{tK+j+1}\right)\right]^{\mu L}\\
    &= \frac{\mu}{tK+1} + \sum_{k=1}^{K-1}\frac{\mu}{tK+k+1}\left[\exp\left(z\ln\left(\frac{tK+k+1}{tK+1}\right)\right)\right]^{\mu L}\\
    &= \frac{\mu}{\left(tK+1\right)^{z\mu L}}\sum_{k=0}^{K-1}\left(tK+k+1\right)^{z\mu L-1} \\
    &\leq \frac{\mu}{\left(tK+1\right)^{z\mu L}}\int_{k=0}^{K}\left(tK+k+1\right)^{z\mu L-1}dk = \frac{1}{zL}\left(\left(\frac{tK+K+1}{tK+1}\right)^{z\mu L} - 1\right).
\end{align*}
Similarly, we introduce the coefficient $b$ to provide the lower bound as:
\begin{align*}
    &\quad \ \eta(K,t) \\
    &=\frac{\mu}{tK+1} + \sum_{k=1}^{K-1}\frac{\mu}{tK+k+1}\prod_{j=0}^{k-1}\left(1+\frac{\mu L}{tK+j+1}\right)\\
    &\geq \frac{\mu}{tK+1} + \sum_{k=1}^{K-1}\frac{\mu}{tK+k+1}\left[\exp\left(\sum_{j=0}^{k-1}\frac{1}{tK+j+1}\right)\right]^{\mu bL}\\
    &= \frac{\mu}{tK+1} + \sum_{k=1}^{K-1}\frac{\mu}{tK+k+1}\left[\exp\left(z\ln\left(\frac{tK+k+1}{tK+1}\right)\right)\right]^{\mu bL}\\
    &= \frac{\mu}{\left(tK+1\right)^{z\mu bL}}\sum_{k=0}^{K-1}\left(tK+k+1\right)^{z\mu bL-1} \\
    &\geq \frac{\mu}{\left(tK+1\right)^{z\mu bL}} \int_{k=-1}^{K-1}\left(tK+k+1\right)^{z\mu bL-1}dk
    = \frac{1}{zbL}\left(\left(\frac{tK+K}{tK+1}\right)^{z\mu bL} - \left(\frac{tK}{tK+1}\right)^{z\mu bL}\right)\\
    &> \frac{1}{zbL}\left(\left(\frac{tK+K}{tK+1}\right)^{z\mu bL} - 1\right).
\end{align*}
Through the sample scaling, we learn the general bounds for the learning rate function $\eta(K,t)$ as:
\begin{equation}
    \frac{1}{zbL}\left(\left(\frac{tK+K}{tK+1}\right)^{z\mu bL} - 1\right) < \eta(K,t) \leq \frac{1}{zL}\left(\left(\frac{tK+K+1}{tK+1}\right)^{z\mu L} - 1\right),
\end{equation}
where $1<z$, $0.693<b<1$ and $\mu\leq \frac{1}{L}$. 
Obviously, when $K$ is large enough, the learning rate term is still dominated by $\mathcal{O}\left(\left(\frac{t+2}{t+1}\right)^{z\mu L}-1\right)$. Therefore, to learn the general cases, we can consider the specific form of the learning rate function based on the constant scaling as $\frac{1}{L}\left(\left(\frac{t+2}{t+1}\right)^{z\mu L} - 1\right)$. As $t$ increases, this function will approach zero.\\

\subsection*{on the {\ttfamily Noisy-FedProx} Method:}

\quad 

In this part, we will address the differential privacy analysis of a noisy version of another classical federated learning optimization method, i.e. the {\ttfamily Noisy-FedProx} method. The vanilla {\ttfamily FedProx} method is an optimization algorithm designed for cross-silo federated learning, particularly to address the challenges caused by data heterogeneity across different clients. Unlike traditional federated learning algorithms like {\ttfamily FedAvg}, which can struggle with variations in data distribution, it introduces a proximal term to the objective function. This helps to stabilize the training process and improve convergence. Specifically, it adopts the consistency as the penalized term to correct the local objective:
\begin{equation}
    \min_w f_i(w) + \frac{\alpha}{2}\Vert w - w_t\Vert^2.
\end{equation}
The proximal term is a very common regularization term in federated learning and has been widely used in both federated learning and personalized federated learning approaches. It introduces an additional penalty to the local objective, ensuring that local updates are optimized towards the local optimal solution while being subject to an extra global constraint, i.e. each local update does not stray too far from the initialization point. In fact, there are many optimization methods that apply such regularization terms. For example, various federated primal-dual methods based on the {\ttfamily ADMM} approach construct local Lagrangian functions, and in personalized federated learning, local privatization regularization terms are introduced to differentiate from the vanilla consistency objective. The analysis of the above methods is fundamentally based on a correct understanding of the advantages and significance of the proximal term in stability error. In this paper, to achieve a cross-comparison while maintaining generality, we consider the optimization process of local training as total $K$-step updates:
\begin{equation}
\label{ap:fedprox:local-step}
    \phi(w_t) = w_t - \frac{1}{m}\sum_{i\in\mathcal{I}}\sum_{k=0}^{K-1}\eta_{k,t}\left(\nabla f_i(w_{i,k,t},\varepsilon) + \alpha\left(w_{i,k,t} - w_t\right)\right).
\end{equation}
Here, we also employ the proofs mentioned in the previous section, and our study of the difference term is based on both data sensitivity and model sensitivity perspectives. We provide these two main lemmas as follows.

\quad 

\begin{lemma}[Data Sensitivity.]
    The local data sensitivity of the {\ttfamily Noisy-FedProx} method at $t$-th communication round can be upper bounded as:
    \begin{equation}
        \Vert \phi(w_t) - \phi'(w_t) \Vert \leq \frac{2V}{m\alpha}.
    \end{equation}
\end{lemma}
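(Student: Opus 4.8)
The plan is to mimic the structure of Lemma~\ref{ap:data_sensitivity} for {\ttfamily Noisy-FedAvg}, but to cash in the extra contraction that the proximal term injects into every local step. First I would expand $\phi(w_t)-\phi'(w_t)$ through the local update Eq.(\ref{ap:fedprox:local-step}), using that both processes start from the same $w_t$. Since the adjacent unions $\mathcal{C},\mathcal{C}'$ differ only in the sample held by client $i^\star$, the trajectories of every other client coincide, $w_{i,k,t}=\hat w_{i,k,t}$ for all $i\neq i^\star$ and all $k$, so those contributions cancel in the average. Because $\phi$ is just the average of the local endpoints, this leaves $\phi(w_t)-\phi'(w_t)=\tfrac1m(w_{i^\star,K,t}-\hat w_{i^\star,K,t})$, and the task reduces to bounding the single-client drift $d_k:=w_{i^\star,k,t}-\hat w_{i^\star,k,t}$, which starts at $d_0=0$.

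Next I would subtract the two proximal updates on client $i^\star$. Rewriting the step as $w_{i^\star,k+1,t}=(1-\eta_{k,t}\alpha)w_{i^\star,k,t}+\eta_{k,t}\alpha w_t-\eta_{k,t}g_{i^\star,k,t}$ and likewise for $\hat w$, the shared anchor $w_t$ drops out and I obtain the linear recursion $d_{k+1}=(1-\eta_{k,t}\alpha)d_k-\eta_{k,t}(g_{i^\star,k,t}-\hat g_{i^\star,k,t})$, where $g_{i^\star,k,t},\hat g_{i^\star,k,t}$ are the clipped gradients on the two data sources. The only inequality needed for the inhomogeneous term is $\Vert g_{i^\star,k,t}-\hat g_{i^\star,k,t}\Vert\le 2V$, which holds because every clipped gradient has norm at most $V$ — no smoothness of $f_{i^\star}$ is required here, exactly as in the {\ttfamily FedAvg} argument. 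For step sizes in the admissible range (so that the contraction factor $1-\eta_{k,t}\alpha$ lies in $[0,1]$), taking norms gives $\Vert d_{k+1}\Vert\le(1-\eta_{k,t}\alpha)\Vert d_k\Vert+2V\eta_{k,t}$.

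Finally I would unroll this recursion. Introducing the partial sum $S_k:=\sum_{j=0}^{k-1}\eta_{j,t}\prod_{l=j+1}^{k-1}(1-\eta_{l,t}\alpha)$, so that $\Vert d_k\Vert\le 2V S_k$, one checks $S_0=0$ and $S_{k+1}=(1-\eta_{k,t}\alpha)S_k+\eta_{k,t}$, whence a one-line induction — using the identity $(1-\eta_{k,t}\alpha)\tfrac1\alpha+\eta_{k,t}=\tfrac1\alpha$ — yields $S_k\le\tfrac1\alpha$ for every $k$. Therefore $\Vert d_K\Vert\le 2V/\alpha$, independently of $K$ and of the learning-rate schedule, and $\Vert\phi(w_t)-\phi'(w_t)\Vert\le\tfrac{2V}{m\alpha}$, which is the claim. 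The only place where care is needed is to perform the cancellation of the $i\neq i^\star$ clients and the reduction to $w_{i^\star,K,t}-\hat w_{i^\star,K,t}$ \emph{before} estimating, so that no spurious $K$-fold sum of gradient differences (as in {\ttfamily FedAvg}) survives, and to keep each factor $1-\eta_{k,t}\alpha$ in $[0,1]$ so that $S_k$ stays bounded by $1/\alpha$; beyond that the computation is routine.
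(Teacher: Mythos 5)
Your proposal is correct and follows essentially the same route as the paper: reduce to the single differing client $i^\star$, exploit the contraction factor $1-\eta_{k,t}\alpha$ produced by the proximal term, bound the clipped-gradient difference by $2V$, and unroll the recursion to obtain the $K$-independent constant $1/\alpha$. Your one-line induction $S_{k+1}=(1-\eta_{k,t}\alpha)S_k+\eta_{k,t}\le\tfrac1\alpha$ is in fact slightly cleaner than the paper's treatment, which computes the geometric sum explicitly only for a constant learning rate and asserts the general case without proof.
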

\begin{proof}
    We first consider a single step in Eq.(\ref{ap:fedprox:local-step}) as:
    \begin{align*}
        w_{i,k+1,t} = w_{i,k,t} - \eta_{k,t}\left(\nabla f_i(w_{i,k,t},\varepsilon) + \alpha(w_{i,k,t}-w_t)\right).
    \end{align*}
    The proximal term brings more opportunities to enhance the analysis of local updates. We can split the proximal term and subtract the $w_t$ term on both sides, resulting in a recursive formula for the cumulative update term:
    \begin{align*}
        w_{i,k+1,t} - w_t = \left(1-\eta_{k,t}\alpha\right)\left(w_{i,k,t} - w_t\right) - \eta_{k,t}\nabla f_i(w_{i,k,t},\varepsilon).
    \end{align*}
    The above equation indicates that a reduction factor $1-\eta_{k,t}\alpha<1$ can limit the scale of local updates. This is a very good property, allowing us to shift the analysis of the data sensitivity to their relationship of local updates.
    According to the above, we can upper bound the gaps between $\left\{w_{i,k,t}\right\}$ and $\left\{\hat{w}_{i,k,t}\right\}$ sequences as:
    \begin{align*}
        &\quad \ \Vert (w_{i,k+1,t} - w_t) - (\hat{w}_{i,k+1,t} - w_t) \Vert\\
        &= \Vert \left(1-\eta_{k,t}\alpha\right)\left[\left(w_{i,k,t} - w_t\right)-\left(\hat{w}_{i,k,t} - w_t\right)\right] - \eta_{k,t}(\nabla f_i(w_{i,k,t},\varepsilon) - \nabla f_i(\hat{w}_{i,k,t},\varepsilon')) \Vert\\
        &\leq \left(1-\eta_{k,t}\alpha\right)\Vert\left(w_{i,k,t} - w_t\right)-\left(\hat{w}_{i,k,t} - w_t\right)\Vert + \eta_{k,t}\Vert\nabla f_i(w_{i,k,t},\varepsilon) - \nabla f_i(\hat{w}_{i,k,t},\varepsilon') \Vert\\
        &\leq \left(1-\eta_{k,t}\alpha\right)\Vert\left(w_{i,k,t} - w_t\right)-\left(\hat{w}_{i,k,t} - w_t\right)\Vert + 2\eta_{k,t}V.
    \end{align*}
    Different from proofs in Lemma~\ref{ap:data_sensitivity}, the term $1-\eta_{k,t}\alpha$ can further decrease the stability gap during accumulation. By summing form $k=0$ to $K-1$, we can obtain:
    \begin{align*}
        &\quad \ \Vert (w_{i,K,t} - w_t) - (\hat{w}_{i,K,t} - w_t) \Vert\\
        &\leq \prod_{k=0}^{K-1}\left(1-\eta_{k,t}\alpha\right)\Vert (w_{i,0,t} - w_t) - (\hat{w}_{i,0,t} - w_t) \Vert + \sum_{k=0}^{K-1}\left(\prod_{j=k+1}^{K-1}\left(1-\eta_{j,t}\alpha\right)\right)2\eta_{k,t}V \\
        &= 2V\sum_{k=0}^{K-1}\left(\prod_{j=k+1}^{K-1}\left(1-\eta_{j,t}\alpha\right)\right)\eta_{k,t}.
    \end{align*}
    Here, we provide a simple proof using a constant learning rate to demonstrate that its upper bound can be independent of $K$. By considering $\eta_{k,t}=\mu$, we have:
    \begin{align*}
        \sum_{k=0}^{K-1}\left(\prod_{j=k+1}^{K-1}\left(1-\eta_{j,t}\alpha\right)\right)\eta_{k,t}
        = \sum_{k=0}^{K-1}\left(\prod_{j=k+1}^{K-1}\left(1-\mu\alpha\right)\right)\mu = \frac{1-(1-\mu\alpha)^K}{\alpha} < \frac{1}{\alpha}.
    \end{align*}
    In fact, when the learning rate decays with $k$, it can still be easily proven to have a constant upper bound. Therefore, in the subsequent proofs, we directly use the form of this constant upper bound as the result of data sensitivity in the {\ttfamily Noisy-FedProx} method. Based on the definition of $\phi(w)$, we have:
    \begin{align*}
        \Vert \phi(w_t) - \phi'(w_t) \Vert 
        &= \Vert \left(\phi(w_t) - w_t\right) - \left(\phi'(w_t) - w_t\right) \Vert = \Vert \frac{1}{m}\sum_{i\in\mathcal{I}}\left[\left(w_{i,K,t} - w_t\right) - \left(\hat{w}_{i,K,t} - w_t\right)\right] \Vert\\
        &= \frac{1}{m}\Vert \left(w_{i^\star,K,t} - w_t\right) - \left(\hat{w}_{i^\star,K,t} - w_t\right) \Vert < \frac{2V}{m\alpha}.
    \end{align*}
    This completes the proofs.
\end{proof}

\quad

\begin{lemma}[Model Sensitivity.]
    The local model sensitivity of the {\ttfamily Noisy-FedProx} method at $t$-th communication round can be upper bounded as:
    \begin{equation}
        \Vert \phi'(w_t) - \phi'(\widetilde{w}_t) \Vert \leq \frac{\alpha}{\alpha_L}\Vert w_t - \widetilde{w}_t \Vert.
    \end{equation}
\end{lemma}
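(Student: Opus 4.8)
The plan is to imitate the data-sensitivity argument just given: fix a client $i$, follow the two local trajectories of Eq.(\ref{ap:fedprox:local-step}) launched from $w_t$ and from $\widetilde{w}_t$ on the common dataset $\varepsilon'$, derive an affine one-step recursion for their gap, solve it, and then average over clients. Write $\hat{w}_{i,k,t}$ for the iterate started at $\hat{w}_{i,0,t}=w_t$ with proximal anchor $w_t$, and $\widetilde{w}_{i,k,t}$ for the iterate started at $\widetilde{w}_{i,0,t}=\widetilde{w}_t$ with proximal anchor $\widetilde{w}_t$. Subtracting the two update rules, the gradient terms contribute $-\eta_{k,t}\big(\nabla f_i(\hat{w}_{i,k,t},\varepsilon')-\nabla f_i(\widetilde{w}_{i,k,t},\varepsilon')\big)$ while the proximal terms contribute $-\eta_{k,t}\alpha\big[(\hat{w}_{i,k,t}-w_t)-(\widetilde{w}_{i,k,t}-\widetilde{w}_t)\big]=-\eta_{k,t}\alpha(\hat{w}_{i,k,t}-\widetilde{w}_{i,k,t})+\eta_{k,t}\alpha(w_t-\widetilde{w}_t)$. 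Taking norms, bounding the gradient difference by $L$-smoothness, and using $\eta_{k,t}\le\eta<\tfrac{1}{\alpha-L}$ with $\alpha>L$, I obtain exactly the scaled recursion previewed in Remark~\ref{re:reg}:
\[
\Vert \hat{w}_{i,k+1,t}-\widetilde{w}_{i,k+1,t}\Vert \;\le\; \big(1-\eta_{k,t}(\alpha-L)\big)\,\Vert \hat{w}_{i,k,t}-\widetilde{w}_{i,k,t}\Vert \;+\; \eta_{k,t}\alpha\,\Vert w_t-\widetilde{w}_t\Vert,
\]
with per-step factor lying in $(0,1)$.

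Next I would solve this recursion, starting from $\Vert\hat{w}_{i,0,t}-\widetilde{w}_{i,0,t}\Vert=\Vert w_t-\widetilde{w}_t\Vert$, by a fixed-point ansatz: I claim $\Vert\hat{w}_{i,k,t}-\widetilde{w}_{i,k,t}\Vert\le\frac{\alpha}{\alpha-L}\Vert w_t-\widetilde{w}_t\Vert$ for all $k$, by induction on $k$. The base case holds because $\frac{\alpha}{\alpha-L}>1$; in the inductive step, substituting the bound into the recursion makes the coefficients collapse exactly, since $\frac{\alpha}{\alpha-L}\Vert w_t-\widetilde{w}_t\Vert$ is the fixed point of the affine map $a\mapsto(1-\eta_{k,t}(\alpha-L))a+\eta_{k,t}\alpha\Vert w_t-\widetilde{w}_t\Vert$ for every admissible step size (equivalently, one can unroll and telescope using $\eta_{k,t}\alpha=\tfrac{\alpha}{\alpha-L}\eta_{k,t}(\alpha-L)$). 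Setting $k=K$ gives the per-client bound $\Vert\hat{w}_{i,K,t}-\widetilde{w}_{i,K,t}\Vert\le\frac{\alpha}{\alpha-L}\Vert w_t-\widetilde{w}_t\Vert$, valid for any non-increasing schedule obeying the constraint.

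Finally, since the \textit{G-update} is plain averaging we have $\phi'(w_t)=\frac{1}{m}\sum_{i\in\mathcal{I}}\hat{w}_{i,K,t}$ and $\phi'(\widetilde{w}_t)=\frac{1}{m}\sum_{i\in\mathcal{I}}\widetilde{w}_{i,K,t}$, so one more triangle inequality over the clients yields $\Vert\phi'(w_t)-\phi'(\widetilde{w}_t)\Vert\le\frac{\alpha}{\alpha-L}\Vert w_t-\widetilde{w}_t\Vert$, i.e. $\rho_t=\frac{\alpha}{\alpha-L}$ as claimed in Table~\ref{res_sensitivity}. The point needing the most care — the ``main obstacle'', such as it is — is the bookkeeping around the \emph{two distinct proximal anchors}: because the $\hat{w}$-trajectory is pulled toward $w_t$ while the $\widetilde{w}$-trajectory is pulled toward $\widetilde{w}_t$, the proximal terms do not cancel on subtraction but leave the inhomogeneous term $\eta_{k,t}\alpha\Vert w_t-\widetilde{w}_t\Vert$; this term is exactly what turns the would-be contraction into the stable-but-expanding constant $\frac{\alpha}{\alpha-L}>1$, while everything else is the same affine-recursion argument already used for data sensitivity.
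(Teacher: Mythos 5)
Your proof is correct and follows the same decomposition as the paper: both derive the one-step affine recursion $\Vert \hat{w}_{i,k+1,t}-\widetilde{w}_{i,k+1,t}\Vert \le (1-\eta_{k,t}(\alpha-L))\Vert \hat{w}_{i,k,t}-\widetilde{w}_{i,k,t}\Vert + \eta_{k,t}\alpha\Vert w_t-\widetilde{w}_t\Vert$ from the two proximal anchors, and both finish by averaging over clients. Where you diverge is in how the recursion is solved. The paper unrolls it into a product-plus-sum expression, then evaluates the resulting coefficient explicitly only for a constant learning rate $\eta_{k,t}=\mu$, obtaining $\frac{\alpha}{\alpha-L}-\frac{L(1-\mu(\alpha-L))^K}{\alpha-L}<\frac{\alpha}{\alpha-L}$, and asserts without computation that decaying schedules behave similarly. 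Your fixed-point induction — observing that $\frac{\alpha}{\alpha-L}\Vert w_t-\widetilde{w}_t\Vert$ is the fixed point of every admissible affine map in the family, and that each map is monotone because $1-\eta_{k,t}(\alpha-L)\in(0,1)$ under the stated condition $\eta<\frac{1}{\alpha-L}$ — proves the bound uniformly in $k$ for \emph{any} step-size schedule satisfying the constraint, in one stroke. This is cleaner and strictly more general than the paper's treatment, and it actually closes the small gap the paper leaves open for non-constant schedules; the only thing it gives up is the slightly sharper $K$-dependent constant $\frac{\alpha}{\alpha-L}-\frac{L(1-\mu(\alpha-L))^K}{\alpha-L}$ that the explicit computation yields, which the paper discards anyway.
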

\begin{proof}
    We also adopt the splitting above. Since both sequences are trained on the same dataset, the gradient difference can be measured by the parameter difference. Therefore, we directly consider the form of the parameter difference:
    \begin{align*}
        &\quad \ \Vert \hat{w}_{i,k+1,t} - \widetilde{w}_{i,k+1,t}\Vert\\
        &= \Vert (1-\eta_{k,t}\alpha)(\hat{w}_{i,k,t} - \widetilde{w}_{i,k,t}) - \eta_{k,t}(\nabla f_i(\hat{w}_{i,k,t},\varepsilon') - \nabla f_i(\widetilde{w}_{i,k,t},\varepsilon')) - \eta_{k,t}\alpha(w_t - \widetilde{w}_t) \Vert \\ 
        &\leq (1-\eta_{k,t}\alpha)\Vert\hat{w}_{i,k,t} - \widetilde{w}_{i,k,t}\Vert + \eta_{k,t}L\Vert\hat{w}_{i,k,t} - \widetilde{w}_{i,k,t}\Vert + \eta_{k,t}\alpha\Vert w_t - \widetilde{w}_t \Vert\\
        &= (1-\eta_{k,t}\alpha_L)\Vert\hat{w}_{i,k,t} - \widetilde{w}_{i,k,t}\Vert + \eta_{k,t}\alpha\Vert w_t - \widetilde{w}_t \Vert,
    \end{align*}
    where $\alpha_L=\alpha - L$ is a constant. Here, we consider $\alpha > L$. When $\alpha \leq L$, its upper bound can not be guaranteed to be reduced. When $\alpha > L$, it can restore the property of decayed stability. By summing from $k=0$ to $K-1$, we can obtain:
    \begin{align*}
        &\quad \ \Vert \hat{w}_{i,K,t} - \widetilde{w}_{i,K,t}\Vert\\
        &\leq \prod_{k=0}^{K-1}(1-\eta_{k,t}\alpha_L)\Vert\hat{w}_{i,0,t} - \widetilde{w}_{i,0,t}\Vert + \sum_{k=0}^{K-1}\left(\prod_{j=k+1}^{K-1}(1-\eta_{k,t}\alpha_L)\right)\eta_{k,t}\alpha\Vert w_t - \widetilde{w}_t \Vert\\
        &= \left[\prod_{k=0}^{K-1}(1-\eta_{k,t}\alpha_L) + \sum_{k=0}^{K-1}\left(\prod_{j=k+1}^{K-1}(1-\eta_{k,t}\alpha_L)\right)\eta_{k,t}\alpha\right]\Vert w_t - \widetilde{w}_t \Vert.
    \end{align*}
    Similarly, we learn the upper bound from a simple constant learning rate. By select $\eta_{k,t}=\mu$, we have:
    \begin{align*}
        &\quad \ \prod_{k=0}^{K-1}(1-\eta_{k,t}\alpha_L) + \sum_{k=0}^{K-1}\left(\prod_{j=k+1}^{K-1}(1-\eta_{k,t}\alpha_L)\right)\eta_{k,t}\alpha\\
        &= \prod_{k=0}^{K-1}(1-\mu\alpha_L) + \sum_{k=0}^{K-1}\left(\prod_{j=k+1}^{K-1}(1-\mu\alpha_L)\right)\mu\alpha\\
        &= (1-\mu\alpha_L)^K + \alpha\frac{1-(1-\mu\alpha_L)^K}{\alpha_L}\\
        &= \frac{\alpha}{\alpha_L} - \frac{L(1-\mu\alpha_L)^K}{\alpha_L} < \frac{\alpha}{\alpha_L}.
    \end{align*}
    The same, it can also be checked that the general upper bound of the stability gaps is a constant even if the learning rate is selected to be decayed along iteration $k$. Therefore, in the subsequent proofs, we directly use the form of this constant upper bound as the result of model sensitivity in the {\ttfamily Noisy-FedProx} method. Based on the definition of $\phi(w)$, we have:
    \begin{align*}
        \Vert \phi'(w_t) - \phi'(\widetilde{w}_t) \Vert 
        &= \Vert \frac{1}{m}\sum_{i\in\mathcal{I}}\left(\hat{w}_{i,K,t} - \widetilde{w}_{i,K,t}\right)\Vert \leq \frac{1}{m}\sum_{i\in\mathcal{I}}\Vert \hat{w}_{i,K,t} - \widetilde{w}_{i,K,t}\Vert \leq \frac{\alpha}{\alpha_L}\Vert w_t - \widetilde{w}_t \Vert.
    \end{align*}
    This completes the proofs.
\end{proof}

\subsection{Solution of Eq.~(\ref{H_0})}
\label{a:solution of eq15}

According to the recurrence relation in Lemma~\ref{ap:dp-iterative}, we can confine the privacy amplification process to a finite number of steps with the aid of an interpolation sequence, yielding to the convergent bound. Therefore, we have:
\begin{align*}
    &\quad \ T\left(w_T; w_T'\right) = T\left(\widetilde{w}_T; w_T'\right)\\
    &\geq T\left(\widetilde{w}_{T-1}; w_{T-1}'\right)\otimes T_G\left(\frac{\sqrt{m}}{\sigma}\lambda_{T}\Vert\phi(w_{T-1}) - \phi'(\widetilde{w}_{T-1})\Vert\right)\\
    &\geq T\left(\widetilde{w}_{t_0}; w_{t_0}'\right)\otimes \cdots \otimes T_G\left(\frac{\sqrt{m}}{\sigma}\lambda_{T}\Vert\phi(w_{T-1}) - \phi'(\widetilde{w}_{T-1})\Vert\right)\\
    &= T\left(w_{t_0}'; w_{t_0}'\right)\otimes T_G\left(\frac{\sqrt{m}}{\sigma}\sqrt{\sum_{t=t_0}^{T-1}\lambda_{t+1}^2\Vert\phi(w_{t}) - \phi'(\widetilde{w}_{t})\Vert^2}\right)\\
    &\geq T_G\left(\frac{\sqrt{m}}{\sigma}\sqrt{\sum_{t=t_0}^{T-1}\lambda_{t+1}^2\left(\rho_t\Vert w_t - \widetilde{w}_t\Vert + \gamma_t\right)^2}\right).
\end{align*}
Although the above form appears promising, an inappropriate selection of the key parameters will still cause divergence due to the recurrence term coefficient $1+\eta(K,t)L > 1$, leading it to approach infinity as $t$ increases. For instance, small $t_0$ will result in a significantly increased $\lambda$ and the bound will be closed to the stability gap $\Vert w_T - w_{T}' \Vert$, and large $t_0$ will result in a long accumulation of the stability gaps, which is also unsatisfied. At the same time, it is also crucial to choose appropriate $\lambda$ to ensure that the stability accumulation can be reasonably diluted. Therefore, we also need to thoroughly investigate how significant the stability gap caused by the interpolation points is. According to Eq.(\ref{ap:update}) and (\ref{ap:interpolation}), we have:
\begin{align*}
    \Vert w_{t+1} - \widetilde{w}_{t+1} \Vert\leq (1-\lambda_{t+1})\left(\rho_t\Vert w_t - \widetilde{w}_t\Vert + \gamma_t\right).
\end{align*}
The above relationship further constrains the stability of the interpolation sequence. It is worth noting that the upper bound of the final step is independent of the choice of $\lambda$. At the same time, since all terms are positive, given a group of specific $\lambda$, taking the upper bound at each possible $t$ will result in the maximum error accumulation. This is also the worst-case privacy we have constructed. Therefore, solving the worst privacy could be considered as solving the following problem:
\begin{equation}
\label{ap:worst_privacy1}
\begin{split}
    \underbrace{\min_{\left\{\lambda_{t+1}\right\}, t_0}\underbrace{\max_{\left\{\Vert w_{t} - \widetilde{w}_{t}\Vert\right\}}\sum_{t=t_0}^{T-1}\lambda_{t+1}^2\left(\rho_t\Vert w_t - \widetilde{w}_t\Vert + \gamma_t\right)^2,}_{\text{worst privacy}}}_{\text{tight privacy lower bound}}\\
    \text{s.t.} \ \Vert w_{t+1} - \widetilde{w}_{t+1} \Vert \leq (1-\lambda_{t+1})\left(\rho_t\Vert w_t - \widetilde{w}_t\Vert + \gamma_t\right).
\end{split}
\end{equation}
Based on the above analysis, this problem can be directly transformed into a privacy minimization problem when the interpolation sequence reaches the maximum stability error. Therefore, we just need to solve the following problem:
\begin{equation}
\label{ap:minimization}
    \begin{split}
    \min_{\left\{\lambda_{t+1}\right\}, t_0}\sum_{t=t_0}^{T-1}\lambda_{t+1}^2\left(\rho_t\Vert w_t - \widetilde{w}_t\Vert + \gamma_t\right)^2,\\
    \text{s.t.} \ \Vert w_{t+1} - \widetilde{w}_{t+1} \Vert = (1-\lambda_{t+1})\left(\rho_t\Vert w_t - \widetilde{w}_t\Vert + \gamma_t\right).
\end{split}
\end{equation}
It is important to note that this upper bound condition is usually loose because the probability that the interpolation terms simultaneously reach their maximum deviation is very low. This is merely the theoretical worst-case privacy scenario.

Then we solve the minimization problem. By considering the worst stability conditions, we can provide the relationship between the gaps and coefficients $\lambda_{t+1}$ as:
\begin{align*}
    \Vert w_{t+1} - \widetilde{w}_{t+1} \Vert = \rho_t\Vert w_{t} - \widetilde{w}_{t} \Vert + \gamma_t - \lambda_{t+1}\left(\rho_t\Vert w_{t} - \widetilde{w}_{t} \Vert + \gamma_t\right).
\end{align*}
Expanding it from $t=t_0$ to $T$, we have:
\begin{align*}
    0 = \Vert w_{T} - \widetilde{w}_{T} \Vert = \left(\prod_{t=t_0}^{T-1}\rho_t\right)\Vert w_{t_0} - \widetilde{w}_{t_0} \Vert + \sum_{t=t_0}^{T-1}\left(\prod_{j=t+1}^{T-1}\rho_j\right)\left[\gamma_t - \lambda_{t+1}\left(\rho_t\Vert w_{t} - \widetilde{w}_{t} \Vert + \gamma_t\right)\right].
\end{align*}
Due to the term $\lambda_{t+1}\left(\rho_t\Vert w_{t} - \widetilde{w}_{t} \Vert + \gamma_t\right)$ being part of the analytical form of the minimization objective, we preserve the integrity of this algebraic form and only split it from the perspectives of coefficients $\lambda_t$, $\rho_t$ and $\gamma_t$. According to the definition $\widetilde{w}_{t_0}=w_{t_0}'$, then we have:
\begin{equation}
    \sum_{t=t_0}^{T-1}\left(\prod_{j=t+1}^{T-1}\rho_j\right)\lambda_{t+1}\left(\rho_t\Vert w_{t} - \widetilde{w}_{t} \Vert + \gamma_t\right) = \left(\prod_{t=t_0}^{T-1}\rho_t\right)\Vert w_{t_0} - w_{t_0}' \Vert + \sum_{t=t_0}^{T-1}\left(\prod_{j=t+1}^{T-1}\rho_j\right)\gamma_t.
\end{equation}
The above equation presents the summation of the term $\lambda_{t+1}\left(\rho_t\Vert w_{t} - \widetilde{w}_{t} \Vert + \gamma_t\right)$ accompanied by a scaling coefficient $\left(\prod_{j=t+1}^{T-1}\rho_j\right)>1$. It naturally transforms the summation form into an initial stability gap and a constant term achieved through a combination of learning rates. To solve it, we can directly adopt the Cauchy-Schwarz inequality to separate the terms and construct a constant term based on the form of the scaling coefficient to find its achievable lower bound:
\begin{align*}
    &\quad \ \sum_{t=t_0}^{T-1}\lambda_{t+1}^2\left(\rho_t\Vert w_t - \widetilde{w}_t\Vert + \gamma_t\right)^2 \\
    &\geq \left(\sum_{t=t_0}^{T-1}\left(\prod_{j=t+1}^{T-1}\rho_j\right)\lambda_{t+1}\left(\rho_t\Vert w_{t} - \widetilde{w}_{t} \Vert + \gamma_t\right)\right)^2\left(\sum_{t=t_0}^{T-1}\left(\prod_{j=t+1}^{T-1}\rho_j\right)^2\right)^{-1}\\
    &= \left(\left(\prod_{t=t_0}^{T-1}\rho_t\right)\Vert w_{t_0} - w_{t_0}' \Vert + \sum_{t=t_0}^{T-1}\left(\prod_{j=t+1}^{T-1}\rho_j\right)\gamma_t\right)^2\left(\sum_{t=t_0}^{T-1}\left(\prod_{j=t+1}^{T-1}\rho_j\right)^2\right)^{-1}.
\end{align*}
Although the original problem requires solving the $\lambda_{t+1}$, here we can know one possible minimum form of the problem no longer includes this parameter. In fact, this parameter has been transformed into the optimality condition of the Cauchy-Schwarz inequality.

Therefore, we only need to optimize it w.r.t the parameter $t_0$. Unfortunately, this part highly correlates with the stability gaps $\Vert w_{t_0} - w_{t_0}' \Vert$. Current research progress indicates that in non-convex optimization, this term diverges as the number of training rounds $t$ increases. This makes it difficult for us to accurately quantify its specific impact on the privacy bound. If $t_0$ is very small, it means that the introduced stability gap will also be very small. However, consequently, the coefficients of the $\rho_t$ and $\gamma_t$ terms will increase due to the accumulation over $T-t_0$ rounds. To detail this, we have to make certain compromises. Because $t_0$ is an integer belonging to $[0,T-1]$, we denote its optimal selection by $t^\star$~(it certainly exists when $T$ is given). Therefore, the privacy lower bound under other choices of $t_0$ will certainly be more relaxed, i.e. $\text{Privacy}_{t_0}\leq \text{Privacy}_{t^\star}$~(privacy is weak at other selection of $t_0$). This allows us to look for other asymptotic solutions instead of finding the optimal solution. Although we cannot ultimately achieve the form of the optimal solution, we can still provide a stable privacy lower bound. To eliminate the impact of stability error, we directly choose $t_0=0$, yielding the following bound:
\begin{align*}
    \mathcal{H}_\star\leq\mathcal{H}_0&=\left(\left(\prod_{t=t_0}^{T-1}\rho_t\right)\Vert w_{t_0} - w_{t_0}' \Vert + \sum_{t=t_0}^{T-1}\left(\prod_{j=t+1}^{T-1}\rho_j\right)\gamma_t\right)^2\left(\sum_{t=t_0}^{T-1}\left(\prod_{j=t+1}^{T-1}\rho_j\right)^2\right)^{-1}\Big\vert_{\ t_0=0}\\
    &= \left(\sum_{t=0}^{T-1}\left(\prod_{j=t+1}^{T-1}\rho_j\right)\gamma_t\right)^2\left(\sum_{t=0}^{T-1}\left(\prod_{j=t+1}^{T-1}\rho_j\right)^2\right)^{-1}.
\end{align*}

By substituting the values of $\rho_t$ and $\gamma_t$ under different cases, then we can prove the main theorems in this paper.

\end{document}